\documentclass[10pt]{article}
\usepackage{geometry}                
\geometry{letterpaper}                   
\usepackage{graphicx}
\graphicspath{{../figs/}}
\usepackage{amssymb}
\usepackage{epstopdf}
\usepackage{subfigure}
\usepackage{tikz}
\usepackage{multirow}
\usepackage{rotating}
\usepackage{array}
\usepackage{pdflscape}
\usepackage{url}
\usepackage{amsmath}
\usepackage{amsthm}
\usepackage{fullpage}
\usepackage[titletoc, title]{appendix}
\newtheorem{remark}{Remark}
\newtheorem{assumption}{Assumption}
\newtheorem{proposition}{Proposition}

\usetikzlibrary{shapes,arrows}
\DeclareGraphicsRule{.tif}{png}{.png}{`convert #1 `dirname #1`/`basename #1 .tif`.png}
\newcommand{\etal}{{\em et al. }}
\DeclareMathOperator*{\argmax}{arg\,max}

\begin{document}
\title{Approximate False Positive Rate Control in Selection Frequency for Random Forest}
\author{Ender Konukoglu\\
\small{Athinoula A. Martinos Center for Biomedical Imaging / Harvard Medical School, Boston}\\
\small{enderk@nmr.mgh.harvard.edu}\\
Melanie Ganz\\
\small{Athinoula A. Martinos Center for Biomedical Imaging / Harvard Medical School, Boston}\\
\small{Neurobiology Research Unit, Rigshospitalet, Copenhagen}\\
\small{mganz@nru.dk}}
\date{}

\maketitle
\begin{abstract}
Random Forest has become one of the most popular tools for feature selection. 
Its ability to deal with high-dimensional data makes this algorithm especially useful for studies in neuroimaging and bioinformatics. 
Despite its popularity and wide use, feature selection in Random Forest still lacks a crucial ingredient: false positive rate control. 
To date there is no efficient, principled and computationally light-weight solution to this shortcoming. 
As a result, researchers using Random Forest for feature selection have to resort to using heuristically set thresholds on feature rankings. 
This article builds an approximate probabilistic model for the feature selection process in random forest training,
which allows us to compute an estimated false positive rate for a given threshold on selection frequency. 
Hence, it presents a principled way to determine thresholds for the selection of relevant features without any additional computational load.
Experimental analysis with synthetic data demonstrates that the proposed approach can limit false positive rates on the order of the desired values and keep false negative rates low. 
Results show that this holds even in the presence of a complex correlation structure between features. 
Its good statistical properties and light-weight computational needs make this approach widely applicable to feature selection for a wide-range of applications.
\end{abstract}
\section{Introduction}
Feature selection is a well known problem in data analysis and statistics. With the growth of data in almost all scientific disciplines and wide applications of machine learning in the industry, its importance has increased substantially. There are two main usages of feature selection. The first is to determine the ``optimal'' set of features that is optimal in the sense of being the smallest subset while maintaining high prediction accuracy. This usage is particularly important for applications of machine learning where resources, such as memory or allowed computation time, are constrained. Moreover, in certain scenarios feature selection can also eliminate the "noise" variables and yield higher prediction accuracy. The second usage aims to find the entire set of features that are related to an underlying phenomenon. The literature here often refers to features as {\em covariates} or {\em predictors} and the underlying phenomenon is often represented with labels referred to as {\em response variable} or {\em traits}. This usage is particularly important in neuroimaging and biology research, where sifting through large amount of measurements to determine the ones that are informative regarding the labels, {\em relevant ones}, has become part of the everyday routine. The analysis and the proposed model in this article applies to both of these usages and it is particularly important in the context of the second one.  

Random Forests \cite{Amit1997,Breiman2001}, among other techniques, have shown great potential for feature selection in a wide range of problems in neuroimaging \cite{Menze2009,Genuer2010,Langs2011}, biology \cite{Lunetta2004,Svetnik2004,Jiang2004,Diaz-Uriarte2006,Archer2008,Tang2009,Yang2009} and others \cite{Genuer2011,Gregorutti2013}. Compared to univariate analysis and other multivariate methods random forest has certain advantages. First, as a multivariate method, contrary to univariate methods, it has the potential to detect multivariate interactions if they exist as shown by Lunetta \etal in their simulation studies in \cite{Lunetta2004}. Second, it can naturally deal with high-dimensional problems, i.e. problems with high number of features and low number of samples, where other multivariate methods often need additional regularizations, such as sparsity for logistic regression or SVM, which make problems computationally tractable but not necessarily model the underlying phenomena. Third, it can explicitly work with raw features without requiring any dimensionality reduction methods that transform the features to a lower dimensional space and hence get rid of ``irrelevant'' dimensions but also impair interpretability, such as principal component or independent component analyses. Fourth, without much modification it can be applied to a variety of problems, such as regression, multi-category classification and unsupervised learning or clustering \cite{Konukoglu2013,Criminisi2012}. Lastly, without much added computational burden it offers a variety of measures of feature importance that indicate the relevance of the feature for the underlying phenomenon and task 

Despite the popularity and success of Random Forest for feature selection, feature importance measures still lack a very crucial ingredient: {\em false positive rate control}. Random Forest can produce mainly three different types of importance measures \cite{Breiman2001,Strobl2007}: selection frequency, Gini importance and permutation importance. Researchers in the last decade have proposed some variations of these in \cite{Deng2012,Deng2013,Wang2010,Strobl2008b,Zhou2010,Sandri2006}. All these measures can be used to rank features. These rankings however, only provide relative information as to which feature is more important than others. To actually identify individual features as relevant or irrelevant one needs to determine a threshold on the ranking. This applies to all uses of feature selection with Random Forests, whether the relevant feature assignments will be used directly for interpretation or they will be integrated in backward \cite{Svetnik2004,Diaz-Uriarte2006, Jiang2004,Menze2009,Gregorutti2013} or forward \cite{Genuer2010} feature selection wrappers. 
To date there is no principled way to determine such thresholds. Users of Random Forest have to use arbitrary heuristics knowing that they have substantial impact on the application-specific interpretations made on the relevant features as well as on the efficiency of wrapper methods. One natural and principled way to determine a threshold for a given importance measure would be to quantify the expected number of false positive rates, in other words the expected number of irrelevant features that would get assigned as relevant, for a given threshold value. With such a quantification one would simply limit the false positive rate at a desired level and the corresponding threshold can be computed. Such a quantification would also allow informed interpretations of the relevant features as well as allow wrapper algorithms eliminate irrelevant features faster and build adaptive schemes. However, currently there is no principled, efficient and satisfactory way to quantify false positive rates neither. This drawback is the motivation for the analysis and the model we present here. 

There has been three attempts to solve the problem of determining thresholds and quantifying false positive rates. First, Breiman and Cutler in \cite{Breiman2008} proposed a statistical test for permutation importance, however, Strobl and Zeileis in \cite{Strobl2008} demonstrated that this test has very undesirable properties, such as decreasing power with increasing sample size and boundlessly increasing normalized z-scores with increasing number of trees. Second, Rodenburg \etal in \cite{Rodenburg2008} and Altmann \etal in \cite{Altmann2010} proposed to use conventional permutation testing, where they permute the labels while keeping feature vectors intact, to compute significance for importance measures and determine thresholds. Lastly, Hapfelmeier and Ulm in \cite{Hapfelmeier2013} proposed a variation of permutation testing where they permuted the individual variables across samples while keeping the labels and other features intact. Permutation testing is a sound approach, however, for large scale problems its computational burden can be a limiting factor. Especially, permuting individual variables can quickly become infeasible. This computational drawback might also be the reason why permutation testing has not been used within wrappers so far. 

In this article, we propose a novel formulation for false positive rate control for one of the most basic importance measure in random forests: the feature selection frequency. Our formulation approximately models the feature selection process in the training phase of a random forest under the null hypothesis that there is no relationship between the features and the response variables. As such, it allows us to calculate the probability of an irrelevant feature being selected a given number of times under the null hypothesis. This gives rise to a principled way to determine thresholds and associate false positive rates at no additional computational cost. We derive two models for two most popular training strategies: random feature subset selection happening (i) at every node and (ii) at every tree. In our experiments we focus only on high-dimensional problems since interesting problems in neuroimaging and biology are of such nature, and Random Forest is especially useful and more advantageous than alternatives in such problems. We focus our attention on simulation experiments where ground truth data is available and therefore proper analysis can be performed. We demonstrate our approach on two sets of experiments. The first set assumes {\em independence} between features and the second set assumes a complex correlation structure that is computed from a real dataset of cortical thickness measurements, a type of measurement very widely used in neuroimaging. Our experiments demonstrate that the proposed model does not share the same drawbacks as the statistical test proposed in \cite{Breiman2008}, and it yields similar False Positive Rates and lower False Negative Rates than the permutation testing as proposed in \cite{Altmann2010} at no additional computational cost. The proposed method can be used in any study, retrospective or prospective, and for any type of problem Random Forest can be used in. 

The article structure is as follows. We first present a brief overview of random forests, different feature importance measures and motivate our approach. Then, we present our model and theoretical analysis. Following that we present experimental analysis and conclude with discussions and conclusions.  
\section{Random Forest and Feature Importance Measures}\label{sec:RF}
Random Forest (RF) is a generic technique that can be applied to various pattern analysis tasks including but not limited to classification and regression. It became very popular in the recent years in computer vision, medical image analysis and bioinformatics due to its good generalization properties and efficiency in performing predictions. Beyond its capabilities as a predictor, recently, RF has also become an important tool for feature selection. Here we provide a brief overview of RF with the focus on feature selection. 
\subsection{Notation}
We first introduce the basic notation we adopt throughout the article. We represent labels (response variables) with $y\in\mathbb{R}$ and features (covariates) with $\mathbf{x}=[x_1,...,x_F]^T\in\mathbb{R}^F$, where $F$ is the feature dimension and we denote the set of feature indices with $\mathcal{F}=\{1,...,F\}$. A dataset of samples is denoted by $\mathcal{S}=\{(y_i,\mathbf{x}_i)\}_{i=1,...,M}$, where $M$ is the sample size. 
\subsection{Random Forest}  To be able to describe the proposed model in full detail and provide links between different importance measures we choose to present a brief summary of random forests in this section. We suggest readers who are comfortable in Random Forest training to skip this section and go to Section~\ref{sec:model}. 
A random forest is an ensemble of decision trees~\cite{Amit1997,Breiman2001}, where each tree is a set of hierarchically nested binary tests represented with a directed graph. Figure~\ref{fig:tree}(a) shows an example of a decision tree, where $t_i$ denotes the $i^{th}$ internal node and $l_j$ denotes the $j^{th}$ leaf node. The directed edges connecting nodes represent the nesting structure of the hierarchy. 
\begin{figure}[!htb]
\begin{center}
\subfigure[decision tree]{\tikzset{
	treenode/.style = {align=center, inner sep=0pt, text centered, font=\sffamily},
	arn_n/.style = {treenode, circle, black, font=\sffamily\bfseries, draw=black, 	text width=1.75em, thick},
	arn_x/.style = {treenode, rectangle, draw=black, minimum width=1.5em, minimum height=1.75em, thick}
}
\begin{tikzpicture}[->,>=stealth',thick,level/.style={sibling distance=3.8cm/#1,level distance=1.0cm}]
\node [arn_n]{$t_0$}
	child{node [arn_n]{$t_1$}
		child{node [arn_n]{$t_3$}
			child{node [arn_x]{$l_1$}}
			child{node [arn_x]{$l_2$}}
		}
		child{node [arn_x]{$l_3$}}
	}
	child{node [arn_n]{$t_2$}
		child{node [arn_n]{$t_4$}
			child{node [arn_x]{$l_4$}}
			child{node [arn_x]{$l_5$}}
		}
		child{node [arn_n]{$t_5$}
			child{node [arn_x]{$l_6$}}
			child{node [arn_x]{$l_7$}}
		}
	};

\end{tikzpicture}}
\subfigure[binary stump]{\tikzset{
	treenode/.style = {align=center, inner sep=0pt, text centered, font=\sffamily},
	arn_n/.style = {treenode, circle, black, font=\sffamily\bfseries, draw=black, 	text width=4.0em, very thick},
	arn_x/.style = {treenode, rectangle, draw=black, minimum width=1.5em, minimum height=1.5em}
}
\begin{tikzpicture}[->,>=stealth',very thick,level/.style={sibling distance=4.0cm/#1,level distance=2.0cm}]
\node [arn_n]{$t_0 (f, \tau_f)$}
	child{node [arn_n]{} edge from parent node[above left]{$x_f \geq \tau_f$}}	
	child{node [arn_n]{} edge from parent node[above right]{$x_f < \tau_f$}}
;
\end{tikzpicture}}
\end{center}
\caption{\label{fig:tree}(a) Schematic for (a) a decision tree example and (b) a binary stump.}
\end{figure} 
Each internal node holds a binary test that is applied on the features. In this article we are interested in the simplest type of binary test, the binary stump: 
\begin{equation}
\nonumber t_0 = t_0(f, \tau_f)(\mathbf{x}) \triangleq \left\{\begin{array}{cl}
0 ,x_f \geq \tau_f\\
1 ,x_f < \tau_f
\end{array}\right.,
\end{equation}
where $f$ is an index and $\tau_f\in\mathbb{R}$ is a threshold. While testing each tree produces an independent prediction for a test sample. The sample is pushed through the root node $t_0$ of a tree where the binary test is applied to its features. Based on the result the sample is sent to one of $t_0$'s two children, see Figure~\ref{fig:tree}(b). Repeating this process at every upcoming node the sample traverses the tree and reaches a leaf node $l_j$. At the leaf node a prediction for this sample is made based on the training samples that have traversed the tree and reached the same leaf node. The same process is repeated for each tree in the forest and the predictions are aggregated. It is easy to see that the binary tests are the components that decide on the path a sample will take in a tree and hence play a great role on the prediction for that sample. 

In the training phase each tree is constructed independently to optimize for prediction accuracy. This is done using a training dataset and in a greedy manner. The basic algorithmic blocks in training a tree are the node {\em optimization} and {\em split}. Let us assume that we have already constructed a tree until the $n^{th}$ node and let us denote the training samples that arrive at this node with $\mathcal{S}_n$. The node optimization is defined as  
\begin{eqnarray}
\label{eqn:node_optimization} f^*, \tau_f^* &=& \argmax_{f\in\mathcal{F}_n,\tau_f\in\mathbb{R}} G\left(\mathcal{S}_n,\mathcal{S}_n^R,\mathcal{S}_n^L\right),\\
\nonumber \mathcal{S}_n^R &=& \left\{ (y,\mathbf{x})\in \mathcal{S}_n | x_f < \tau_f \right\}, \\ 
\nonumber \mathcal{S}_n^L &=& \left\{ (y,\mathbf{x})\in \mathcal{S}_n | x_f \geq \tau_f\right\},
\end{eqnarray}
where $G(\cdot,\cdot,\cdot)$ is the objective function, for which the exact definition depends on the task at hand\footnote{$G(\cdot,\cdot,\cdot)$ can be basic such as the Gini score for binary classification or the variance reduction for continuous regression \cite{Breiman2001}. It can also be more complex, such as the one based on $\chi^2$ statistic proposed in \cite{Wang2010}, the one based on the conditional inference framework proposed in \cite{Hothorn2006} or the one based on clustering as in \cite{Konukoglu2013}.}, and $\mathcal{F}_n\subseteq \mathcal{F}$ is the subset of feature indices node $n$ will be optimized over. The node split is defined as setting the binary test at this node to $t_n (f^*,\tau_f^*)$, creating two new {\em child} nodes budding from node $n$ and splitting $\mathcal{S}_n$ among the child nodes based on the test. Not every node is split and this is formulated using a set of constraints on the maximum depth allowed, number of samples remaining in the node $n$ or the optimal value of $G(\cdot,\cdot,\cdot)$. Nodes that are not split further are set as leaf nodes. To construct a tree from scratch, one starts with a single node - the root node - and places all the training samples in it. Then iteratively applies the node optimization and split on the root node and all the other nodes that are created until there are no more nodes to split. 

In a random forest the construction of each tree has a random component that makes individual trees different from each other. This random component can be through bootstrapping (or subsampling) of samples, i.e. training each tree with a bootstrap sample of the entire training dataset, feature subsampling, training each node or tree using only a random subset of the entire feature set, or a combination of these. The randomization reduces the statistical correlation between trees and this in turn has been shown to improve generalization properties \cite{Breiman2001,Criminisi2012}. 
\subsection{Feature Importance Measures}
Random Forest training allows for several natural ways to assign importance measures to individual features. Here we provide a brief overview of the most basic and popular measures and explain the reasons why we choose to focus on the feature selection frequency. We note that to refer to different measures we use the same terminology Strobl \etal used in \cite{Strobl2007}.  
\subsubsection{Feature Selection Frequency}
Feature selection frequency is the number of times a given feature is selected as the optimal based on Equation~\ref{eqn:node_optimization} across all the nodes in the entire forest:
\begin{equation}
\nonumber I_{SF}(f) = \sum_T\sum_n \mathbf{1}(f=f^*),
\end{equation}
where the first sum is over all the trees, the second sum is over all the nodes and $\mathbf{1}(f=f^*) = 1$ if $f^*=f$ at that node and $0$ otherwise. The underlying hypothesis is that relevant features will be statistically related to the label and therefore, will be useful for prediction. So they should yield better $G(\cdot,\cdot,\cdot)$ values than non-relevant features and get selected more often. Based on this hypothesis ranking features based on their selection frequency is possible. However, in order this ranking to be useful one needs a threshold that will be used to split features into relevant and non-relevant groups. Currently there is no principled way to determine such a threshold and researchers use heuristics. This limitation is the focus of this article. 
\subsubsection{Gini Importance}
Gini importance measure, \cite{Breiman2001}, is an extension of the selection frequency, where not only the number of selection but also the optimal $G(\cdot,\cdot,\cdot)$ values are recorded.\footnote{We note that we use the term ``Gini'' rather loosely here. It was first  proposed for classification problems where Gini coefficient simply refers to the well known Gini impurity. However, the same principle for measuring feature relevance can easily be extended to arbitrary $G(\cdot,\cdot,\cdot)$ functions, for instance see \cite{Wang2010}.} The importance of a feature is defined as the following sum: 
\begin{equation}
\nonumber I_{GI}(f) = \sum_T\sum_n \mathbf{1}(f=f^*) \max G(\mathcal{S}_n,\mathcal{S}_n^R,\mathcal{S}_n^L),
\end{equation}
which not only counts but also sums the optimal $G(\cdot,\cdot,\cdot)$ values. The underlying hypothesis is very similar to the selection frequency. Indeed this score can be viewed as a weighted selection frequency. Gini importance can too be used to rank features and since it includes information on the optimal $G(\cdot,\cdot,\cdot)$ associated with the feature, it can theoretically provide a more informed ranking.  However, it shares the same drawback as the selection frequency: lack of a principled way to determine a threshold. Moreover, the problem is more drastic here because in determining such a threshold one has to consider the numerical properties of the function $G(\cdot,\cdot,\cdot)$ beyond the basic assumption that more relevant features will yield better $G$ values. 
\subsubsection{Permutation Importance}
In training if subsampling or bootstrapping of samples is used then during the construction of each tree a subset of samples are left out. These out-of-bag (OOB) samples can be used to compute a prediction error for the entire forest. Based on this, the permutation importance measure, \cite{Breiman2001,Breiman2008}, is the increase in the OOB error-rate when a particular feature is permuted across the samples. The underlying hypothesis is that permuting the feature will break its statistical links with the label and the other features while maintaining its marginal distribution. If the feature is relevant and used for prediction in the forest then permuting its values will adversely affect the prediction quality. The amount of increase in OOB error-rate quantifies this effect and is defined as the importance measure. 
Breiman and Cutler in \cite{Breiman2008} proposed a statistical test for permutation importance, which can be used to determine a threshold for this measure. However, as Strobl and Zeileis in \cite{Strobl2008} showed this statistical test has very undesirable properties. The statistical power of the test decreases with increasing sample size and the normalized z-scores associated with importance measures increase proportional to $\sqrt{\textrm{number of trees}}$. Due to these drawbacks this test has not become popular in the literature. In summary, permutation importance too lacks a useful principled way to determine a threshold. And as was the case with Gini importance, the complexity of this measure makes statistical modeling of this measure quite difficult. 

There are two important remarks that we should make here. 
\begin{remark}
Although their definitions are quite different, empirical studies presented in \cite{Strobl2007,Archer2008,Zhou2010} have shown that the three measures described above yield very similar rankings to features. 
\end{remark}
\begin{remark}
Strobl \etal in \cite{Strobl2007} shows that all these measures in their most basic forms have bias towards features with larger number categories or continuous variables. However, this bias can be eliminated using the conditional inference framework Hothorn \etal presented in \cite{Hothorn2006}. The model we will present shortly can also be used with the same conditional inference framework.   
\end{remark}

Just like these three basic measures, their variations also suffer from the same drawback. To date the only way to determine thresholds or statistical significance for Random Forest based importance measures is permutation testing \cite{Rodenburg2008,Altmann2010,Hapfelmeier2013}. However, these methods are computationally expensive and therefore, cannot easily be used in every study and cannot be integrated in wrapper methods. 

In the next section we will present a simple probabilistic model for selection frequency and show how it can be used for determining thresholds and associating false positive rates. Before delving into the details we would like to note the reasons why we choose to model selection frequency, which is not that commonly used in the literature for feature selection. First of all, as shown in empirical studies all these three measures provide very similar rankings. This is not very surprising since both Gini and permutation importance measures rely on the fact that a feature is chosen at a node. Indeed, they can be seen as weighted versions of the selection frequency. Second, a statistical model of the selection frequency applies to all the different types of problems a Random Forest can be used to solve, such as regression, multi-label classification and unsupervised learning. On the contrary, possible statistical models of the other measures need to take into account the exact form of the objective function (or the loss function) and therefore, they cannot be generally applicable. Lastly, for small sample sizes, as also noted by Archer and Kimes in \cite{Archer2008}, subsampling or bootstrapping of the samples during training might not be desirable, which limits the use of permutation importance. 
\section{A Model of False Positive Rates for Selection Frequency}\label{sec:model}
The main question we want to answer in this section is {\em ``What is the probability that a feature $f$ is selected $k$ times in a Random Forest under the null hypothesis?''} with the null hypothesis being {\em ``There is no statistical relationship between features (covariates) and labels (response variables).''} Answering this question will allow us to estimate false positive rates for a given threshold on selection frequency. Consequently, it will provide us a principled way to determine thresholds to split relevant and non-relevant features. 

Our strategy is to build an approximate probabilistic model of selection frequency that takes into account the details of the training procedure, such as the feature subsampling scheme, number of trees and the size of the random feature subset each node optimizes over\footnote{The $mtry$ variable in the randomForest function in R}. Once this model is built we can compute the expected rate of false positives for a given threshold on selection frequency and consequently determine thresholds by limiting this rate. We take a bottom-up approach and start by modeling the probability of a feature $f$ being selected at a given node and build-up the model from there. 

The randomness in the feature selection process is due to the random nature of forest training. 
As mentioned in Section~\ref{sec:RF} there are two components that provide this random nature: feature subsampling and sample subsampling or bootstrapping. 
In our approach we explicitly formulate the feature subsampling and this is the source of the probabilistic nature of our model. 
We do not explicitly model the sample subsampling scheme and its effect on the selection frequency. 
There are two main reasons for this. 
First, the effect of sample subsampling on feature selection is much more difficult to explicitly formulate. 
Second, for small datasets bootstrapping or subsampling of the data might not be used at all when training a random forest. 
Feature subsampling on the other hand is used all the time. 
However, although we do not model sample subsampling explicitly the proposed model still applies whether a sampling scheme is used or not. 

There are various feature subsampling schemes that can be used in Random Forest training. For each of these schemes the model will be slightly different. Here, we provide the models for the two most popular approaches. Accordingly, Section~\ref{sec:experiments} will present results on both of the schemes. 

In our approach we make one assumption that applies to all the feature subsampling schemes we focus on. The approximate nature of our model is due to this assumption. Before delving into the details of the selection frequency models we first build the mathematical framework and explicitly state our assumption.
\subsection{Mathematical Setting}
We focus on the construction of an arbitrary node in an arbitrary tree during the training phase. The key idea is to notice that at this arbitrary node the index of the optimal feature, determined by optimizing Equation~\ref{eqn:node_optimization}, can be seen as a random variable whose randomness is due to feature subsampling. In order to build the mathematical setting that supports this view we construct a probability space that is shared among all the nodes across the forest. To do this we first define the sample space that arises naturally from the feature subsampling. Without loss of generality let us assume that every node has access to $F_n$ features. Then, let 
\begin{equation}
\nonumber \Omega=\left\{ \left\{f_{m_1},\dots,f_{m_{F_n}}\right\} |\ m_i \in \mathcal{F}, m_i \neq m_j\ \textrm{for}\ i\neq j\right\}
\end{equation}
the set of all subsets of size $F_n$ within $\mathcal{F}$ be the sample space. Note that any element of $\Omega$ corresponds to a specific subsampling of the feature space. Using this sample space we define the probability space $\left(\Omega,2^\Omega,P\right)$, where we construct the measure $P$ on any $\mathcal{A}\subseteq\Omega$ using the uniform measure on $\Omega$ as 
\begin{equation}
\nonumber P(\mathcal{A}) = \sum_{w\in\mathcal{A}} p(w),\ \sum_{w\in\Omega}p(w) = 1,\ p(w) = \textrm{constant}\ \forall w\in\Omega.
\end{equation}
In concrete terms $(\Omega,2^\Omega,P)$ models the random selection of a feature subset, where all subsamplings are equally likely. It can be used in both situations where feature subsampling occurs at each node independently or it occurs once per tree and all the nodes in the tree use the same feature subset. Using basic combinatorics it is easy to show that 
\begin{equation}
\nonumber p(w) = 1\Big/\dbinom{F}{F_n}.
\end{equation}

The link between $p(w)$ and the optimal feature index can be build by observing that the optimal feature is determined by optimizing an objective function over a subset of features, in other words a specific $w$. Let us focus on a node $n$ where $\mathcal{S}_n$ denotes the subset of the samples at that node. At this node we can define $f^*_n:\Omega\rightarrow\mathcal{F}$ to be the random variable 
\begin{equation}
\nonumber f^*_n(w) \triangleq \argmax_{f\in w}\max_{\tau_f\in\mathbb{R}} G(\mathcal{S}_n, \mathcal{S}_n^R, \mathcal{S}_n^L),
\end{equation}
where $\mathcal{S}_n^R$ and $\mathcal{S}_n^L$ are defined as in Equation~\ref{eqn:node_optimization}. The $f^*_n(w)$ mapping simply represents the solution of the node optimization for a given feature subset $w$. Using this notation we can compute the probability that a feature $f$ gets selected at a given node $n$ simply by 
\begin{equation}
\nonumber P(f\ \textrm{selected at } n) = P(f^*_n = f) \triangleq \sum_{w\in\Omega}p(w)\mathbf{1}\left(f^*_n(w) = f\right)=\left(\sum_{w\in\Omega}\mathbf{1}\left(f^*_n(w) = f\right)\right)\Big/\dbinom{F}{F_n},
\end{equation}
where $\mathbf{1}(\cdot)$ returns $1$ if its argument is true and $0$ otherwise. The fact that each subsampling is equally likely makes $P(f^*_n = f)$ simply proportional to the number of times $f$ becomes the optimal feature over all possibilities in $\Omega$. 

At this stage we are now ready to explicitly state our assumption that will be the basis of the selection frequency models we will present next. 
\begin{assumption}\label{assumption}
Under the null hypothesis that there is no statistical relationship between labels and features we assume that for all nodes, regardless of $\mathcal{S}_n$, and $\forall f\in\mathcal{F}$
\begin{equation}
\label{eqn:assumption} P(f^*_n=f) = \frac{1}{F}.
\end{equation}
\end{assumption}
It is important to understand where this assumption stems from to realize its limitations. In the hypothetical case where there are finite number of features and infinite number of samples, if $G(\cdot,\cdot,\cdot)$ is unbiased, in the sense that it will detect that there are no statistical relationships between labels and features, then under the null hypothesis all the features will provide the same $G$ value. As a result, the choice of the optimal feature will be a purely random choice hence the assumption holds. Now, there are two realistic cases. First is when there are many more samples than feature dimensions, i.e. $S >> F$, and in this case Assumption~\ref{assumption} will be very close to reality, see the empirical analysis in Section~\ref{sec:simple_problem}. In the second high-dimensional case, i.e. $S << F$, there will be spurious statistical relationships between features and labels and there will be features that will have higher probabilities to get selected. So, the validity of the assumption will be jeopardized for some features. The usual way to tackle the spurious statistical relationships is to use bagging or bootstrapping of the samples and these techniques solves the problem up to some extent. However, for small sample sizes even these techniques might not get rid of spurious relationships. In Section~\ref{sec:experiments} we analyze such cases and show that even in these cases Assumption~\ref{assumption} is still useful for discriminating relevant and non-relevant features.  

Another important point we should make here is about correlation structures between features. Assumption~\ref{assumption} does not say anything about correlations between features. However, it is not hard to imagine that strong correlations between features will harm the validity of the assumption especially in the second extreme case $S << F$. Nonetheless, in our experiments we demonstrate that even for complex correlations structures the models derived from Assumption~\ref{assumption} yield useful tools in determining thresholds.

It might seem odd that in the assumption $P(f^*_n=f)$ does not depend explicitly on $F_n$, the size of the feature subset. This seems particularly surprising considering Breiman and Cutler's statement in ``[$F_n$] is the only adjustable parameter to which random forests is somewhat sensitive.'' in \cite{Breiman2008}. However, as Breiman himself pointed out (\cite{Breiman2001}:14-16,\cite{Breiman2008}) the error rates and the ``strength''\footnote{see \cite{Breiman2001} for an exact definition} of the forest is fairly insensitive to $F_n$ over a large range. The insensitivity has also been observed in empirical studies for feature selection~\cite{Lunetta2004,Svetnik2004,Diaz-Uriarte2006}. In addition to these previous results, our experiments demonstrate that the quality of the models based on this assumption are not affected by changing $F_n$ for a large range around $\sqrt{F}$, the most commonly used setting. Nonetheless, it is important to note that in extreme cases, such as $F_n=F$, Assumption~\ref{assumption} might lose its validity. However, we do not concentrate on these cases since they lead to very correlated trees and therefore, harm the generalization properties of Random Forest in the first place. 

The above assumption has an important consequence on the conditional probabilities. Let us define $\mathcal{C}_n^f$ which denotes the event that feature $f$ is in the randomly selected subset for the node $n$. Then we can state the following proposition regarding conditional probabilities.
\begin{proposition}\label{proposition}
If $P(f^*_n = f) = 1 / F$ $\forall f\in\mathcal{F}$ and all nodes, then $P(f^*_n = f | \mathcal{C}_n^f) = 1 / F_n$.
\end{proposition}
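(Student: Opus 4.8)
The plan is to reduce the claim to two elementary observations — a set-containment between the relevant events, and a direct combinatorial evaluation of the sampling probability — after which the conclusion is a one-line application of the definition of conditional probability. I would work throughout in the probability space $(\Omega, 2^\Omega, P)$ and with the random variable $f^*_n$ introduced above.

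First I would note that, by the very definition $f^*_n(w) = \argmax_{f\in w}\max_{\tau_f\in\mathbb{R}} G(\mathcal{S}_n,\mathcal{S}_n^R,\mathcal{S}_n^L)$, the selected index always belongs to the sampled subset $w$; hence $\{w : f^*_n(w) = f\} \subseteq \mathcal{C}_n^f$ as events in $\Omega$, and therefore $P(f^*_n = f,\ \mathcal{C}_n^f) = P(f^*_n = f) = 1/F$ by hypothesis. Next I would compute the marginal $P(\mathcal{C}_n^f)$ directly from the uniform measure on $\Omega$: the number of size-$F_n$ subsets of $\mathcal{F}$ that contain the fixed index $f$ is $\binom{F-1}{F_n-1}$, each carrying probability $1/\binom{F}{F_n}$, so $P(\mathcal{C}_n^f) = \binom{F-1}{F_n-1}\big/\binom{F}{F_n} = F_n/F$. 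Equivalently, since $\sum_{g\in\mathcal{F}}\mathbf{1}(g\in w) = F_n$ for every $w$, taking expectations and using symmetry across features yields $F\cdot P(\mathcal{C}_n^f) = F_n$.

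Combining these, $P(f^*_n = f \mid \mathcal{C}_n^f) = P(f^*_n = f,\ \mathcal{C}_n^f)\big/P(\mathcal{C}_n^f) = (1/F)\big/(F_n/F) = 1/F_n$, as claimed. I do not anticipate any real obstacle: the only step needing a moment's care is the event-containment $\{f^*_n = f\}\subseteq\mathcal{C}_n^f$, which is exactly what makes the joint probability collapse onto the prescribed marginal $1/F$; the rest is routine bookkeeping with the uniform distribution over feature subsets.
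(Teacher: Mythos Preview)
Your proof is correct and essentially identical to the paper's: both hinge on the observation that $f$ cannot be the optimal feature unless it lies in the sampled subset (you phrase this as the event containment $\{f^*_n=f\}\subseteq\mathcal{C}_n^f$, the paper as $P(f^*_n=f\mid\bar{\mathcal{C}}_n^f)=0$), followed by the same combinatorial evaluation $P(\mathcal{C}_n^f)=\binom{F-1}{F_n-1}\big/\binom{F}{F_n}=F_n/F$. The only cosmetic difference is that you invoke the definition of conditional probability directly, whereas the paper routes through the law of total probability; the content is the same.
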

\begin{proof}
$P(f^*_n=f) = P(f^*_n=f|\mathcal{C}_n^f)P(\mathcal{C}_n^f) + P(f^*_n=f|\bar{\mathcal{C}_n^f})P(\bar{\mathcal{C}_n^f})$, where $\bar{\mathcal{C}_n^f}$ is the complementary event. $f$ cannot be selected if it is not in the subsampled feature set, so, $P(f^*_n=f|\bar{\mathcal{C}_n^f})=0$. As a result, $P(f^*_n=f) = P(f^*_n=f|\mathcal{C}_n^f)P(\mathcal{C}_n^f)$. The number of sets of size $F_n$ one can draw among a set of size $F$ is a well-known identity in combinatorics: $\dbinom{F}{F_n}$. Based on that $P(\mathcal{C}_n^f)$ can be given as the ratio of number of sets of size $F_n$ that includes $f$ to the total number: $P(\mathcal{C}_n^f)=\dbinom{F-1}{F_n-1}\Big/\dbinom{F}{F_n}$. Therefore, 
\begin{equation}
\nonumber P(f^*_n=f|\mathcal{C}_n^f) = \frac{1}{F}\dbinom{F}{F_n}\Big/\dbinom{F-1}{F_n-1} = \frac{1}{F_n}.
\end{equation}
\end{proof}

In the following two sections we use Assumption~\ref{assumption} and the result given in Proposition~\ref{proposition} to construct two basic models of selection frequency for two different training schemes that are commonly used in applications of Random Forest. In both cases our goal is to build a model that can compute the probability that a feature is selected $k$ times under the null hypothesis. Since the models rely on the validity of the assumption they are approximate at best. 
\subsection{Training Strategy I}\label{sec:st1}
The first training scheme randomly chooses a new subset $\mathcal{F}_n\subseteq\mathcal{F}$ of size $F_n \leq F$ from the entire set at every node independently from the others. So, based on Assumption~\ref{assumption} the probability of a feature to be selected in any one node under the null hypothesis is given by $P(f^*_n = f) = 1/F$. A random forest is composed of many trees that are all independently constructed and $P(f^*_n=f)$ applies to all the nodes in all the trees. As a result the probability of a feature $f$ being chosen $k$ times in the entire forest follows a binomial distribution. If there are $T$ trees and each tree has $K$ internal nodes then this probability is given as
\begin{equation}
\label{eqn:modelI} P(\mathcal{C}_{k,T}^f) = \dbinom{TK}{k}\left(\frac{1}{F}\right)^k\left(1 - \frac{1}{F}\right)^{TK-k}.
\end{equation}
In the training phase of a Random Forest there is no constraint on the number of internal nodes a tree should have. The construction is usually constrained with the maximum tree depth or the minimum number of samples at a leaf node. So, each tree can have different number of internal nodes. Now, it is not difficult to extend the model given in Equation~\ref{eqn:modelI} to this scenario. However, here we take a more crude approach and set $K$ to the average number of internal nodes in the forest and use Equation~\ref{eqn:modelI}. The main reason for this is empirically we have not observed a single case where the crude model deviated by more than \%1 of the complex model. Moreover, the complex model  can be computationally very demanding and this might hinder its wide applicability. So, in the sake of simplicity we focus on the crude model in this work.  
 
Strategy I is the most commonly used strategy that is also implemented in the well known randomForest package in the R distribution.
\subsection{Training Strategy II}\label{sec:st2}
The second training scheme independently chooses a random feature subset for each tree but uses the same subset to train all the nodes within the tree. This strategy is particularly interesting for applications dealing with high-dimensional problems and utilizing parallel or distributed computing, such as medical image analysis and computer vision~\cite{Criminisi2012}.  

Modeling selection frequency in this scheme is slightly more complicated than strategy I but it is based on the same principles. We start by defining $\mathcal{C}_t^f$ to denote the event that a feature $f$ is selected in the feature subset to train the tree $t$. This event actually has the same probability distribution as $\mathcal{C}_n^f$ and it is given by 
\begin{equation}
\nonumber P(\mathcal{C}_t^f) = \dbinom{F-1}{F_n-1}\Big/\dbinom{F}{F_n}.
\end{equation}
Based on Assumption~\ref{assumption}, from Proposition~\ref{proposition}, we know that given $\mathcal{C}_t^f$ the probability that $f$ is selected as the optimal feature in any node of the tree is given by $1/F_n$. Using this we can compute the probability that $f$ gets selected as the optimal feature at $k$ nodes of a tree that has $K$ nodes in total. Let us denote this event with $\mathcal{C}_{k/t}^f$. Its probability can be computed with another binomial distribution as 
\begin{equation}
\nonumber P(\mathcal{C}_{k/t}^f) = \dbinom{K}{k}\left(\frac{1}{F_n}\right)^k\left(1 - \frac{1}{F_n}\right)^{K-k}.
\end{equation}
Combining these two probabilities we can compute the probability of $f$ being selected as the optimal feature at $k$ nodes in a given tree $t$ by $P(\mathcal{C}_{k,t}^f) = P(\mathcal{C}_{k/t}^f)P(\mathcal{C}_t^f)$. 

The value that we are really interested in is the probability that $f$ gets selected at $k$ nodes in the entire forest, $P(\mathcal{C}_{k,T}^f)$. Assuming there are $T$ trees, this condition can be written as $\sum_{t=1}^T k_t = k$, where $k_t$ is $f$'s selection count in tree $t$. It is clear that for any $k>0$ there are multiple selection count assignments that can satisfy the sum. Therefore, to compute  $P(\mathcal{C}_{k,T}^f)$ we need to add the probabilities of all possible assignments of $\{k_t\}$. To do this we use {\em integer partitions} (see \cite{Wilf2000} for a basic treatment). A partition of an integer $k$, $\lambda\vdash k$, is a list of numbers $\lambda=[l_1,l_2,\dots,l_{M(\lambda)}]$ such that $\sum_{m=1,M(\lambda)} \lambda_m = k$. All integers larger than $1$ have several partitions and we denote the set of partitions of $k$ with $\Lambda(k) = \{\lambda|\lambda\vdash k\}$. The order of numbers in a partition is not important so two lists with the same elements but in different order correspond to the same partition. To compute $P(\mathcal{C}_{k,T}^f)$ we compute the probability for each partition of $k$ and add them all up. 

To compute the probability of a given partition $\lambda$, which we denote with $P(\mathcal{C}_{\lambda}^f)$, we need the counts of each integer in $\lambda$, e.g. number of 1's, 2's and so on. This, we can simply obtain using:
\begin{equation}
\nonumber q_{\lambda}^k (\xi) = \sum_{l_m\in\lambda}\mathbf{1}\left(l_m = \xi\right),\  \xi\in\{1,\dots,k\},
\end{equation}
where $\mathbf{1}(\cdot)$ is the same as above, i.e. returns 1 if the argument is true and 0 otherwise. 
The basic idea is now to compute the probability that we have $q_{\lambda}^k(\xi)$ trees among $T$ in which $f$ is selected $\xi$ times for all $\xi$. There are two conditions here. The first one is the trivial case where $\sum_{\xi} q_{\lambda}^k(\xi) > T$ and thus $P(\mathcal{C}_{\lambda}^f) = 0$. The more interesting case $\sum_{\xi} q_{\lambda}^k(\xi) \leq T$ can be computed using the multinomial distribution. To do this we already know how to compute $P(\mathcal{C}_{\xi,t}^f)$ when $\xi > 0$ and we only need the probability that $f$ gets selected $0$ times in a tree and this is given by
\begin{equation}
\nonumber P(\mathcal{C}_0^f) = P(\mathcal{C}_{0,t}^f) + 1 - P(\mathcal{C}_t^f).
\end{equation}
Combining all we can compute the probability of a partition $\lambda$ using the multinomial distribution:
\begin{equation}
\nonumber  P(\mathcal{C}_{\lambda}^f) = \left\{
\begin{array}{cl}
\frac{T!}{\prod_{\xi=1}^k q_{\lambda}^k(\xi)!\left(T-\sum_{\xi=1}^k q_{\lambda}^k(\xi)\right)!}\prod_{\xi=1}^k \left\{P(\mathcal{C}_{\xi,t}^f)^{q_{\lambda}^k(\xi)}\right\}P(\mathcal{C}_0^f)^{\left(T - \sum_{\xi=1}^k q_{\lambda}^k(\xi)\right)}, & \sum_{\xi}q_{\lambda}^k(\xi) \leq T \\
0, & \sum_{\xi}q_{\lambda}^k(\xi) > T
\end{array}\right.
\end{equation}
where the number of internal nodes per tree is taken to be $K$, the average across the forest. 
Finally summing the probabilities for all partitions 
\begin{equation}
\label{eqn:modelII} P(\mathcal{C}_{k,T}^f) = \sum_{\lambda\in\Lambda(k)}P(\mathcal{C}_{\lambda}^f).
\end{equation}

As was the case for the model for Strategy I, we can also extend the model here to the case where each tree has different number of internal nodes. However, once again we have not observed an empirical advantage of complicating the model so we focus on the simpler case given in Equation~\ref{eqn:modelII}.
\subsection{Determining Thresholds and Associating False Positive Rates} $P(\mathcal{C}_{k,T}^f)$ is the component that we need to be able to determine a threshold for selection frequency that can split relevant from non-relevant features. It gives us an approximate probability that a feature $f$ is selected $k$ times in a forest of $T$ trees under the null hypothesis. With this, we can answer the basic question that will help us determine a threshold: {\em What is the probability that a feature gets selected more than $\kappa$ times in $T$ trees under the null hypothesis?}. This probability can simply be given as
\begin{equation}
\label{eqn:thresholdI} P(\mathcal{C}_{k>\kappa,T}^f) = 1 - \sum_{\xi=0}^\kappa P(\mathcal{C}_{\xi,T}^f).
\end{equation}
In the model $P(\mathcal{C}_{k>\kappa,T}^f)$ is the same for all the features and therefore, this probability also provides the model prediction for the expected false positive rate under the null hypothesis. Specifically,
\begin{equation}
\nonumber E[\textrm{Number of False Positives}] = E\left[\sum_{f\in\mathcal{F}}\mathbf{1}\left(n_f > \kappa\right)\right] = P(\mathcal{C}_{k>\kappa,T}^f)\cdot F.
\end{equation}
So, our proposed approach for choosing a threshold $\kappa$ is to limit the expected false positive rate. For a desired rate $\alpha$ we can compute the threshold as
\begin{equation}
\label{eqn:thresholdII} \kappa^* = \min_{\kappa\in[0,KT)} \kappa \textrm{ subject to }P(\mathcal{C}_{k>\kappa,T}^f) \leq \alpha.
\end{equation}
$\kappa$ is a non-negative integer and empirically $\kappa << KT$. So, this optimization problem can be solved using exhaustive search without much additional computational burden. 
\subsection{In the Existence of Relevant Features} 
The models given in Equations~\ref{eqn:modelI} and~\ref{eqn:modelII} rely on Assumption~\ref{assumption}. Therefore, the thresholds we compute based on these models also rely on the assumption. We already discussed the validity of this assumption for different scenarios under the null hypothesis. It is also important to note what we should expect in the presence of relevant features. In this case the assumption $P(f=f^*) = 1/F$ will change. The relevant features will be selected more often as the optimum and we will observe $P(f_{relevant} = f^*) > 1/F$. Consequently we expect a drop in the probability that a non-relevant feature gets selected $P(f_{non-relevant} = f^*) < 1/F$. As a result, the thresholds determined using Equations~\ref{eqn:thresholdI} and~\ref{eqn:thresholdII} will yield lower false positive rates than the predicted values. Our experimental analysis show that this is indeed the case. 
\section{Experiments}\label{sec:experiments}
In this section we present a thorough empirical analysis of the proposed method. We use synthetic datasets, where quantitative analysis of false positive rates and false negative rates is possible. We use two different models to generate data: one where features are independent and identically distributed, and the other where feature vectors are synthetically simulated cortical thickness maps that have a complex correlation structure. In both types of experiments we work on binary classification problems, i.e. $y\in\{0,1\}$. However, we would like to note that since the proposed method does not depend on the exact definition of $G(\cdot,\cdot,\cdot)$, the method can also be used for other types of problems, e.g. continuous regression or clustering. In all the experiments we use sample bagging, subsampling {\em without} replacement, and feature subsampling during the training of the Random Forest. In our analysis we observed that there is not much difference between bagging rates of $0.632$ and $0.5$, so we only present results for $0.5$. 

As the forest implementation we use the publicly available software neighbourhood approximation forests (NAF) ~\cite{Konukoglu2013} \footnote{Available with a python interface at http://www.nmr.mgh.harvard.edu/$\sim$enderk/software.html}, which is a variant of Random Forests that can be applied to a variety of problems using the same objective function $G$.  

We first provide details on the data generation models and then present the results. 
\subsection{Independent Case} 
\subsubsection{Data Generation Model}
The datasets with independent features are generated with the following model. 
\begin{eqnarray}
\nonumber \mathbf{x}_{non-relevant}&\propto& \mathcal{N}(\overrightarrow{\mathbf{0}}_{F-N},\sigma^2\mathbf{I}_{F-N}),\ \sigma=5.0\\
\label{eqn:relevant} \mathbf{x}_{relevant}&\propto& \mathcal{N}(2y\rho\sigma/\sqrt{1-\rho^2}\overrightarrow{\mathbf{1}}_{N},\sigma^2\mathbf{I}_N),\ \sigma=5.0\\
\nonumber \mathbf{x} &=& [\mathbf{x}_{nonrelevant}^T, \mathbf{x}_{relevant}]^T\\
\nonumber y&\propto& \mathcal{B}(1, 1/2),
\end{eqnarray}
where $N$ is the number of relevant features, $\overrightarrow{\mathbf{0}}_N$ and $\overrightarrow{\mathbf{1}}_{N}$ are vectors of zeros and ones of dimension $N$ respectively, $\rho$ is the ``relevance'' parameter, $\mathbf{I}_N$ is the identity matrix of dimension $N\times N$, $\mathcal{N}$ denotes the normal and $\mathcal{B}$ denotes the binomial distribution. The mean for each relevant variable $2y\rho\sigma/\sqrt{1-\rho^2}$ ensures that the Pearson's correlation coefficient between a relevant feature and the label is $\rho_{[\mathbf{x}_{relevant}]_i,y} = \rho$. With this model we generate datasets of different size $S$, different $N$ and different feature dimensions. Each dataset is balanced where there are $S/2$ samples with $y=1$ and $y=0$. 

\subsubsection{Low-dimensional Problems: $S >> F$}\label{sec:simple_problem}
The first set of analysis are on ``low-dimensional'' problems in which there are more samples than features. These problems are statistically ``easier'' for feature selection because it is less likely to have spurious statistical relationships when $S > F$. In this setting we expect Assumption~\ref{assumption} to be very close to reality and the models described in Sections~\ref{sec:st1} and~\ref{sec:st2} to give accurate false positive rate estimates for a given selection frequency threshold $\kappa$. So we evaluate the models' false positive rate estimates by comparing them to the observed rates when there are no relevant features, i.e. when the null hypothesis is true. 

We set the sample size and feature dimension as $(S,F)=(200,20)$ and experiment with different number of features per node around $\sqrt{20}$, $F_n=(3,5,7)$, and different number of trees, $T = (20,40)$. We set $N=0$ to obtain datasets with no relevant features. For each of six parameter settings we generate 100 datasets and for each dataset we train a forest and record false positive rates for different selection frequency thresholds. For each forest we also compute model predictions of the false positive rate estimates for the same thresholds. We repeat the same experiments for both of the training strategies. 
\begin{table}[!htb]
\small
\begin{tabular}{|c|ccc|}
\hline
\multirow{3}{*}{\begin{sideways}Strategy I\end{sideways}} &
\includegraphics[width=0.28\linewidth]{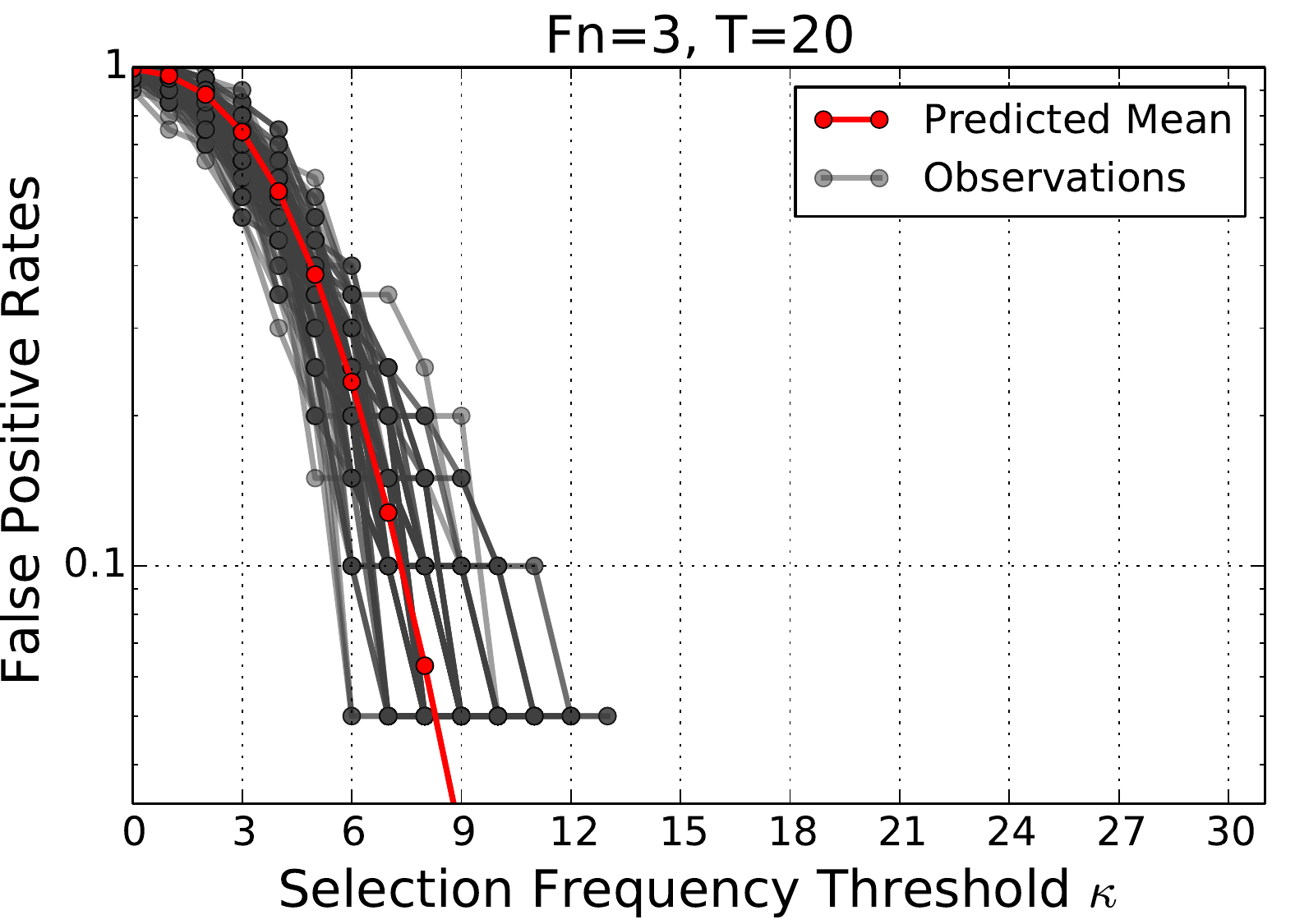} & 
\includegraphics[width=0.28\linewidth]{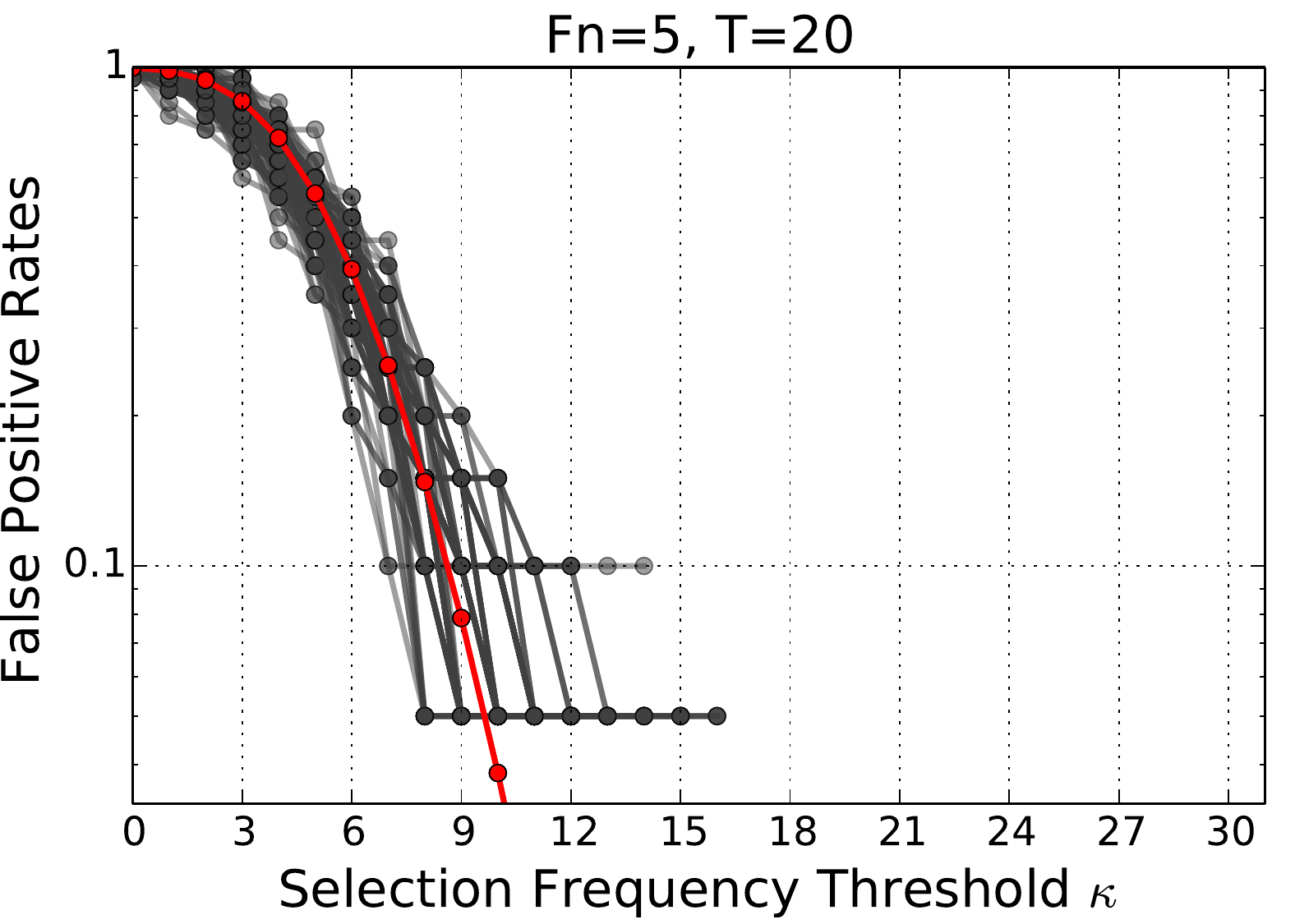} & 
\includegraphics[width=0.28\linewidth]{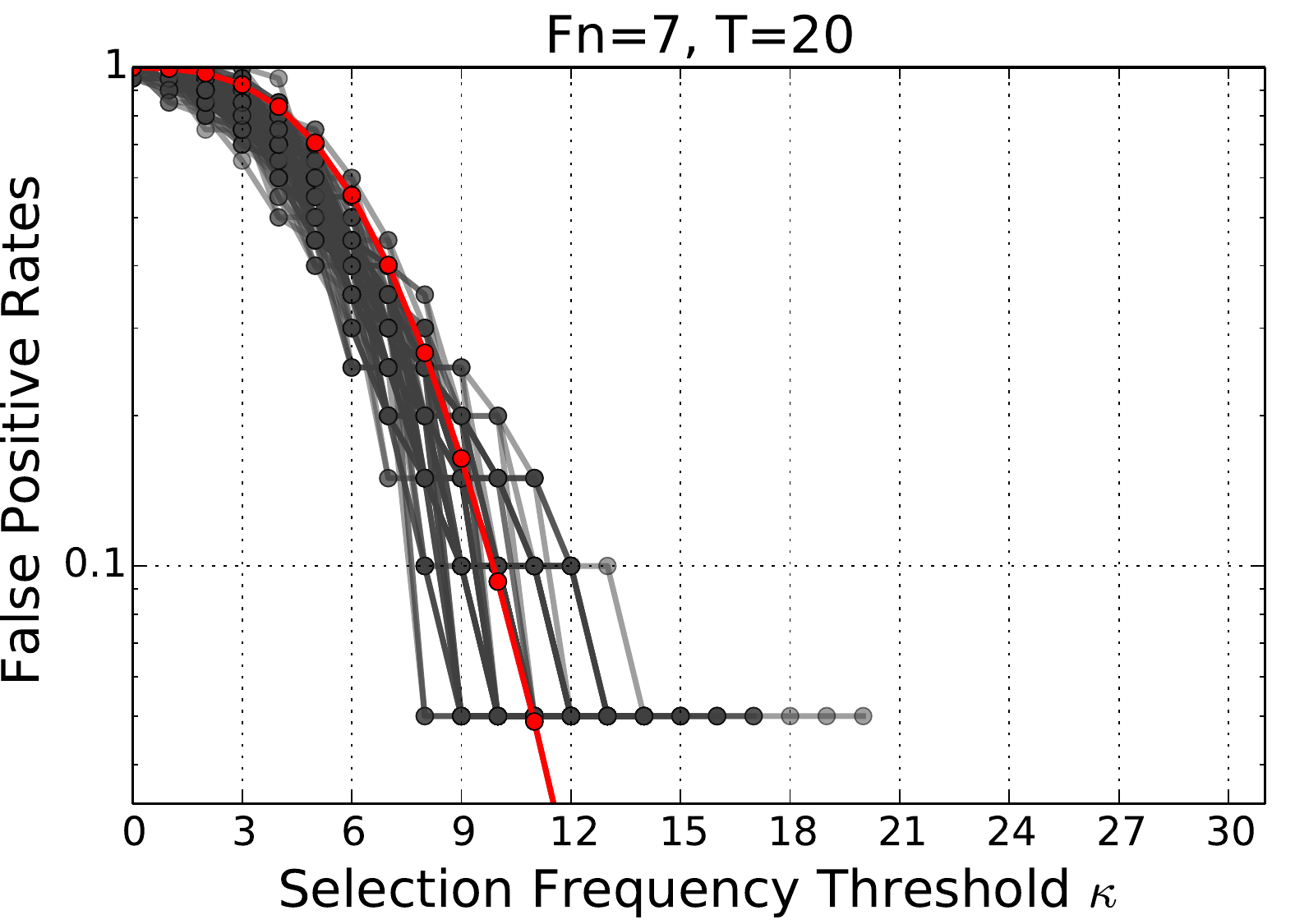} \\
&
\includegraphics[width=0.28\linewidth]{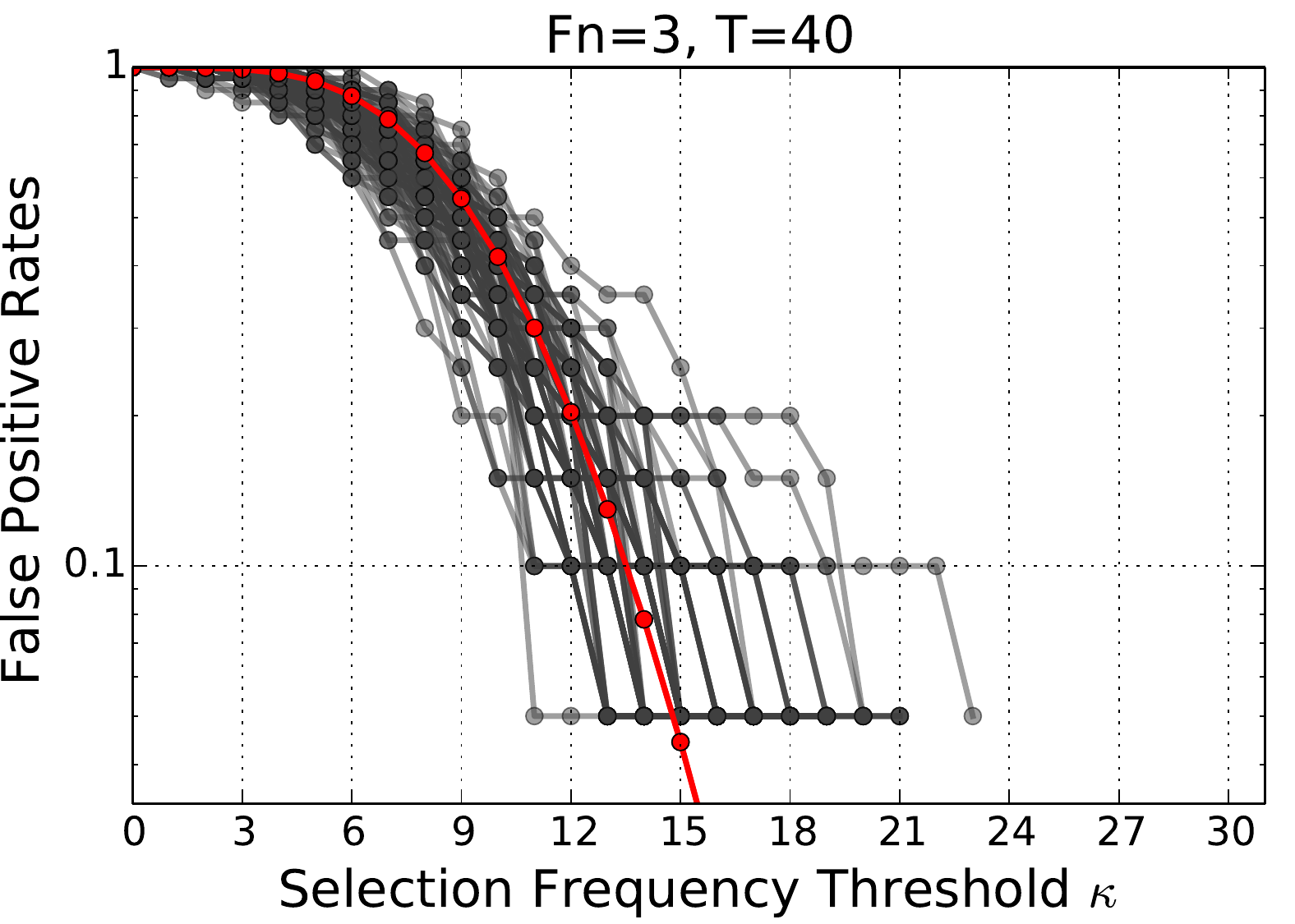} & 
\includegraphics[width=0.28\linewidth]{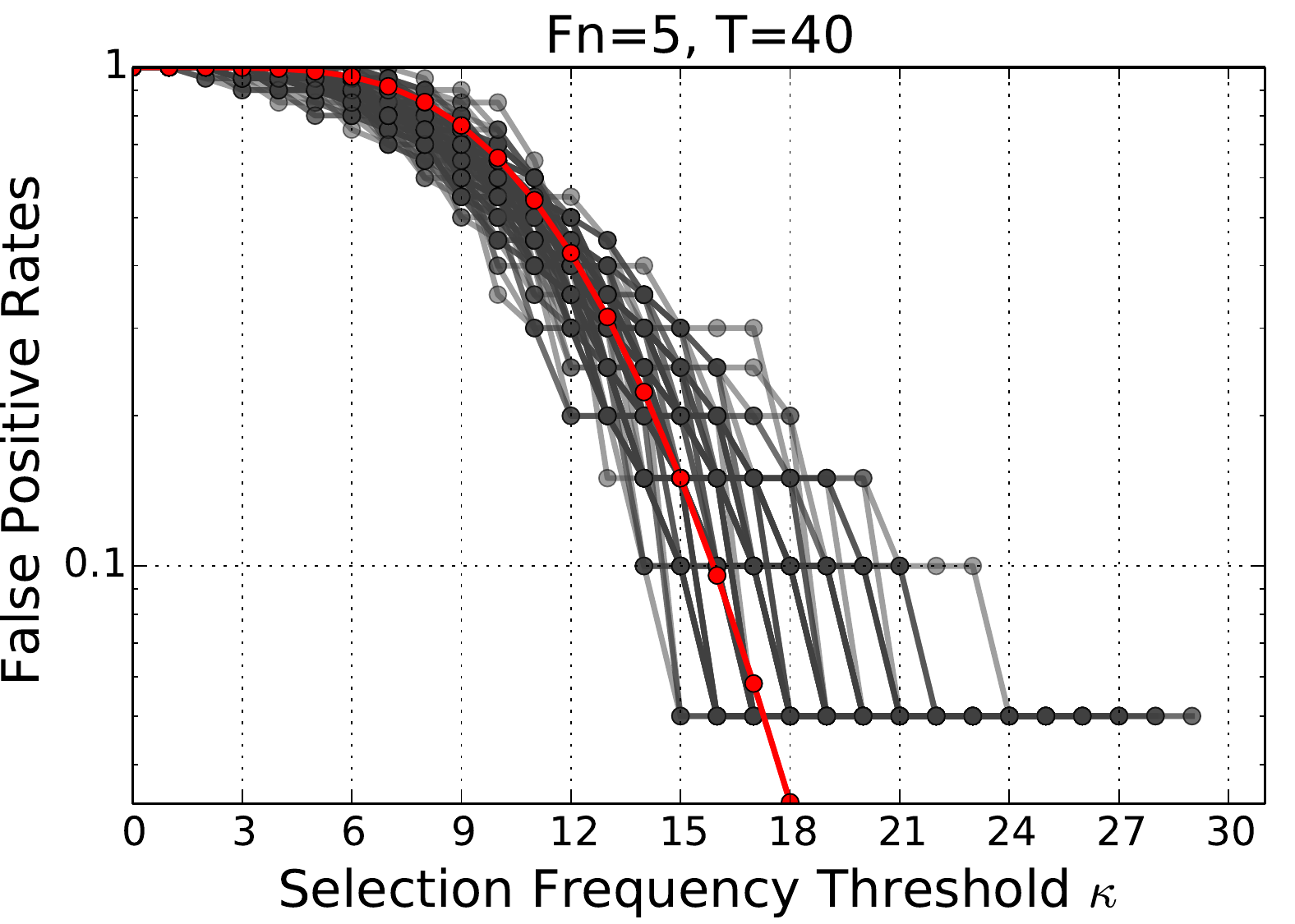} & 
\includegraphics[width=0.28\linewidth]{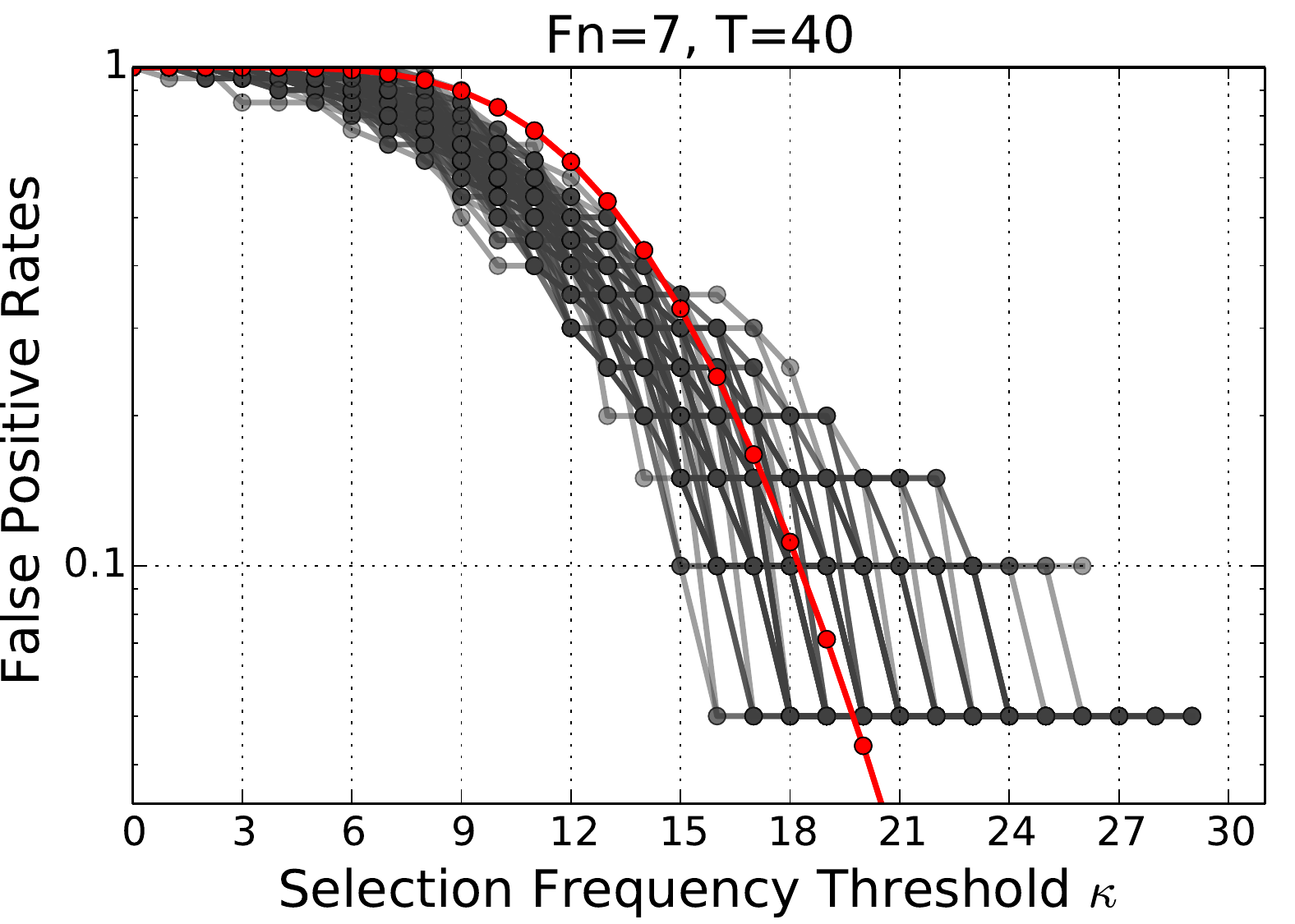}\\
\hline
\multirow{3}{*}{\begin{sideways}Strategy II\end{sideways}} &
\includegraphics[width=0.28\linewidth]{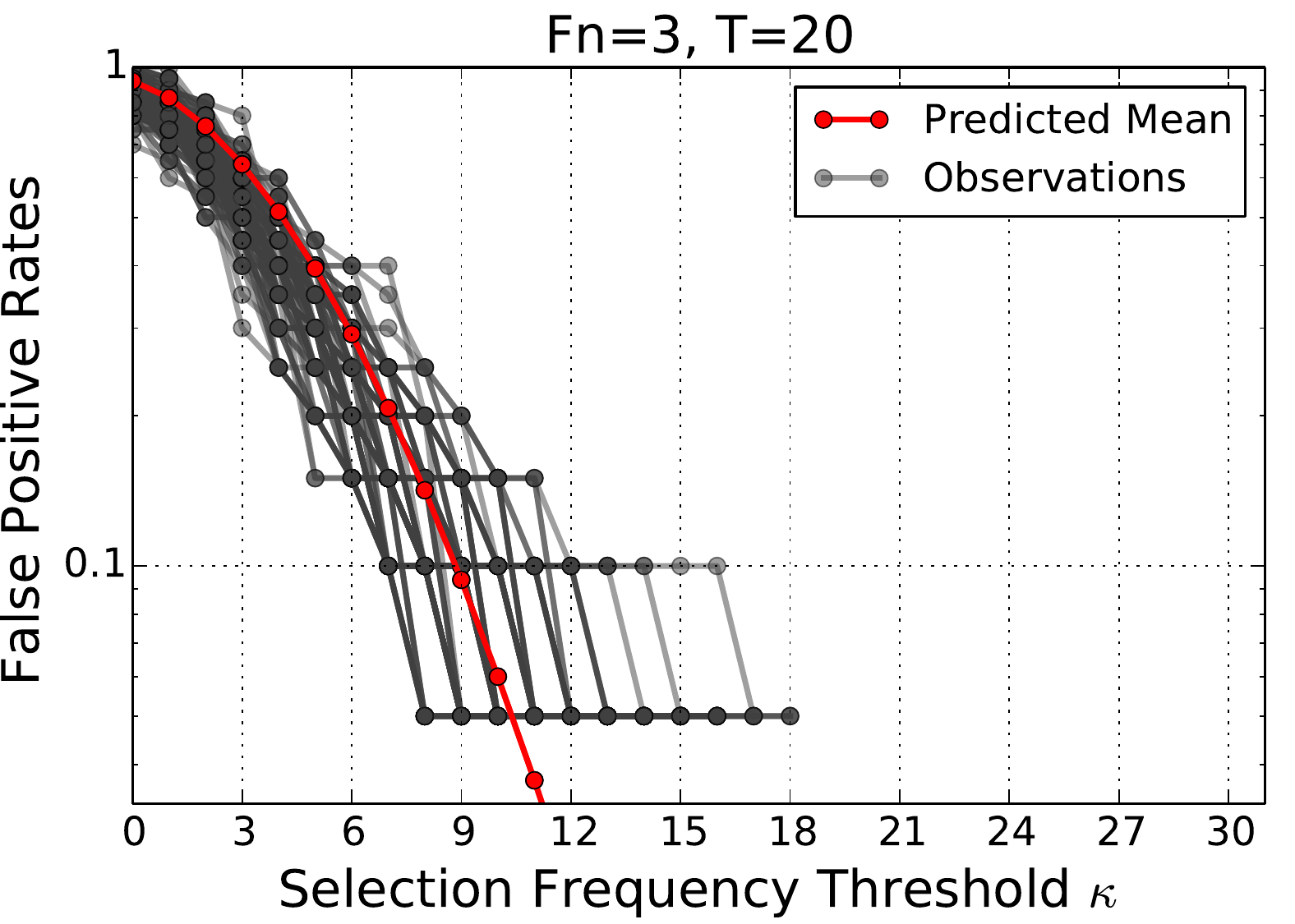} & 
\includegraphics[width=0.28\linewidth]{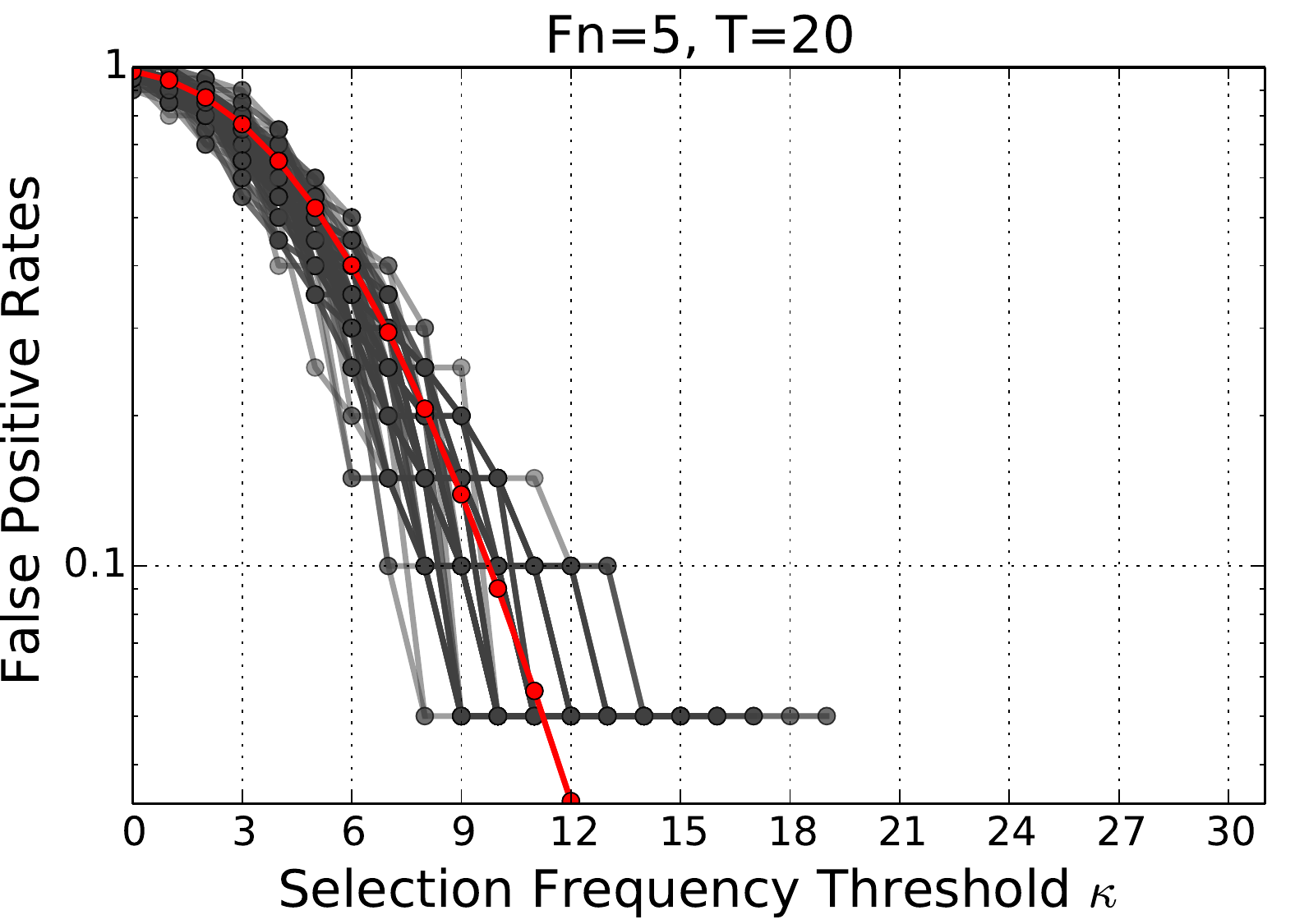} & 
\includegraphics[width=0.28\linewidth]{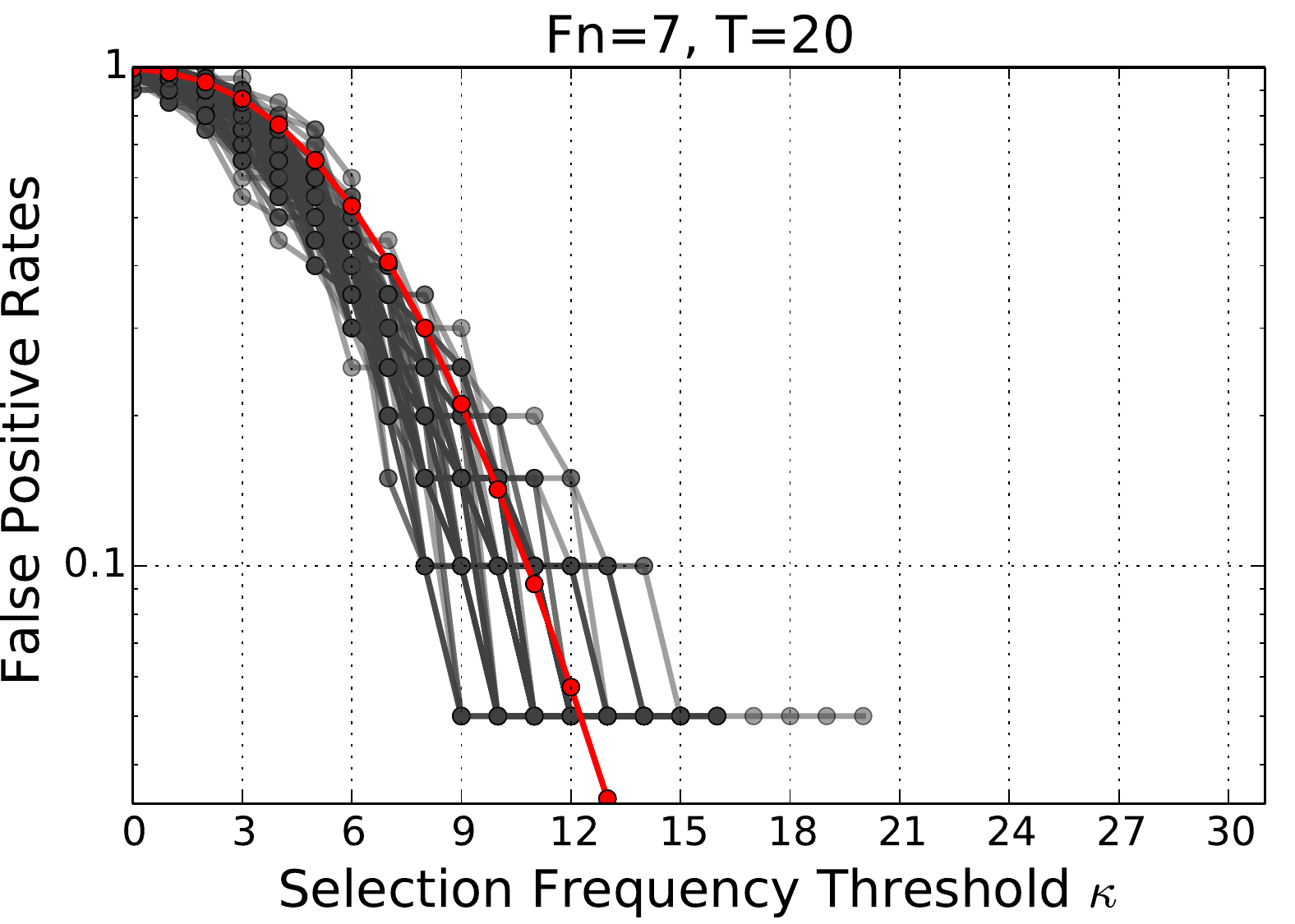} \\
&
\includegraphics[width=0.28\linewidth]{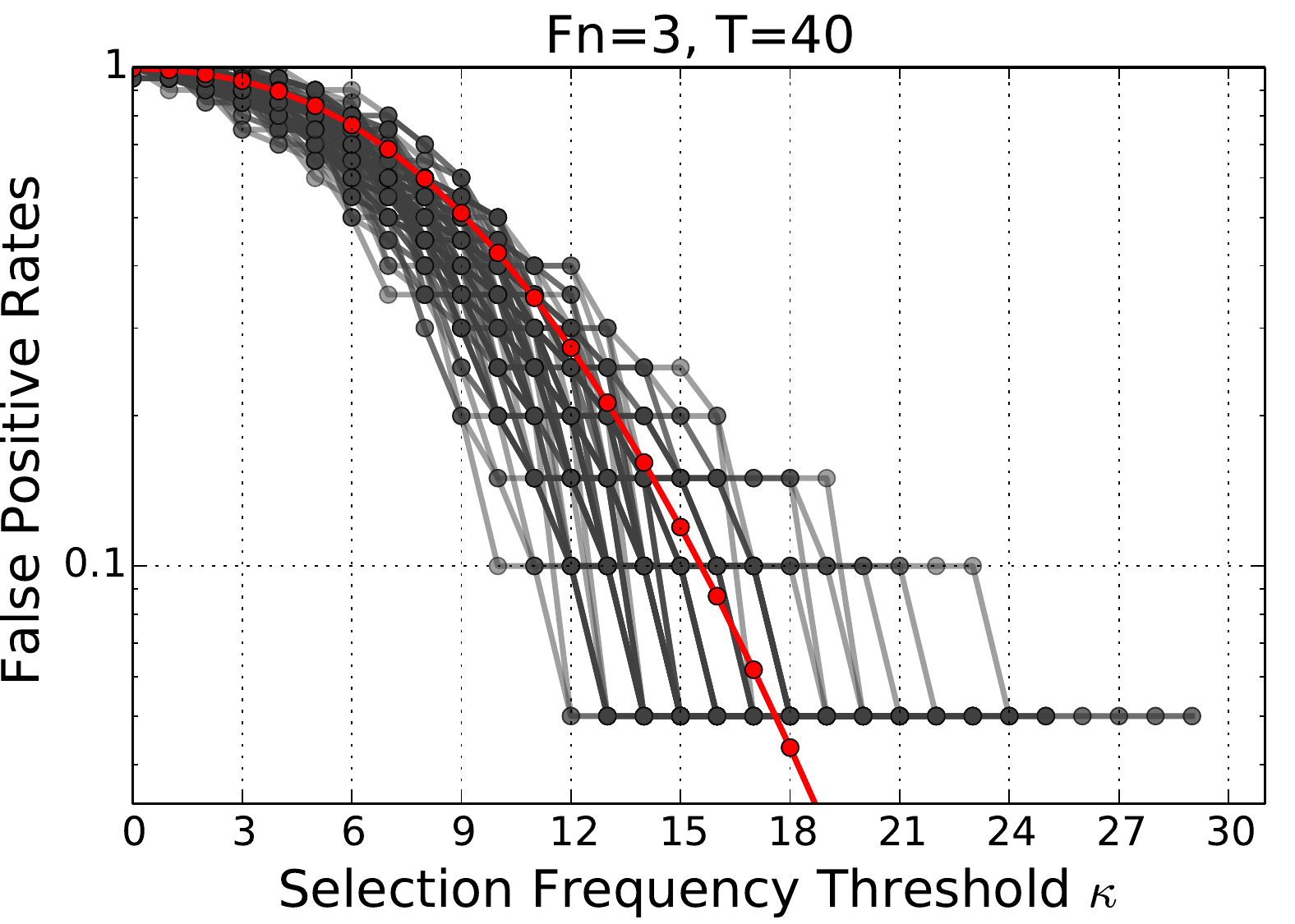} & 
\includegraphics[width=0.28\linewidth]{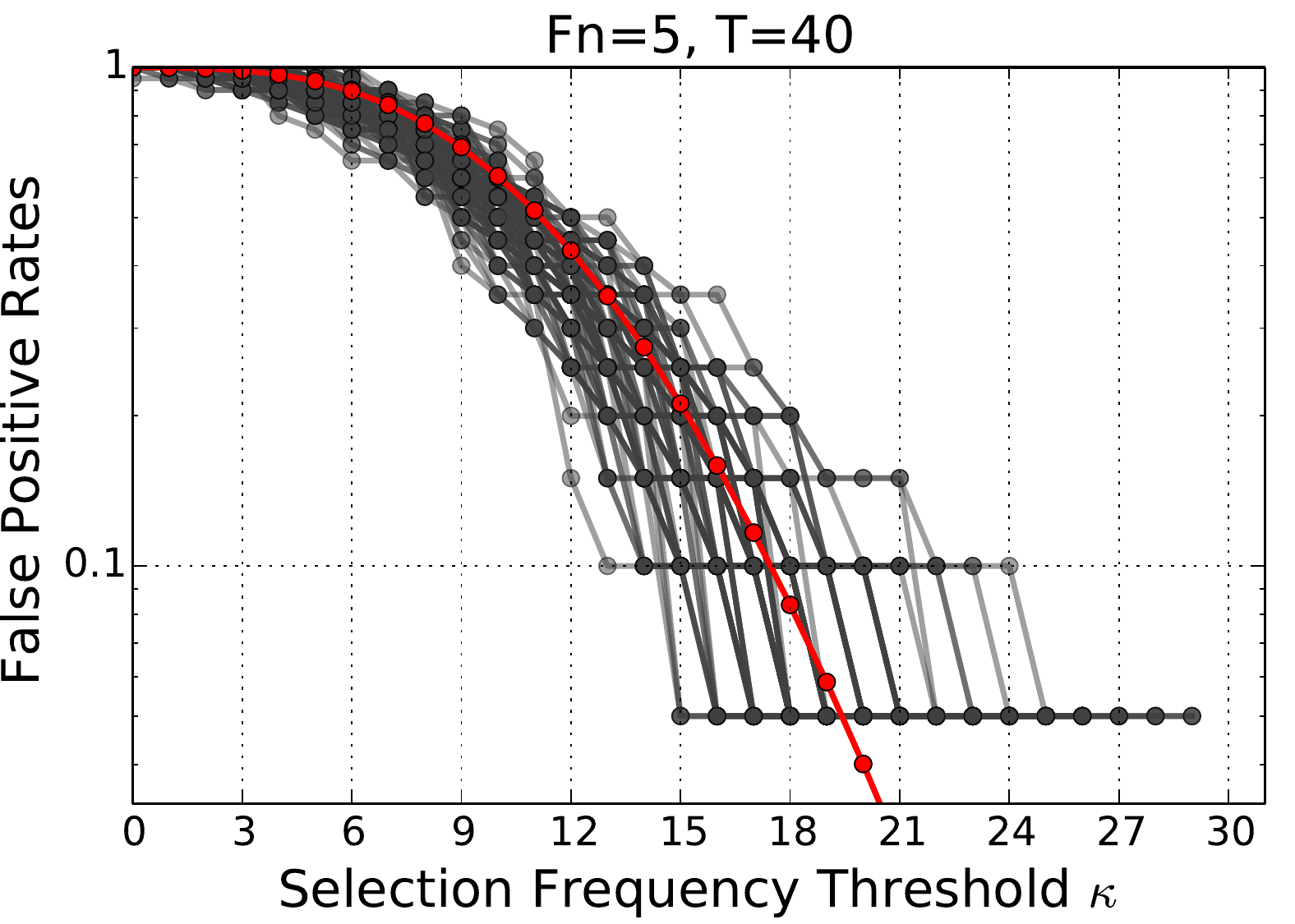} & 
\includegraphics[width=0.28\linewidth]{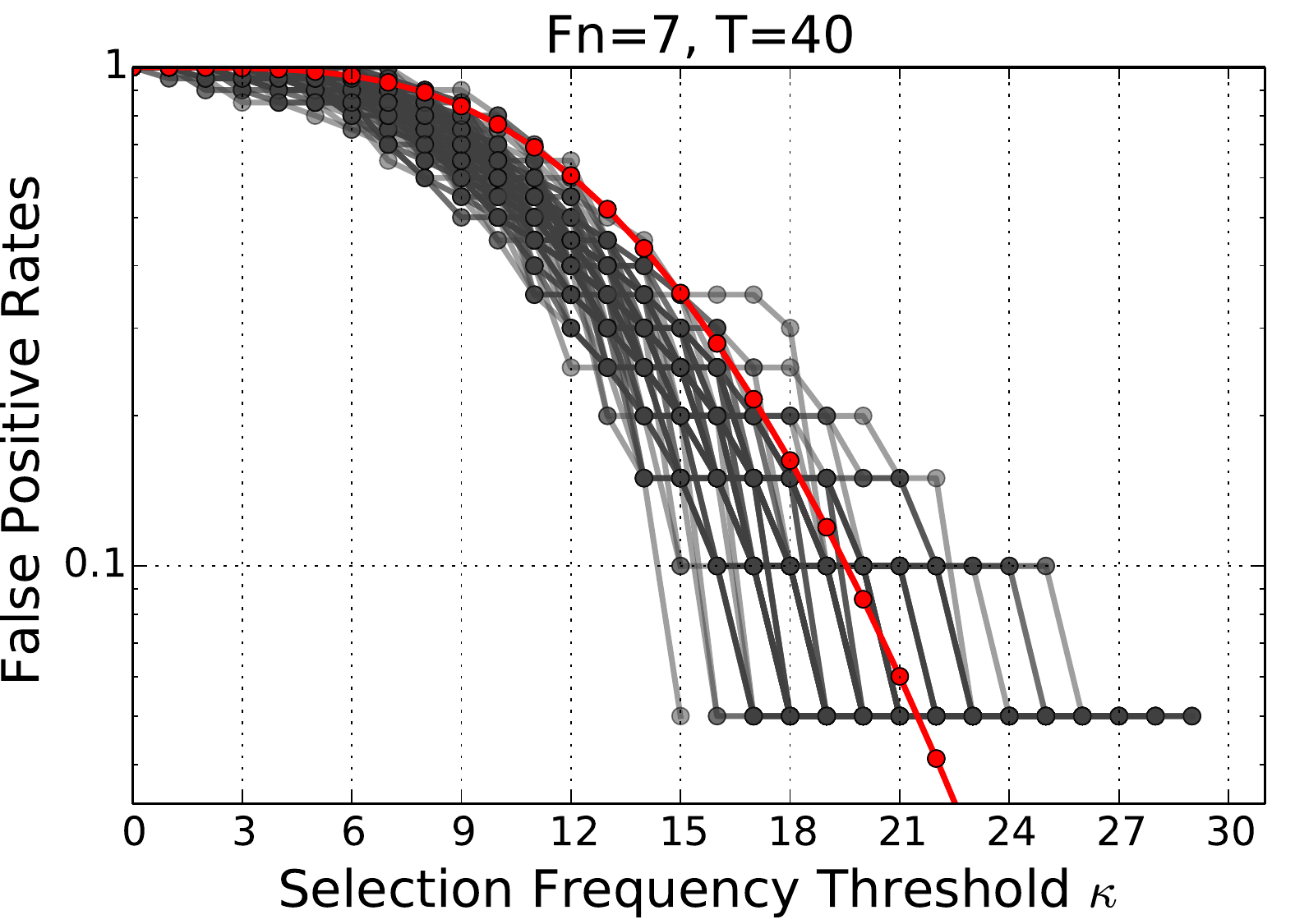}\\
\hline
\end{tabular}
\caption{\label{tab:basic_problem}{\small Low-Dimensional Problem: Comparing observed false positive rates with models' predictions. The graphs plot all the 100 simulations as well as the theoretical prediction. The theoretical predictions capture the mean behavior for various parameter settings.}}
\end{table}

Table~\ref{tab:basic_problem} displays the results for different parameter settings and different training strategies. Each graph plots the false positive rates vs. selection frequency threshold. Observed false positive rates from all experiments are plotted using gray lines and the model prediction is plotted in red. Overall, we observe that the model prediction is accurately capturing the mean behavior. The quality of the approximation does not change with changing $F_n$ or $T$ and seems to be similar for both strategies. The models are able to capture the substantial effect of increasing number of trees $T$ as well as the subtle effects of changing $F_n$. It might be surprising that the model for Strategy I is able to capture the effect of changing $F_n$ since Equation~\ref{eqn:modelI} does not explicitly depend on this parameter. However, the model takes into account the average number of nodes per tree $K$ and this quantity changes with $F_n$, which seems to be the main effect of $F_n$ on the false positive rates. In summary, these experiments demonstrate that the proposed models can accurately capture the behavior of selection frequency under the null hypothesis for low-dimensional problems. In the rest of the article we focus on high-dimensional problems.
\begin{table}[!htb]
\small
\begin{tabular}{|c|ccc|}
\hline
\multirow{2}{*}{\begin{sideways}Strategy 1\end{sideways}} &
\includegraphics[width=0.28\linewidth]{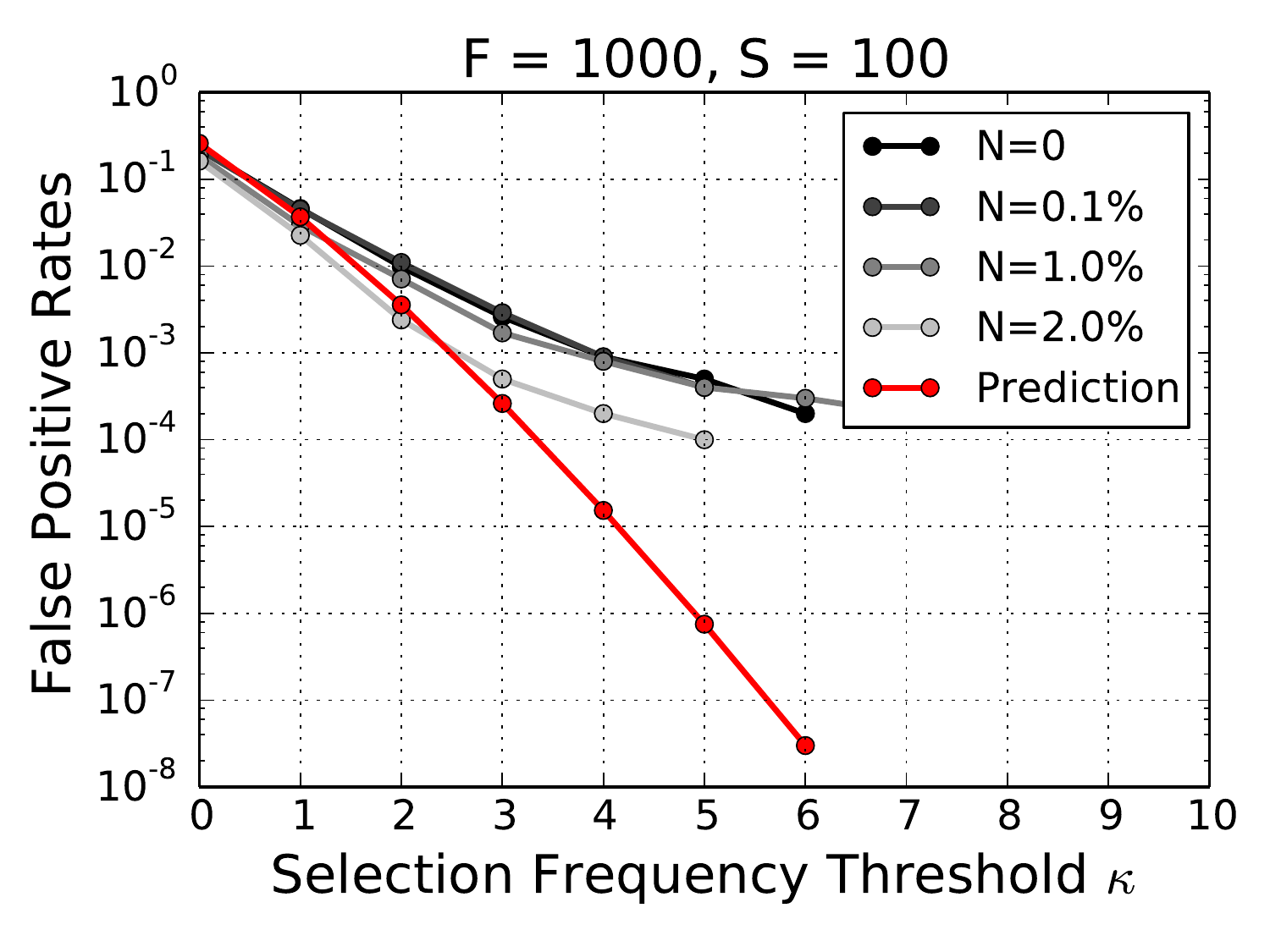} & 
\includegraphics[width=0.28\linewidth]{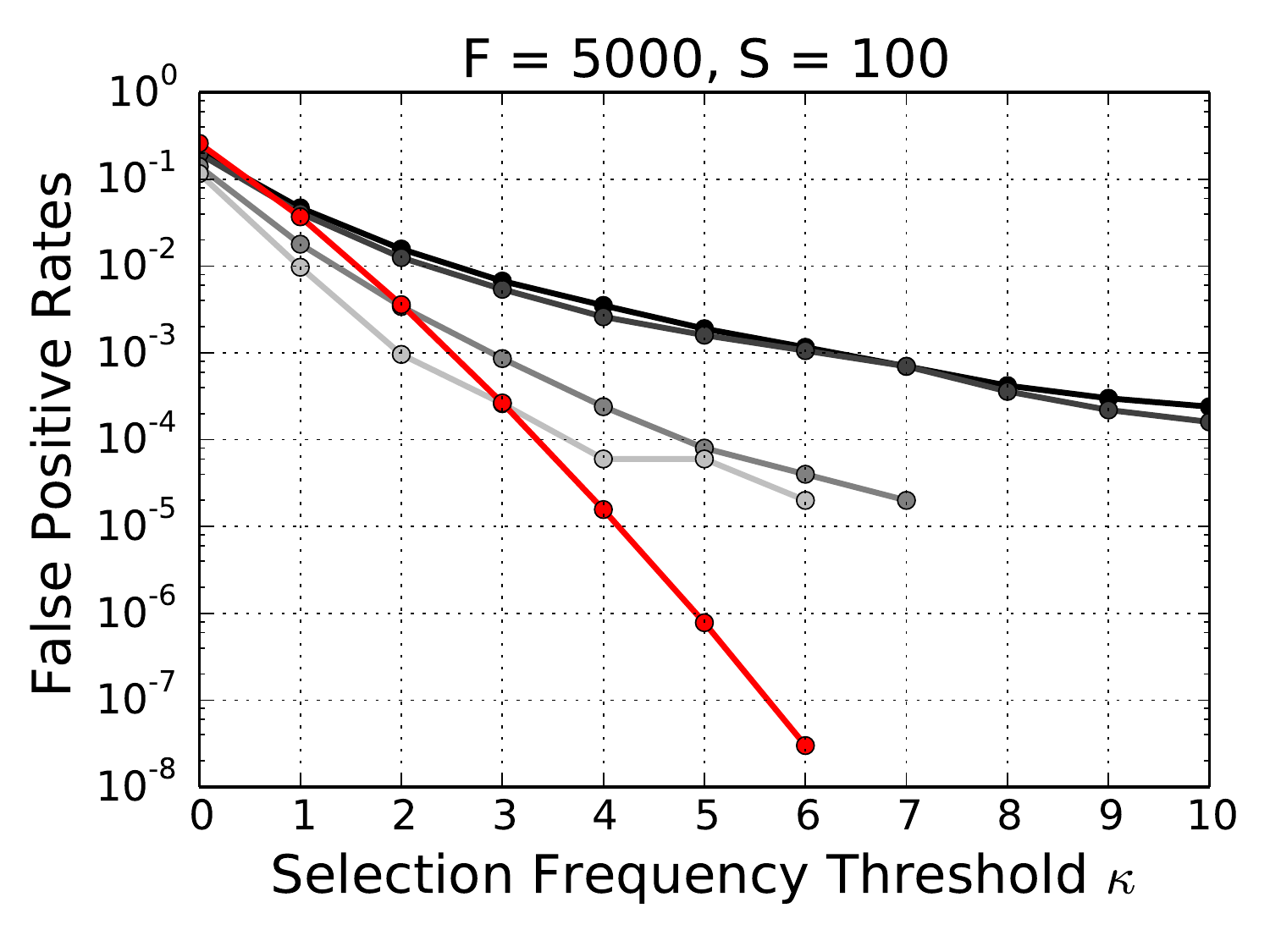} & 
\includegraphics[width=0.28\linewidth]{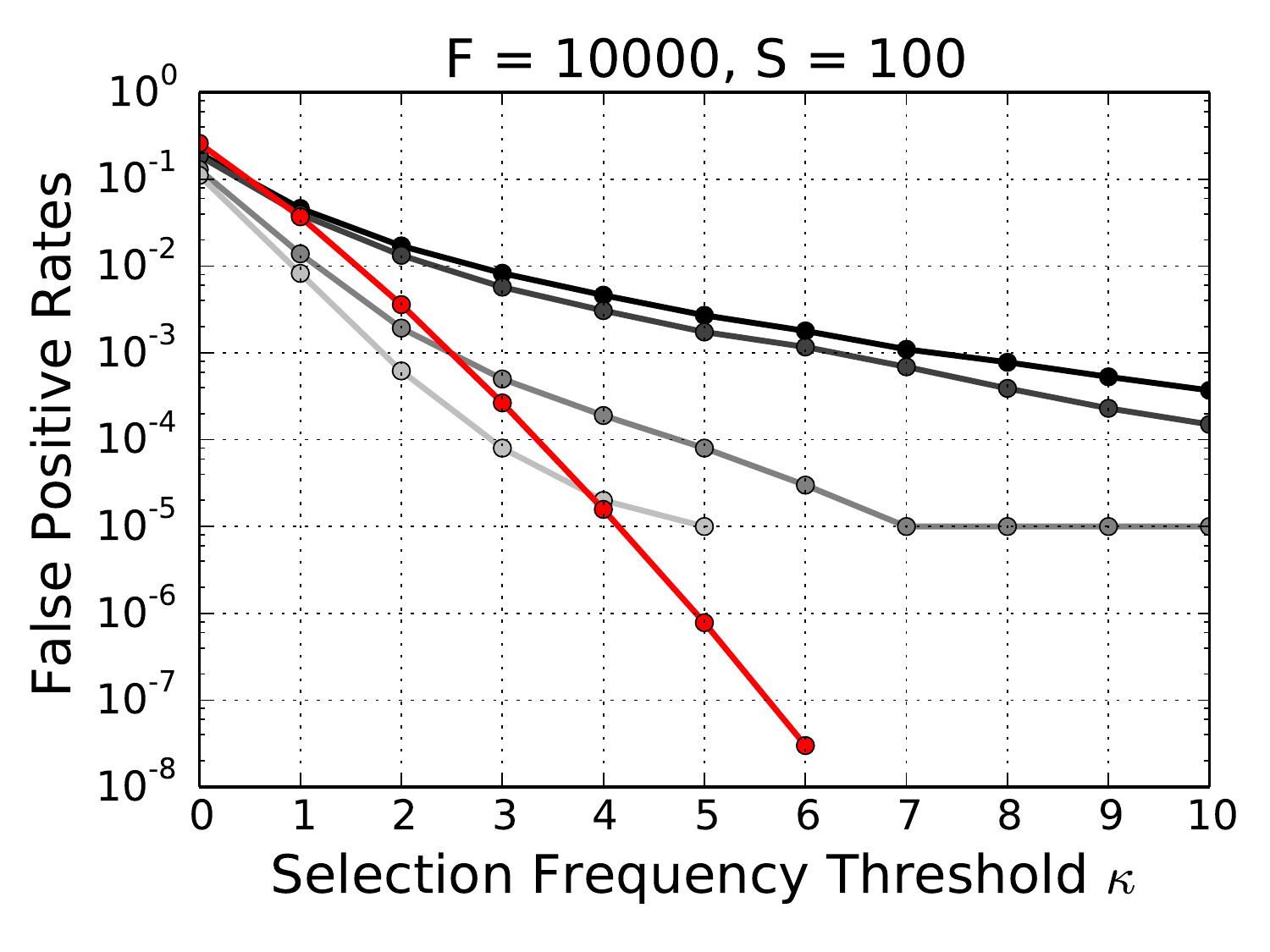} \\
&\includegraphics[width=0.28\linewidth]{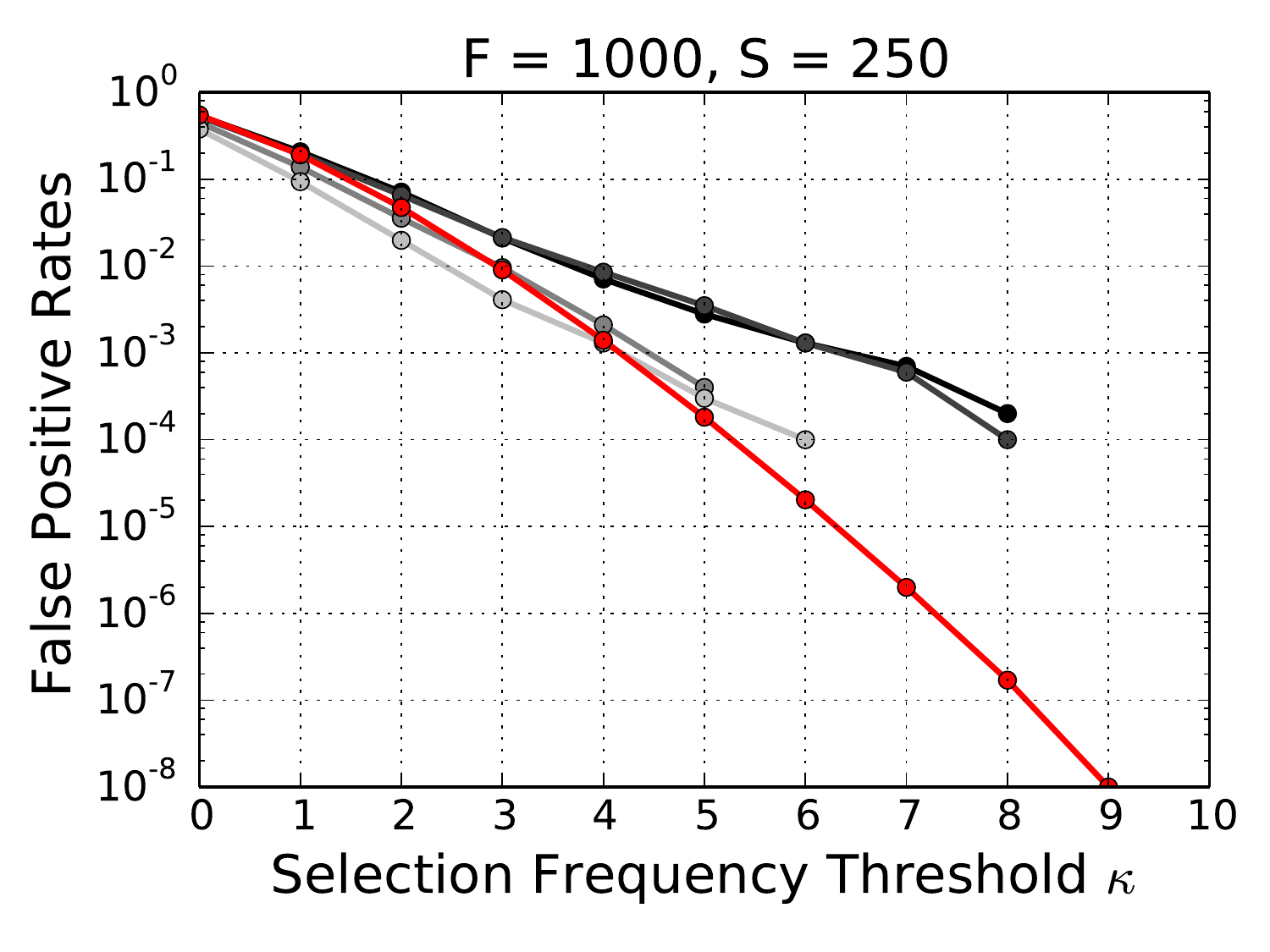} & 
\includegraphics[width=0.28\linewidth]{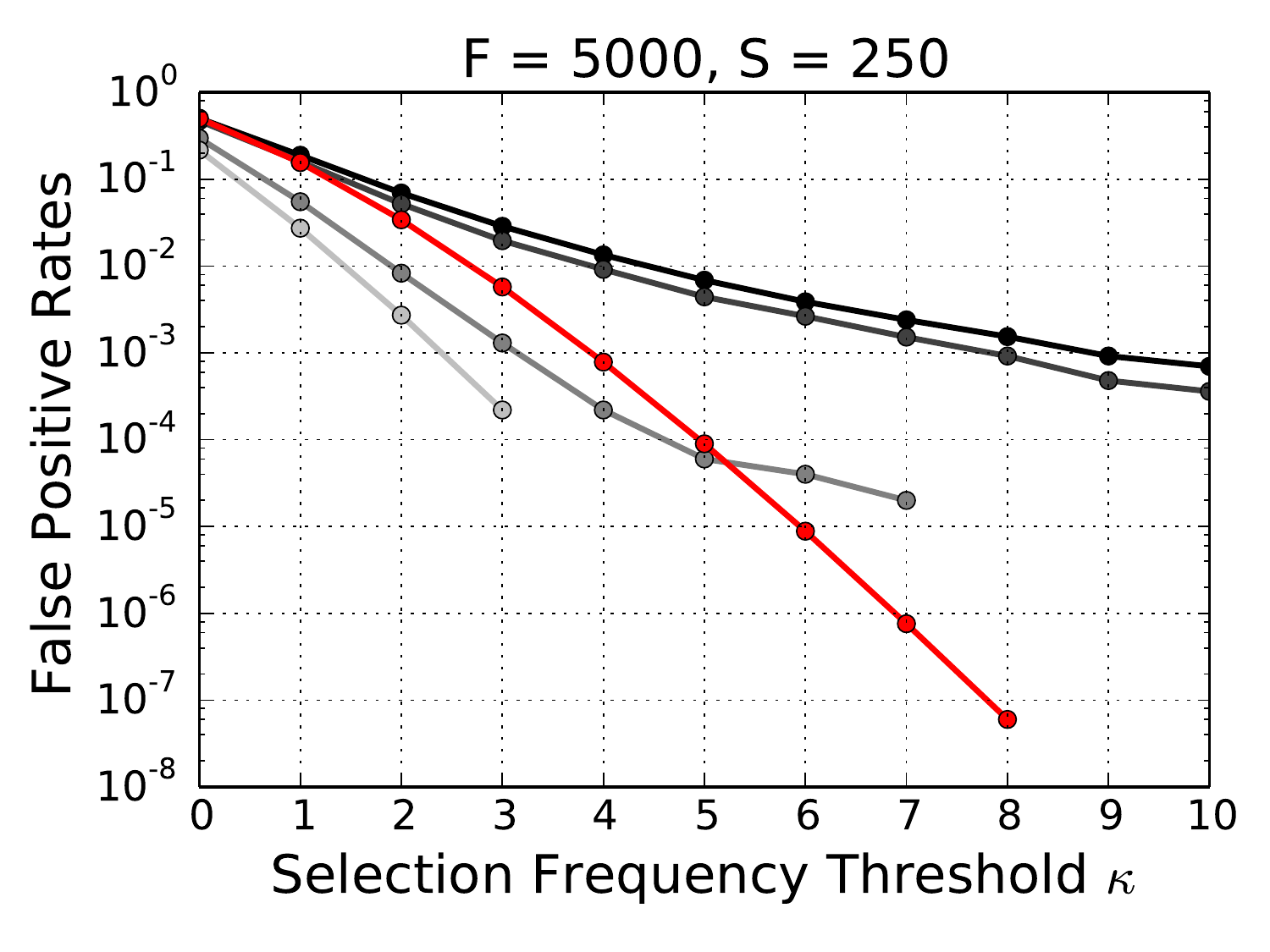} & 
\includegraphics[width=0.28\linewidth]{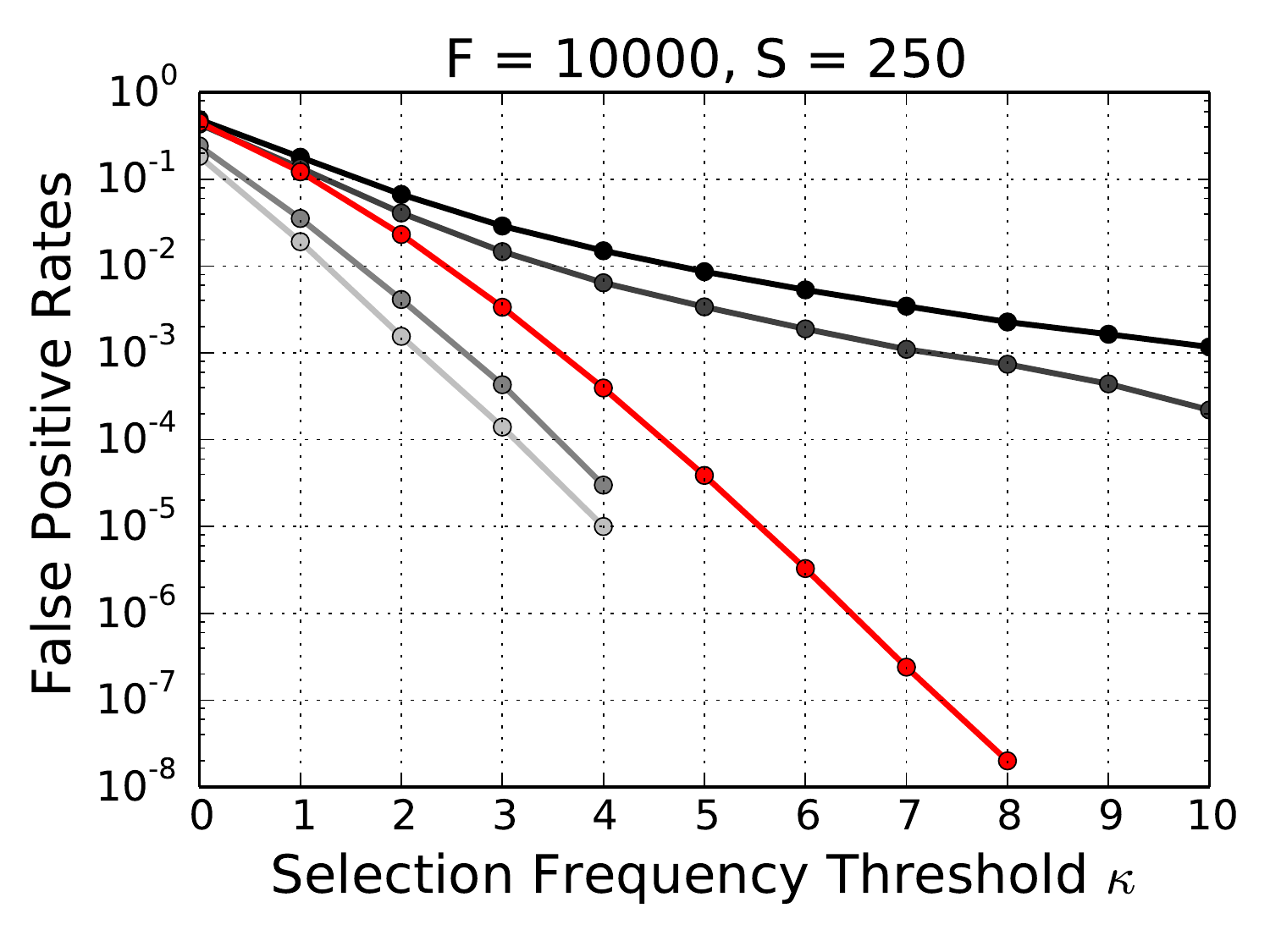}\\
\hline
\multirow{2}{*}{\begin{sideways}Strategy 2\end{sideways}} &
\includegraphics[width=0.28\linewidth]{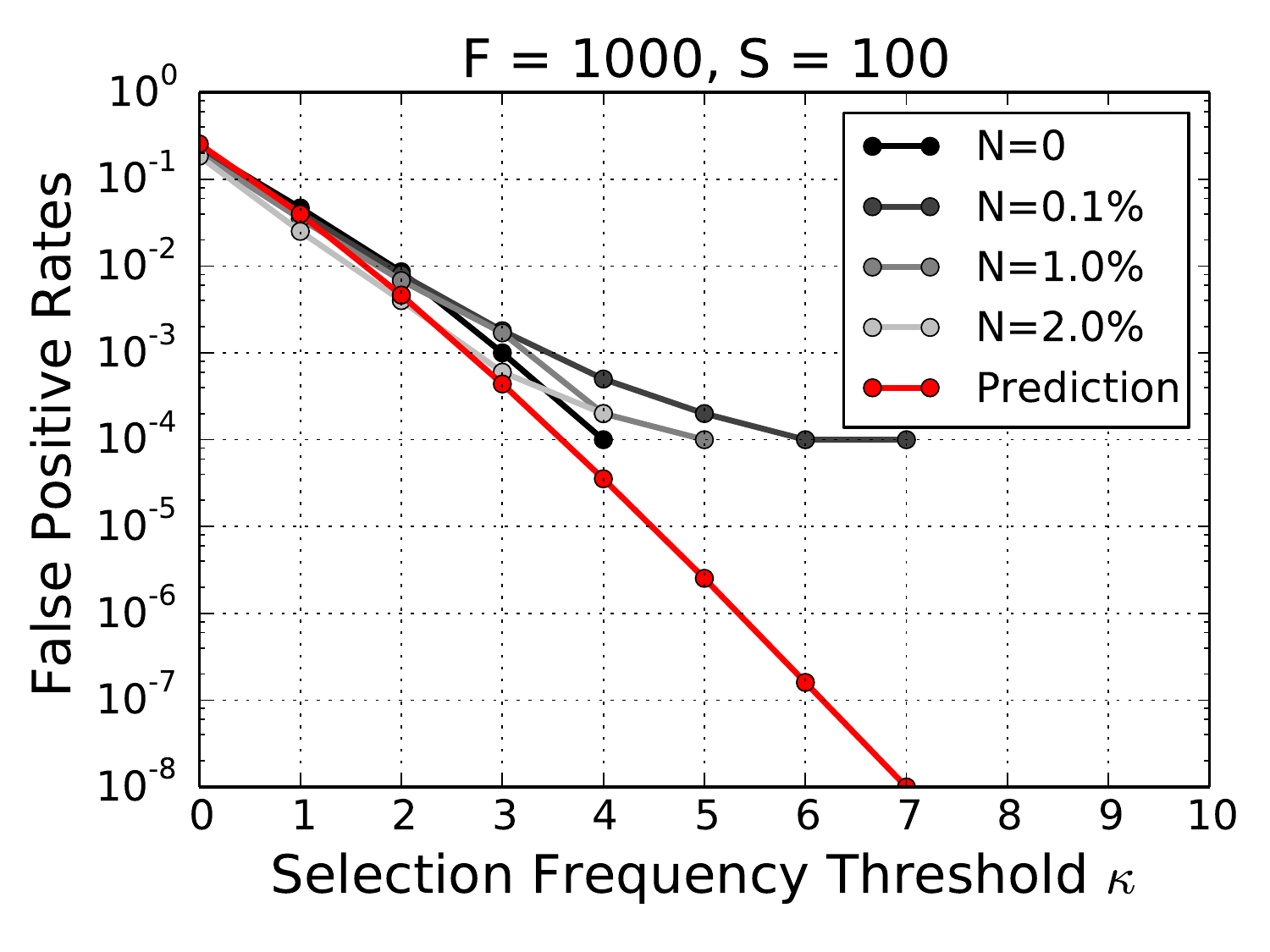} & 
\includegraphics[width=0.28\linewidth]{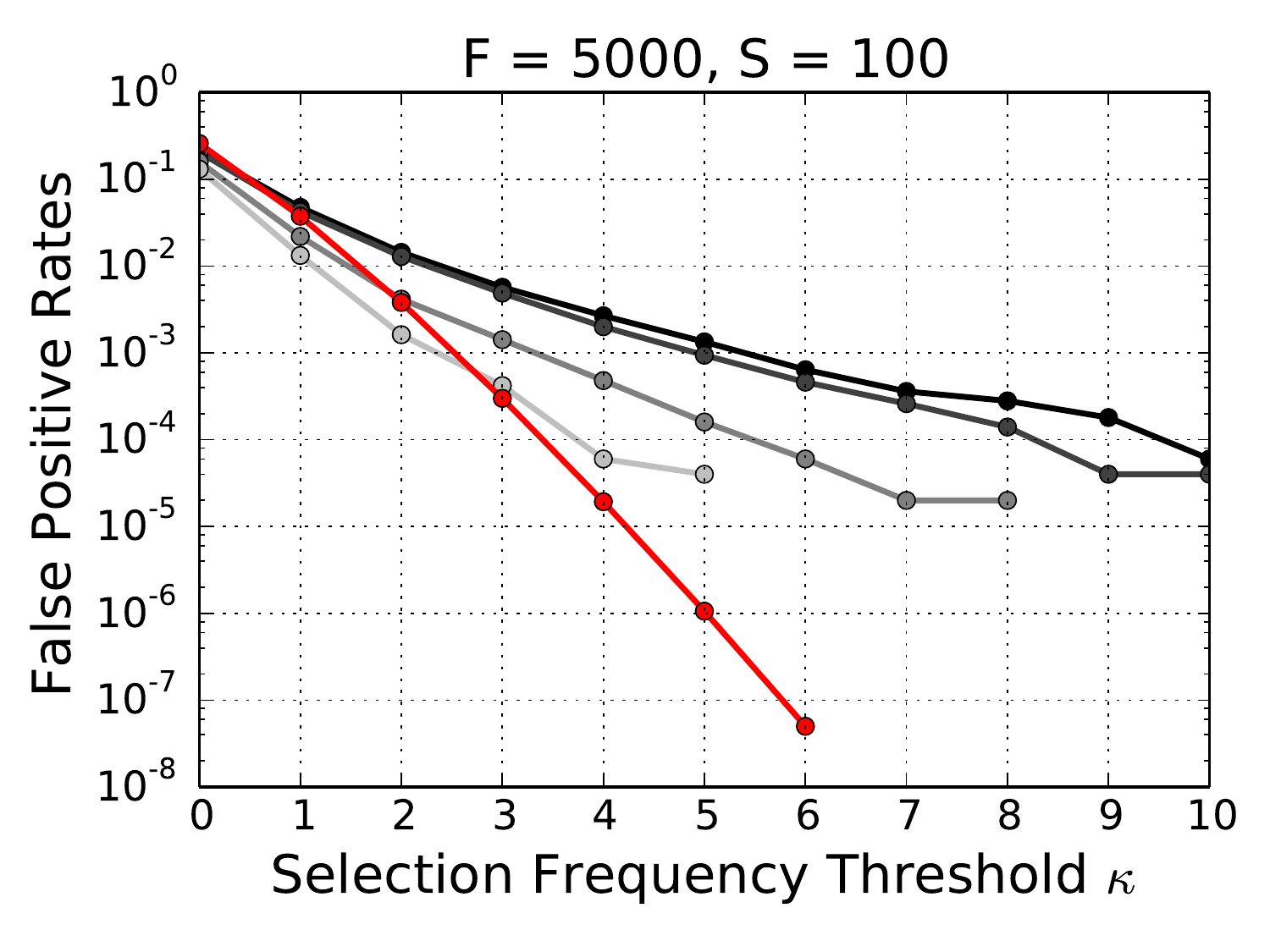} & 
\includegraphics[width=0.28\linewidth]{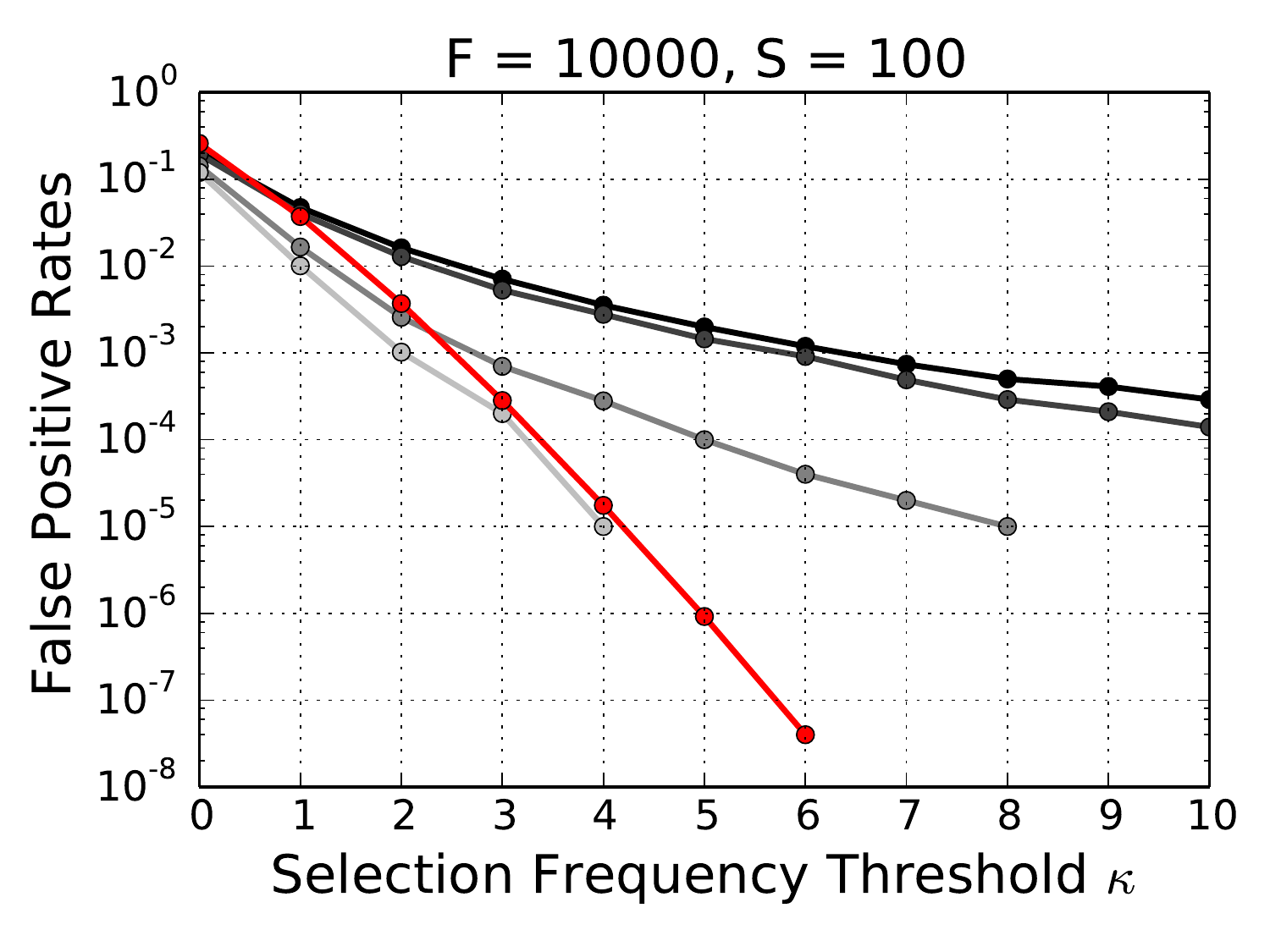} \\
&\includegraphics[width=0.28\linewidth]{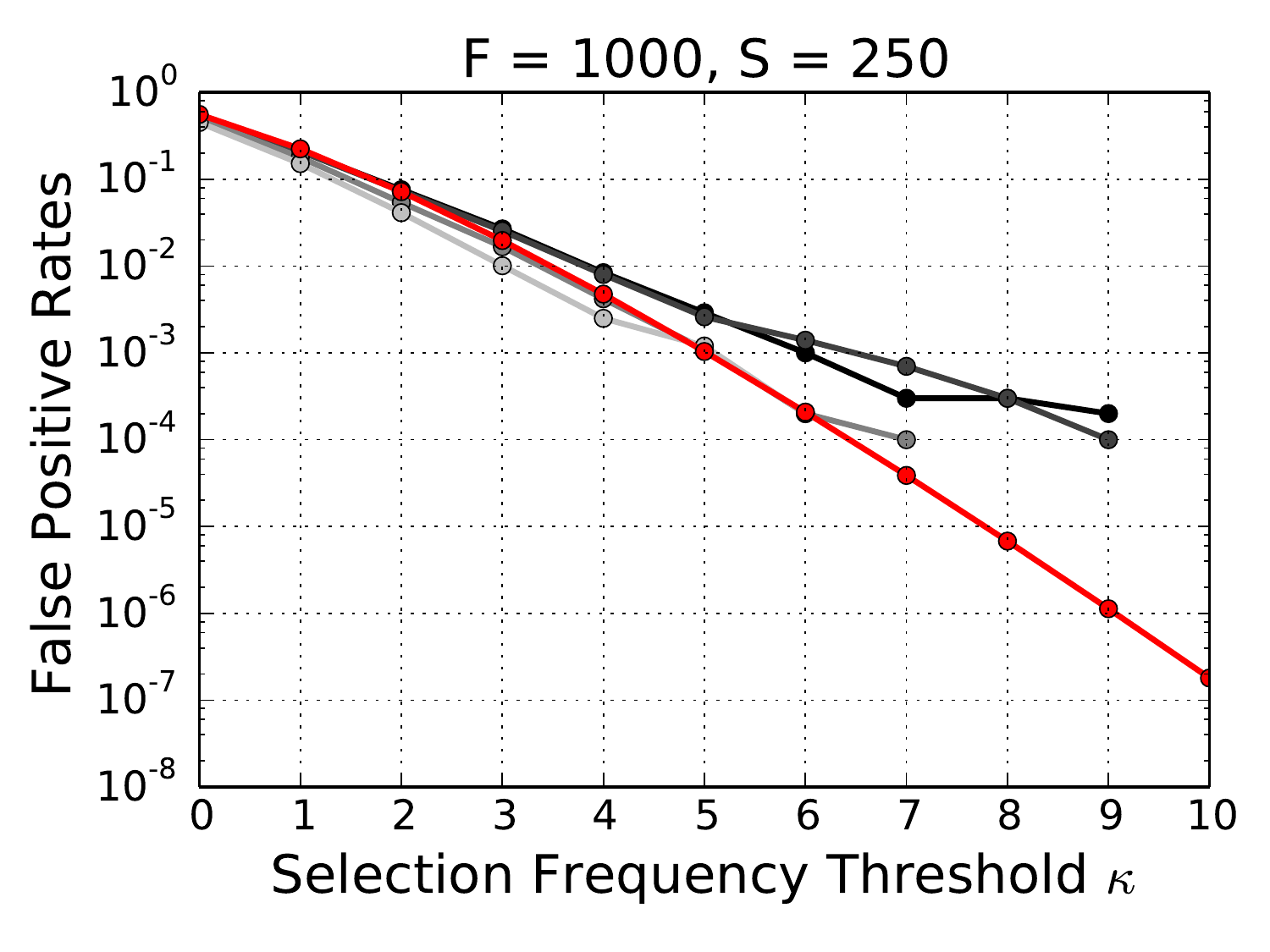} & 
\includegraphics[width=0.28\linewidth]{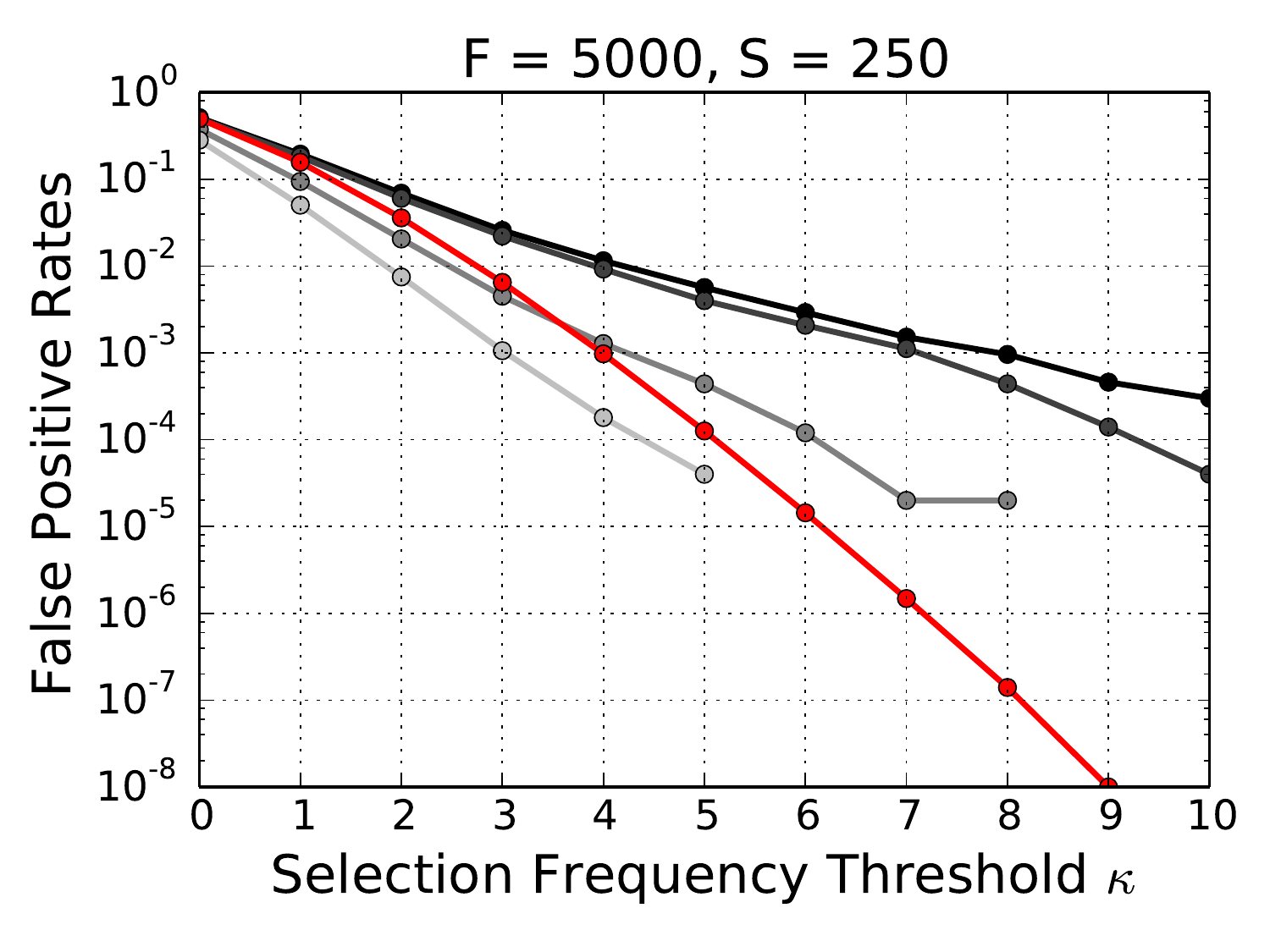} & 
\includegraphics[width=0.28\linewidth]{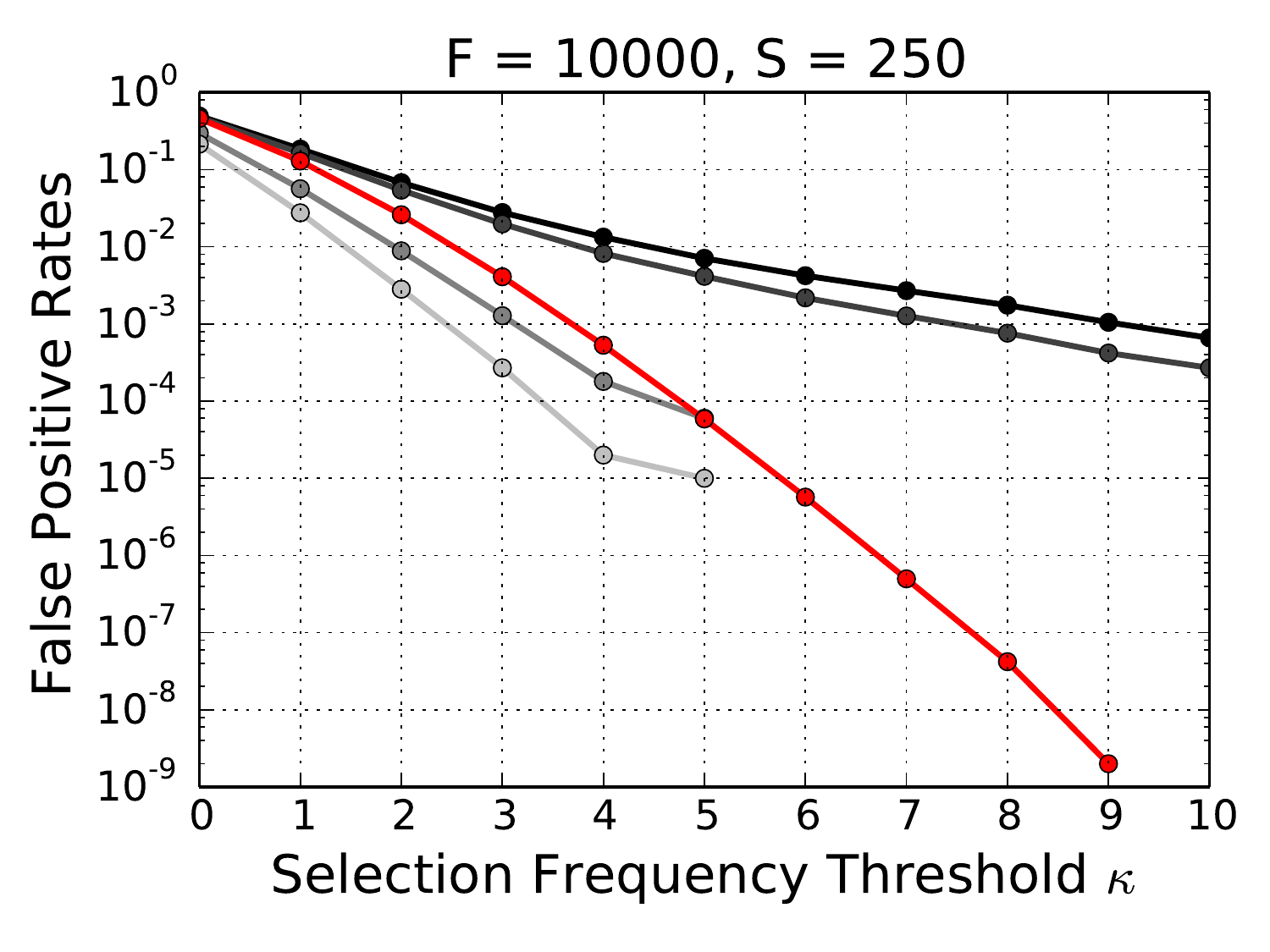}\\
\hline
\end{tabular}
\caption{\label{tab:FPR_predictions}\small High-dimensional problems: Comparing observed false positive rates with models' predictions. Each black curve is the result of averaging 20 simulations. Red curves are the theoretical predictions.}
\end{table}
\subsubsection{High-dimensional Problems: $S << F$}\label{sec:high_dim}
Most of the problems in neuroimaging and biology fall into the category of high-dimensional problems, where there are many more measurements per sample than the number of samples. In this setting, it is more probable to have non-relevant features that manifest strong but spurious statistical relationships with the label. Such relationships naturally threaten the validity of Assumption~\ref{assumption}, since these statistically related non-relevant features will have a higher probability to be selected as optimal. In this section, we present empirical analyses evaluating the proposed method in the high-dimensional case. 
\paragraph{Assessing the theoretical FPR predictions.}
First, we assess the theoretical models by comparing the predicted expected false positive rates with the observed ones for different threshold values. We experiment with different feature dimensions $F$ and sample sizes $S$ keeping the other parameters fixed: $T = F / 10$, $F_n = F / 20$ and the bagging ratio of $1/2$. In addition, we also experiment with having different number of relevant features $N = (0,0.001F,0.01F,0.02F)$, where each relevant feature is correlated to the label with $\rho=0.5$. For each parameter setting we ran 20 experiments, computed the false positive rates for varying thresholds and present the average rate for each threshold. 

Table~\ref{tab:FPR_predictions} presents the experimental results. First of all, as expected we observe that as $F / S$ ratio increases the model predictions diverge from the observed false positive rates for $N=0$. For the same feature dimension increasing sample size makes the model prediction better. Secondly, as predicted presence of relevant features decreases the observed false positive rates. Importantly, for $N=0.01F$ and $N=0.02F$ the models' predictions of false positive rates are fairly close to the observed rates. Even when $F=10k$ and $S=100$ we observe that the predictions and observed rates are very similar for rates higher than $10^{-3}$ for both of the training strategies. These results suggest that the proposed model can indeed be used to determine thresholds that will limit false positive rates. In the next section, we present further analysis on this point. 
\paragraph{Controling False Positive Rates.}
In a real experiment one is interested in determining a threshold that will limit the false positive rate while retaining the relevant features. Here, we present analysis evaluating the proposed method in determining such a threshold. In particular, we simulate the situation where the user sets a desired false positive rate limit $\alpha$ and determines the corresponding threshold $\kappa$ using Equation~\ref{eqn:thresholdII}. We then compute the real false positive rates for this $\kappa$ and compare with the desired $\alpha$. We also compute the false negative rates to evaluate the statistical power of the threshold. We use the same simulations as in the previous part and experiment with $\alpha=0.01,0.001$.

Tables~\ref{tab:StrategyI} and~\ref{tab:StrategyII} summarize the results for training strategy I and II respectively. In graphs on the left column, each curve plots the observed false positive rates for different feature dimensions for a specific sample size averaged over 20 simulations. Different curves correspond to different sample sizes and the color coding is displayed in the legends. In graphs on the right column, the graphs plot the false negative rates, i.e. relevant features that fall below the threshold and gets marked as non-relevant, in the same fashion as false positive rate graphs. In the titles the user set $\alpha$ values and the amount of relevant features, given as a percentage of the entire feature set, are given. For $N=0$ case there is no false negative rate plot. \\

\begin{table}
\small
\centering
\begin{tabular}{|c|c|c|}
\hline
& False Positive Rates & False Negative Rates\\
\hline
\begin{sideways}$N=0$\end{sideways} &
\includegraphics[width=0.37\linewidth]{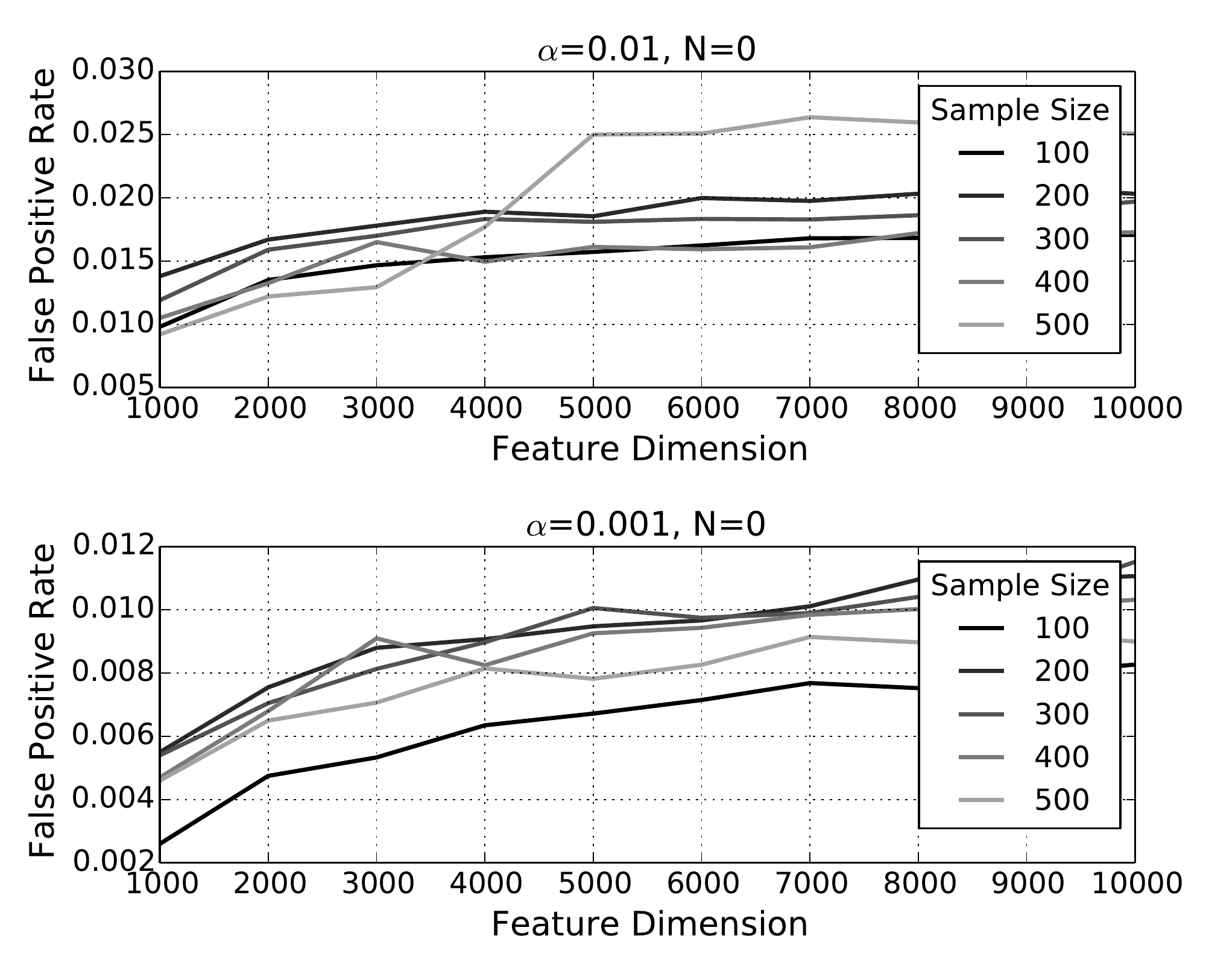}& N/A \\
\hline
\begin{sideways}$N=0.1\%$\end{sideways} &
\includegraphics[width=0.37\linewidth]{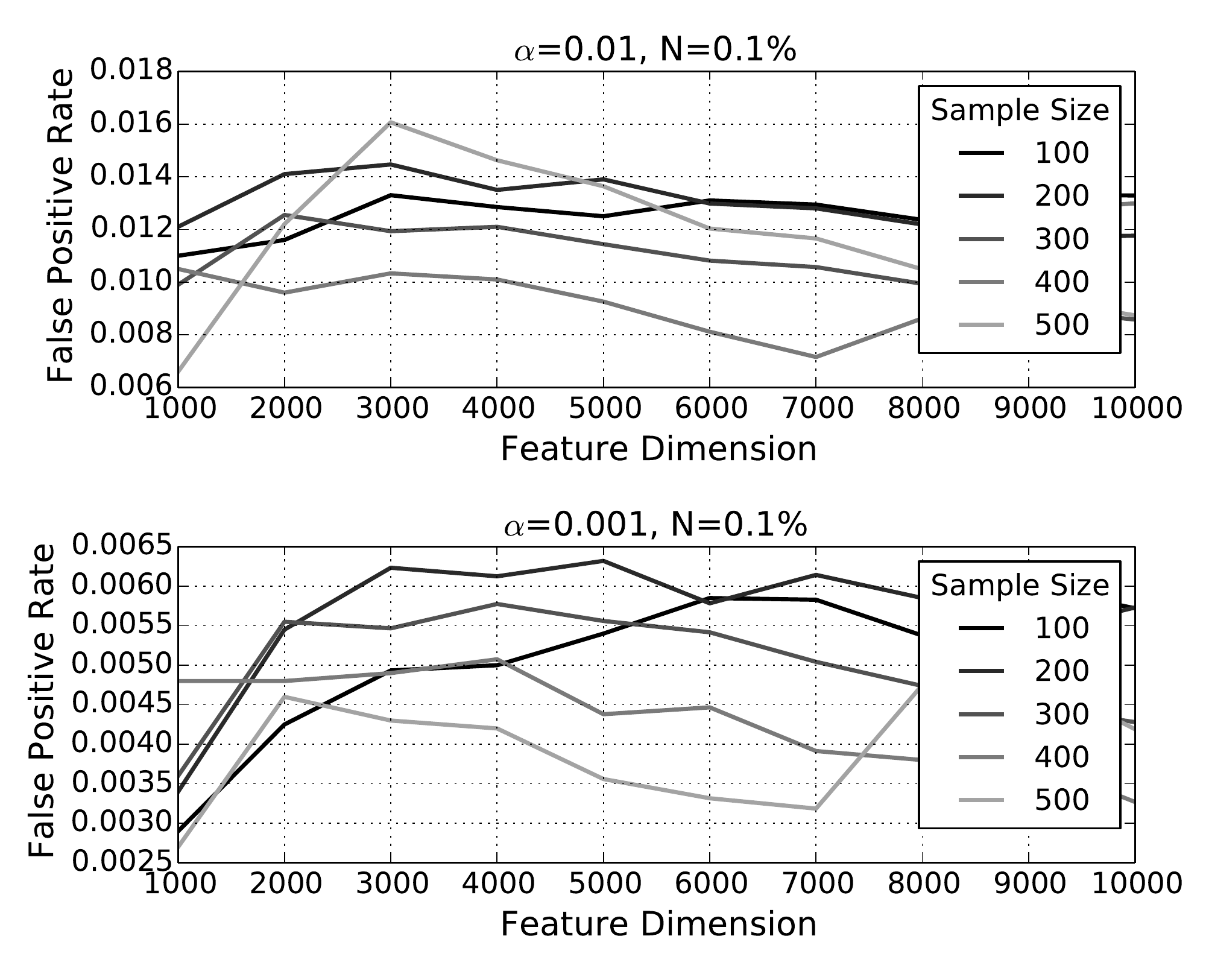}&
\includegraphics[width=0.37\linewidth]{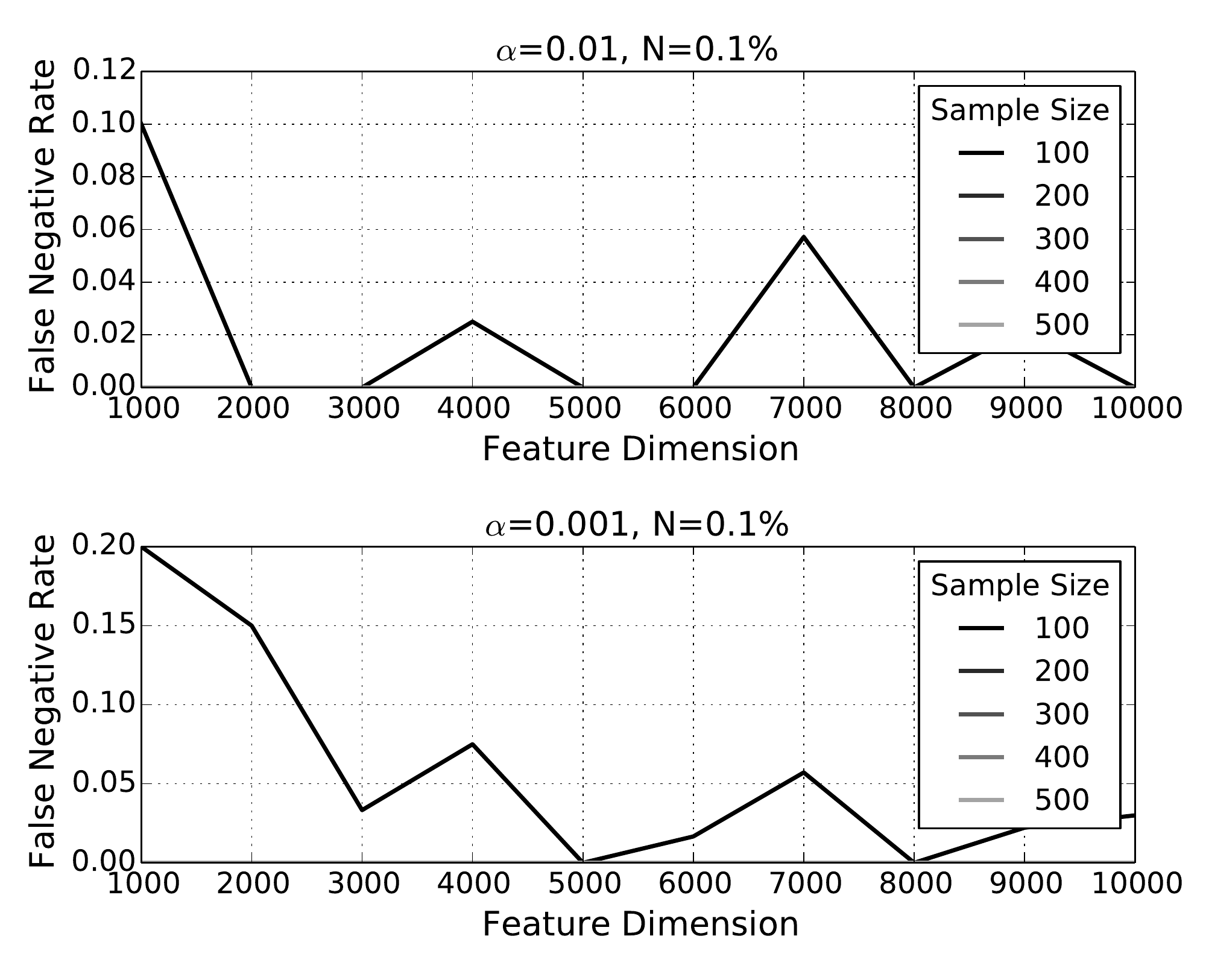}\\
\hline
\begin{sideways}$N=1\%$\end{sideways} &
\includegraphics[width=0.37\linewidth]{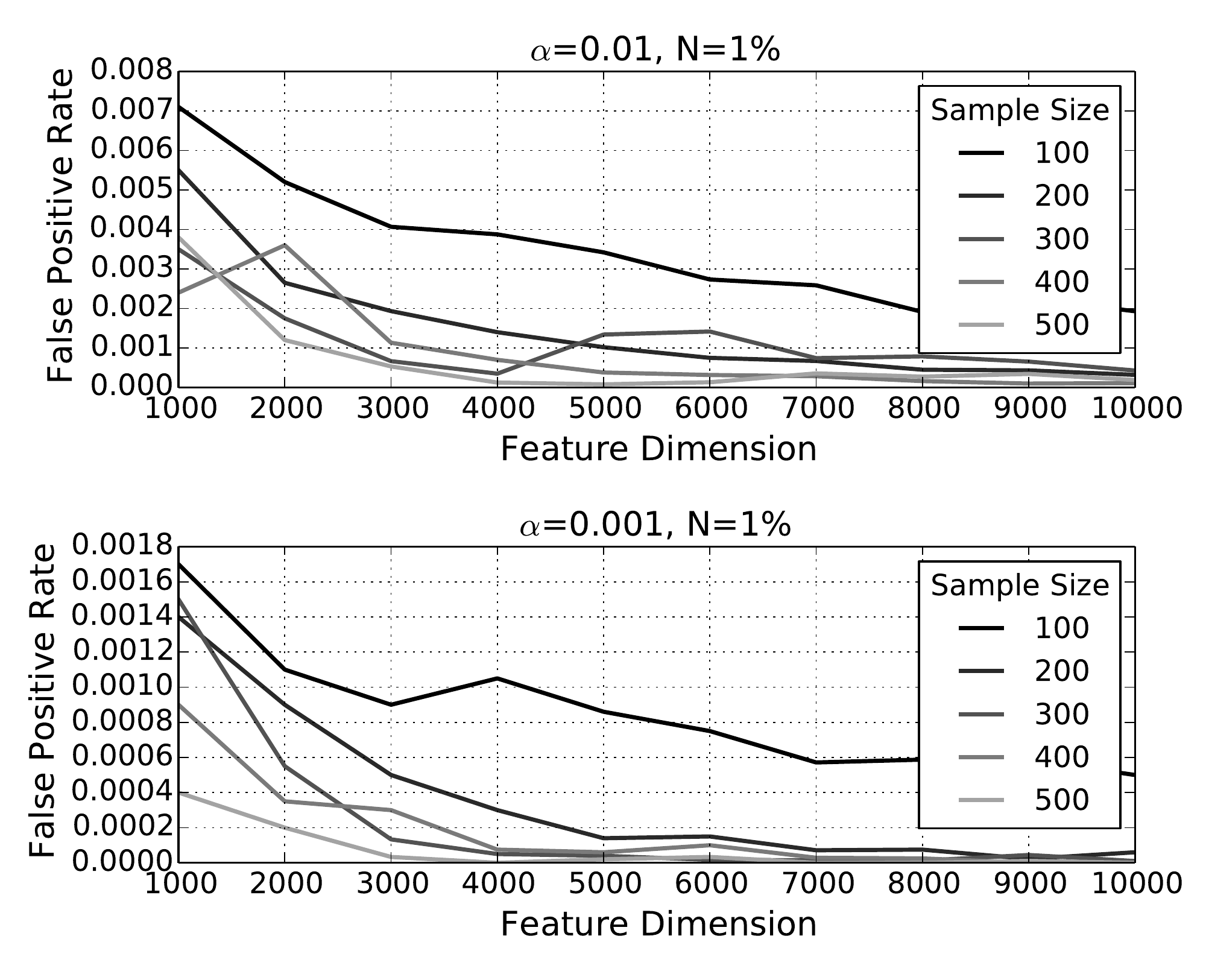}&
\includegraphics[width=0.37\linewidth]{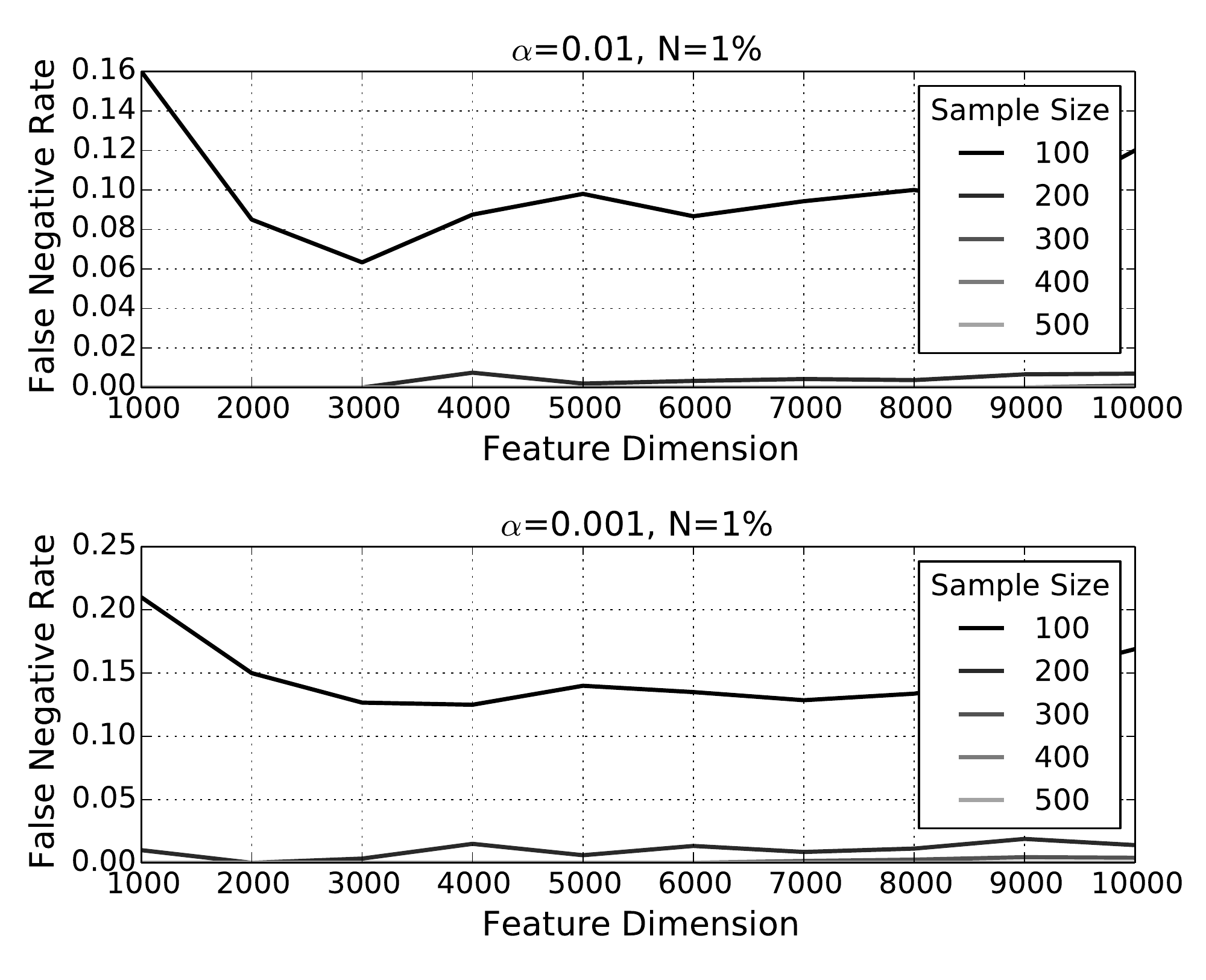}\\
\hline
\begin{sideways}$N=2\%$\end{sideways} &
\includegraphics[width=0.37\linewidth]{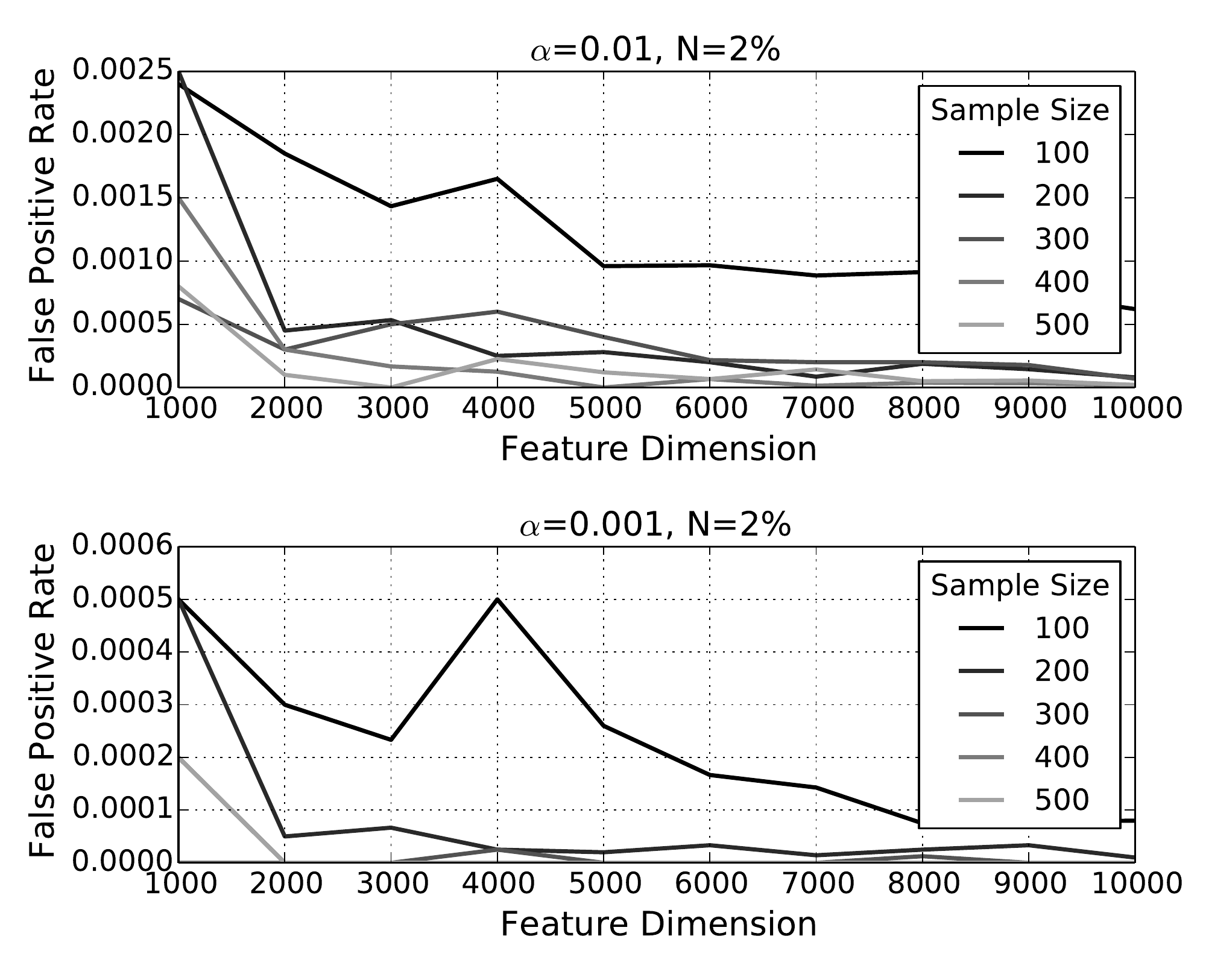}&
\includegraphics[width=0.37\linewidth]{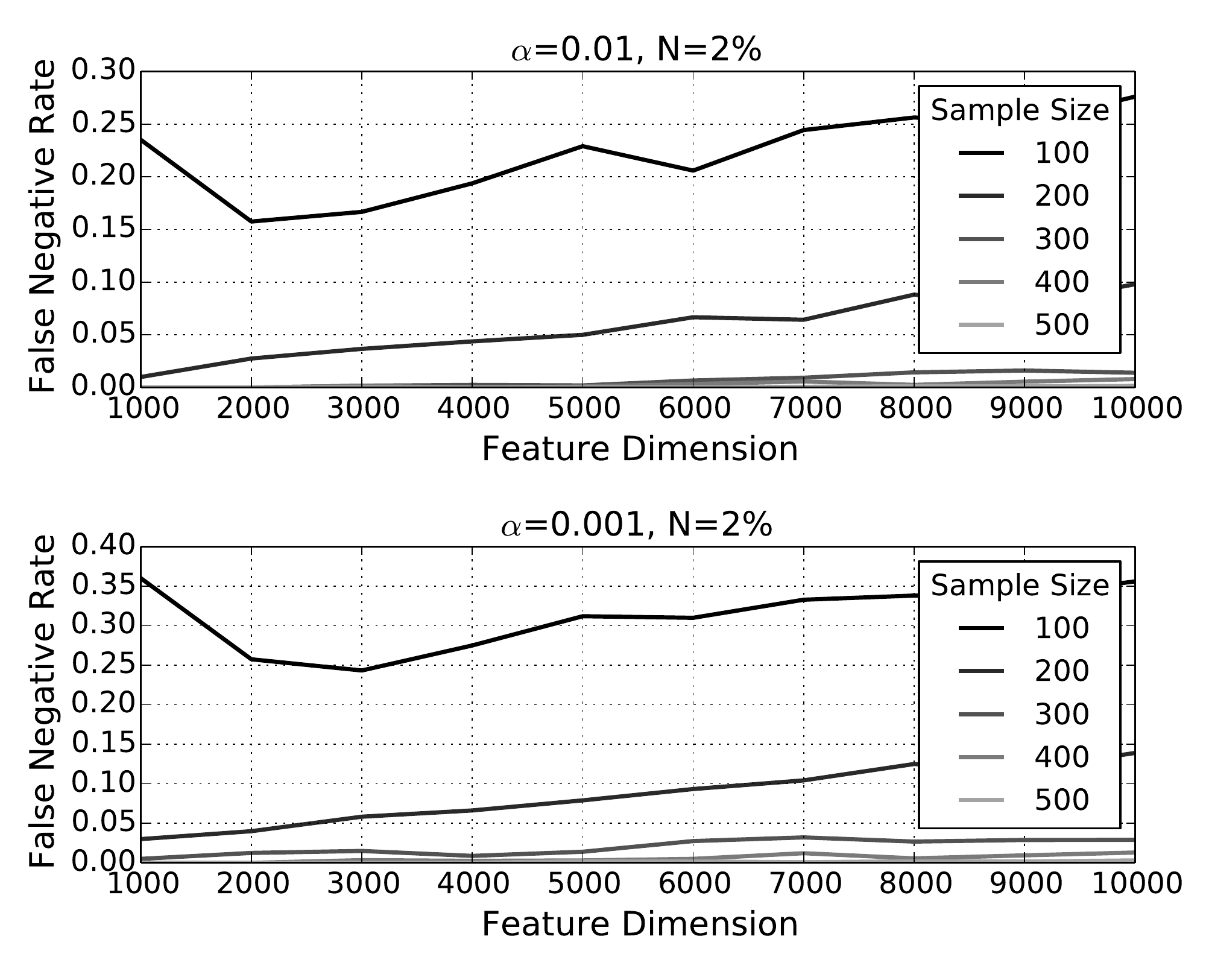}\\
\hline
\end{tabular}
\caption{\label{tab:StrategyI}\small Evaluating the proposed method in determining a good threshold for selection frequency that can limit the false positive rates at the desired level $\alpha$ while keeping the false negative rates low for training strategy I. Left column shows the graphs plotting observed false positive rates for different feature dimensions, sample sizes, proportion of relevant features and different desired false positive rates. The right columns shows the corresponding false negative rates.}
\end{table}
\begin{table}
\small
\centering
\begin{tabular}{|c|c|c|}
\hline
& False Positive Rates & False Negative Rates\\
\hline
\begin{sideways}$N=0$\end{sideways} &
\includegraphics[width=0.37\linewidth]{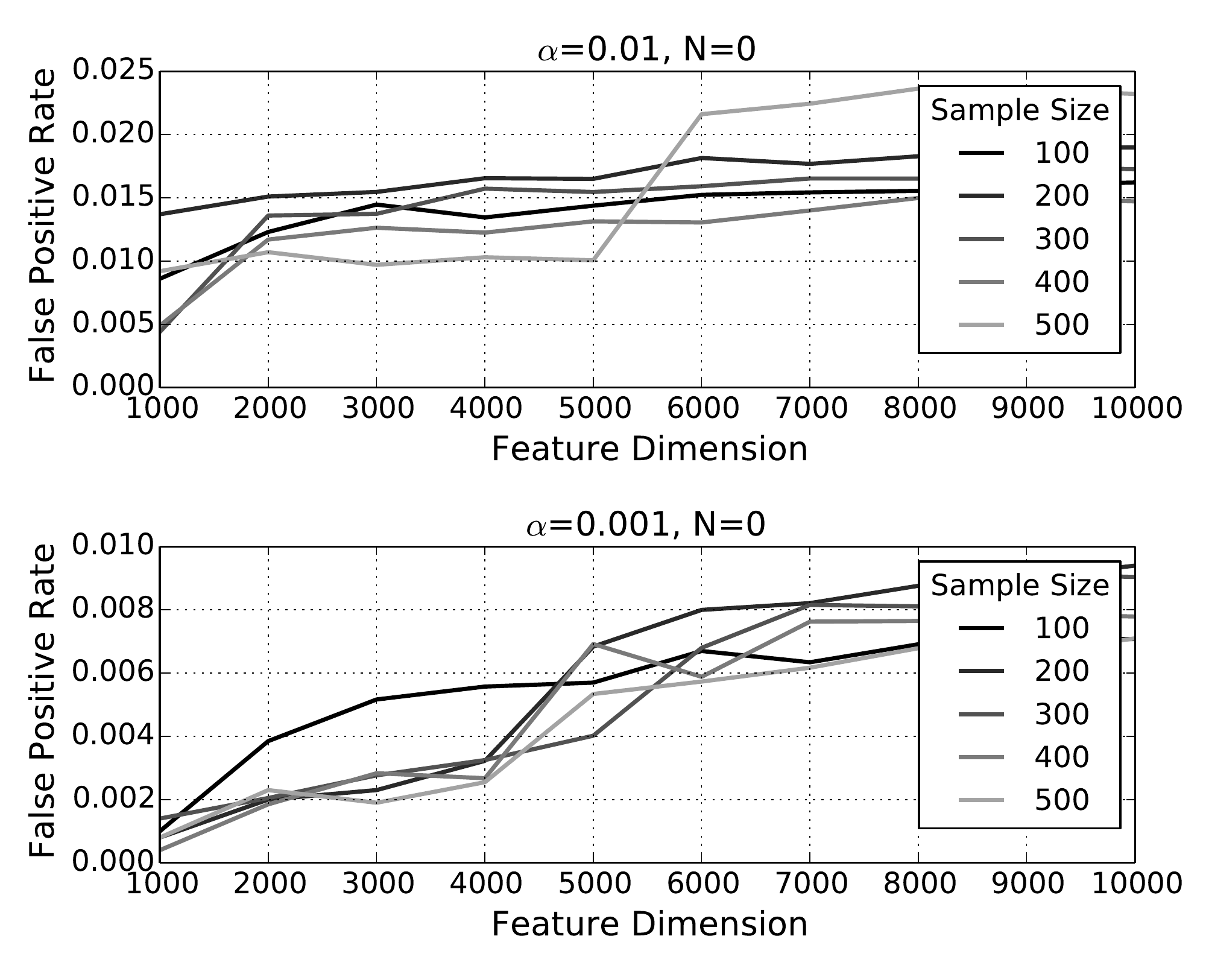}& N/A\\
\hline
\begin{sideways}$N=0.1\%$\end{sideways} &
\includegraphics[width=0.37\linewidth]{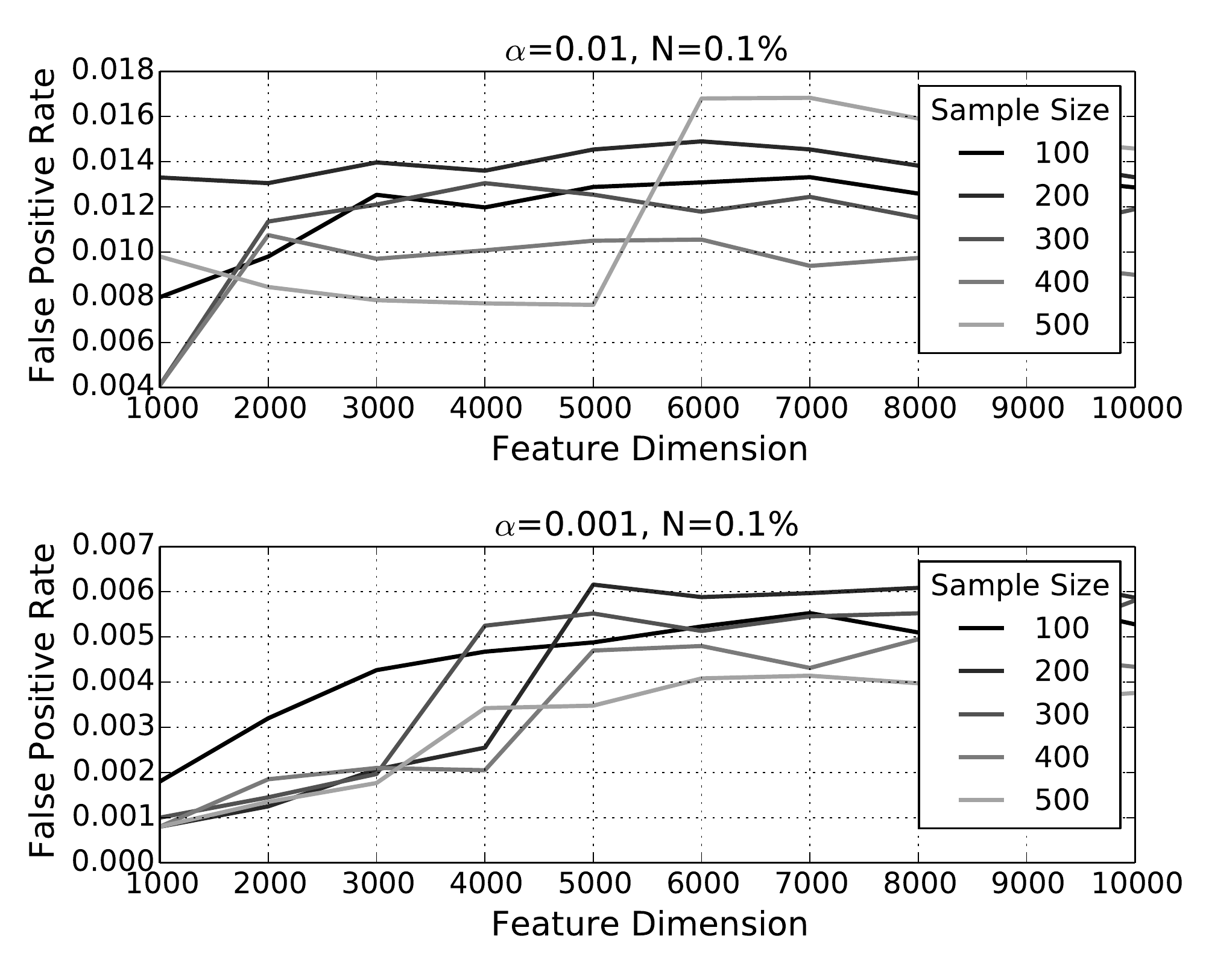}&
\includegraphics[width=0.37\linewidth]{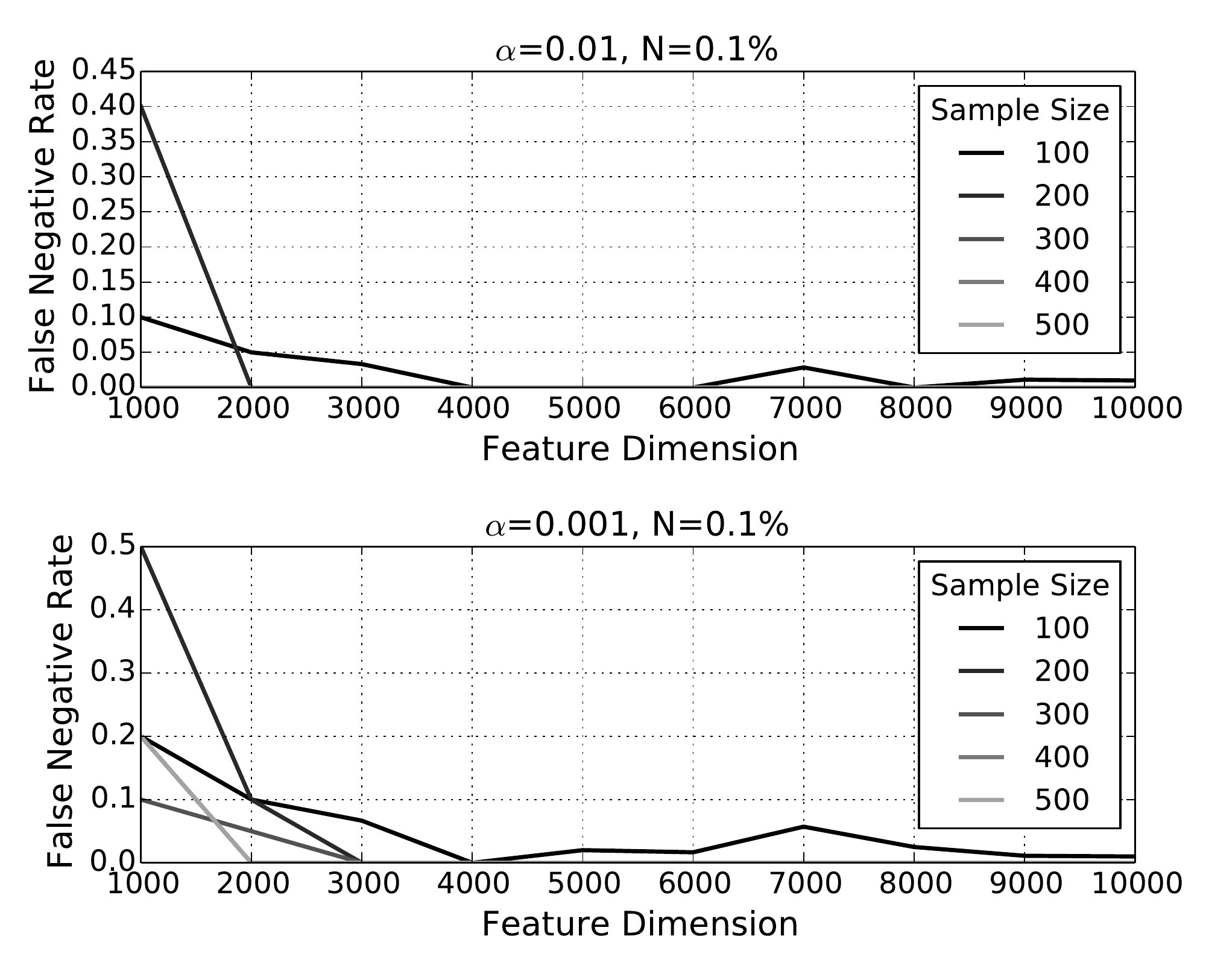}\\
\hline
\begin{sideways}$N=1\%$\end{sideways}&
\includegraphics[width=0.37\linewidth]{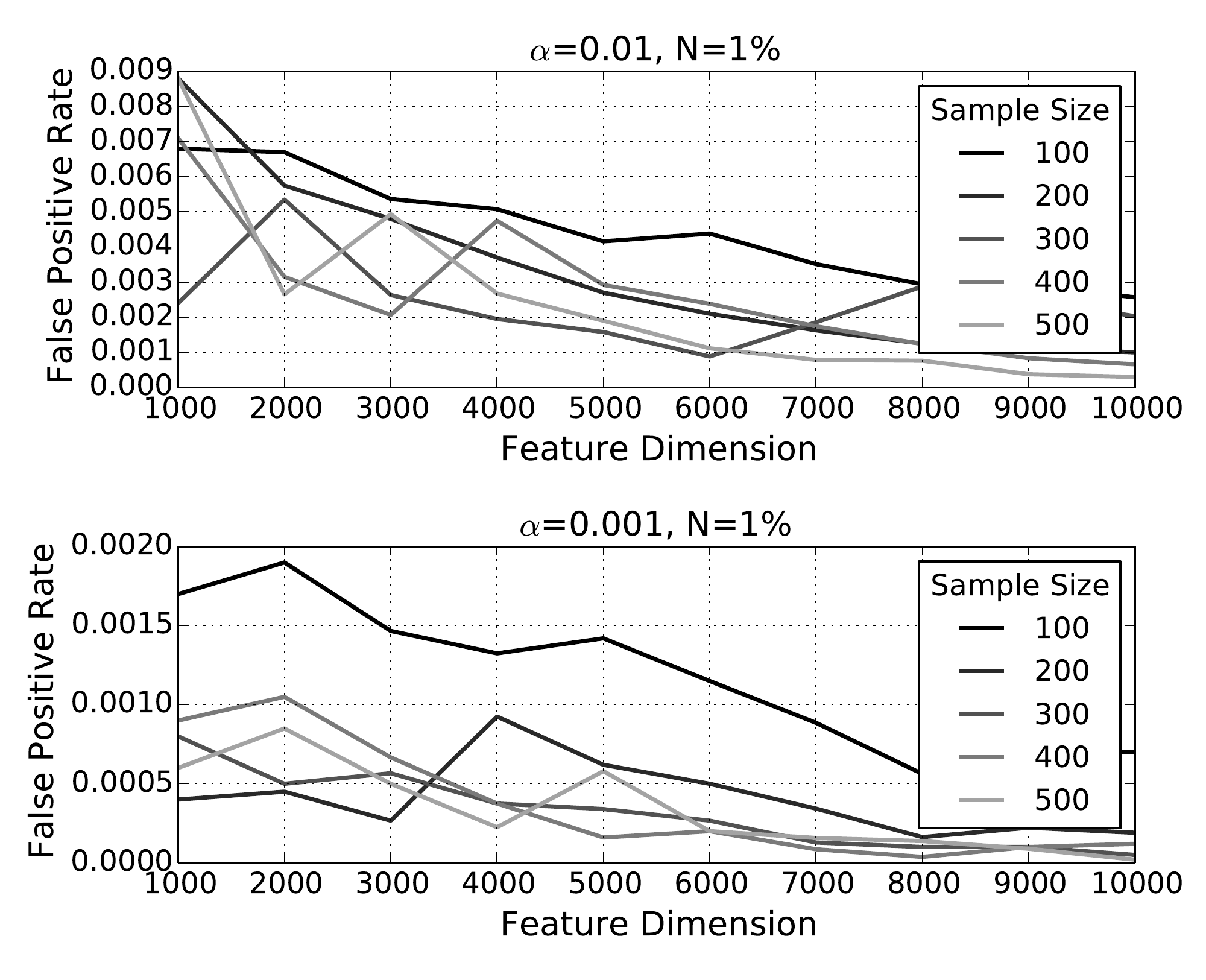}&
\includegraphics[width=0.37\linewidth]{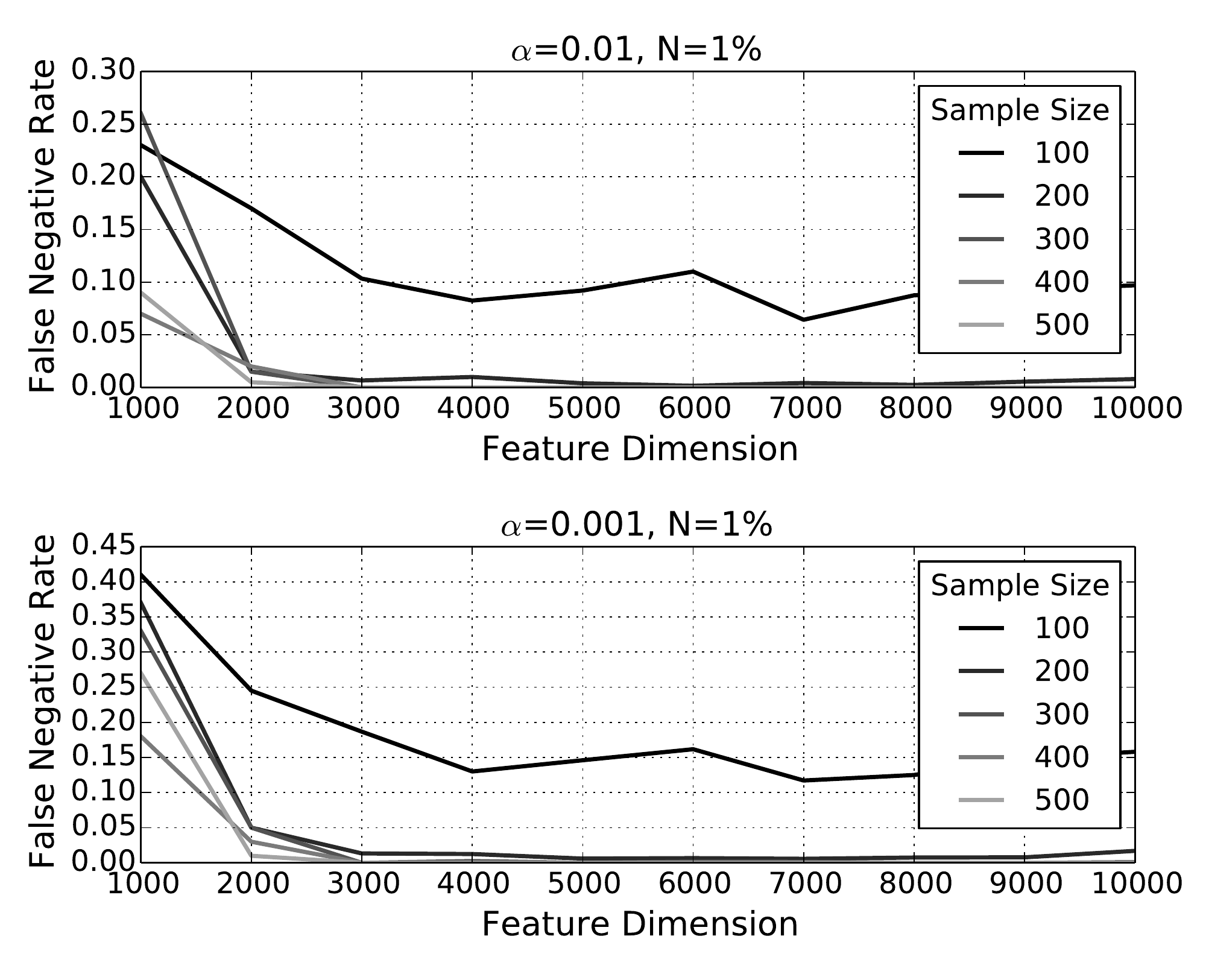}\\
\hline
\begin{sideways}$N=2\%$\end{sideways}&
\includegraphics[width=0.37\linewidth]{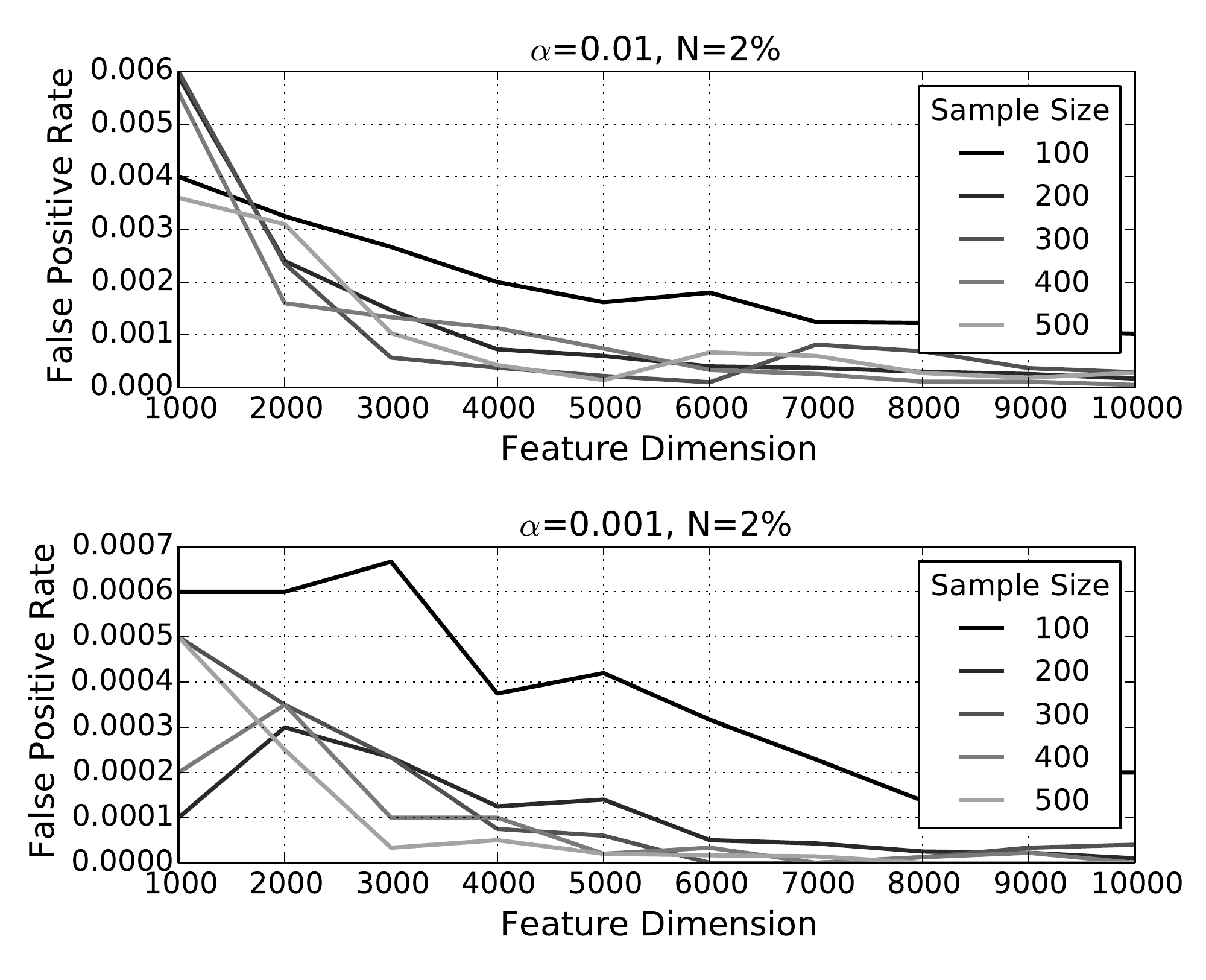}&
\includegraphics[width=0.37\linewidth]{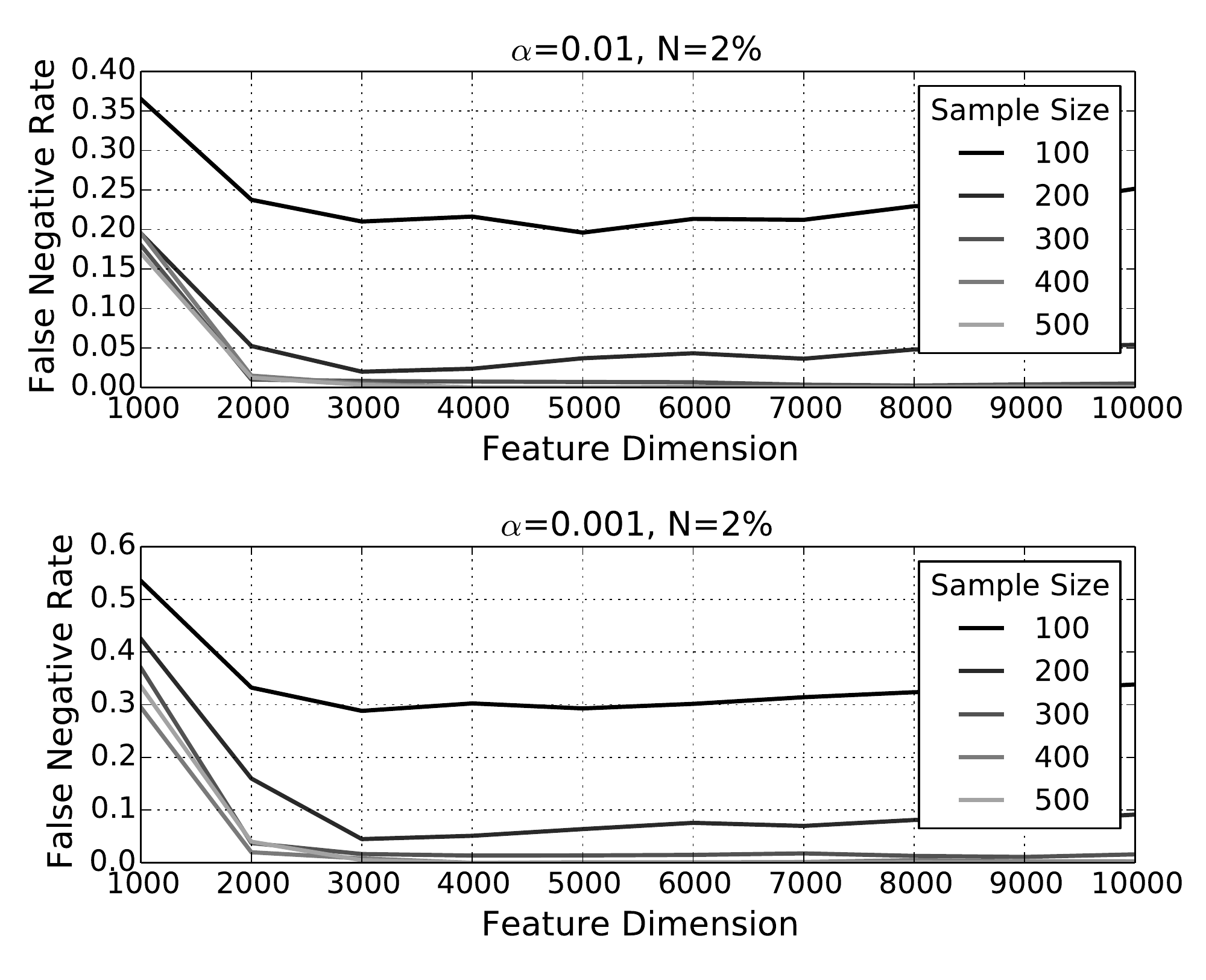}\\
\hline
\end{tabular}
\caption{\label{tab:StrategyII}\small Same plots as in Table~\ref{tab:StrategyI} but for the training strategy II.}
\end{table}
\noindent{\em Observations on False Positive Rates}: For the $N=0$ case the observed false positive rates are higher than the desired $\alpha$ value and they increase with increasing feature dimensions, which is expected. There is no consistent trend with changing sample size. The reason for this is when the sample size increases on the one hand the spurious correlations become scarcer but on the other hand the average number of nodes per tree, $K$, increases, which means more node optimizations and higher false positive rates. These conflicting factors seem to cancel out on average. For $N=0.1\%$ the false positive rates are still higher than the desired $\alpha$ but they are much lower than the $N=0$ case. For $\alpha=0.01$ we see that observed rates are in the vicinity of the desired value.  Changing sample size does not seem to have a consistent effect on the false positive rates in this case neither. 

For $N=1\%$ and $N=2\%$ we see a different behavior. First of all, for $\alpha=0.01$ all observed false positive rates are lower than the desired value for both $N$'s. For $\alpha=0.001$, when $N=2\%$ again all observed values are lower than $\alpha$ and when $N=1\%$ the majority is lower. The observed false positive rates decrease with increasing feature dimensions and this is due to the increase in the absolute number of relevant features. False positive rates also seem to decrease with increasing sample size and this we believe is due to the increasing statistical strength of relevant features, which leads to lower selection counts for non-relevant features. \\

\noindent{\em Observations on False Negative Rates}: First of all, and most importantly, the graphs show that false negative rates are fairly low for $S\geq200$ and $F\geq2000$, which demonstrates that the thresholds determined by the models are indeed able to split between relevant and non-relevant features. Secondly, the false negative rates are decreasing with increasing sample size, so, the proposed method does not share the same drawbacks as the statistical test Breiman and Cutler proposed in~\cite{Breiman2008}. 

We also see some trends with respect to changing $\alpha$, $N$ and $F$. First, as one would expect the false negative rates increase with decreasing $\alpha$. Second, the rates are increasing slightly with increasing $N$ and $F$. This we believe is due to the competition between relevant features. As the number of relevant features increases the ones that manifest lower empirical correlations with the label will be selected less often and might not beat the threshold.

One aspect we have not yet analyzed is the influence of the strength of the correlation between relevant features and the label on the false negative rates. As strength varies we expect false negative rates to change as well and in the next section we present the power analysis that focuses on this relationship. 
\paragraph{Power Analysis.} In this part we analyze the dependence of false negative and positive rates to the strength of the statistical relationship between relevant features and the label. To do this we experiment with different values of $\rho$ in Equation~\ref{eqn:relevant}, which is the theoretical Pearson's correlation coefficient between a relevant feature and the label. For the experiments we fix the sample size, the feature dimension and the number of features per node to $(S,F,F_n) = (150,5000,250)$. We then experiment with $N=[25, 50]$, $\rho=[0.1,0.2,\dots,0.9]$ and $T=[250,500,\dots,1750]$. For each parameter setting we train a forest, determine the selection frequency thresholds for $\alpha=0.01$ and $0.001$, and compute false negative and positive rates. We repeat the same experiment 20 times and present the average results in Table~\ref{tab:power_analysis}.

Each quadrant of Table~\ref{tab:power_analysis} has four graphs plotting the {\em true positive rates}, which is simply $1 - $ false negative rates, and false positive rates versus relevance strength parameter $\rho$ for $\alpha=0.01$ and $0.001$. Each graph plots the results for different forests composed of different number of trees. 
\begin{table}[!htb]
\centering
\small
\begin{tabular}{|c|cc|cc|}
\hline
& \multicolumn{2}{c|}{$N=25\ (0.5\%)$} & \multicolumn{2}{c|}{$N=50\ (1.0\%)$}\\
\hline
\multirow{2}{*}{\begin{sideways}Strategy I\end{sideways}} &
\includegraphics[width=0.21\linewidth]{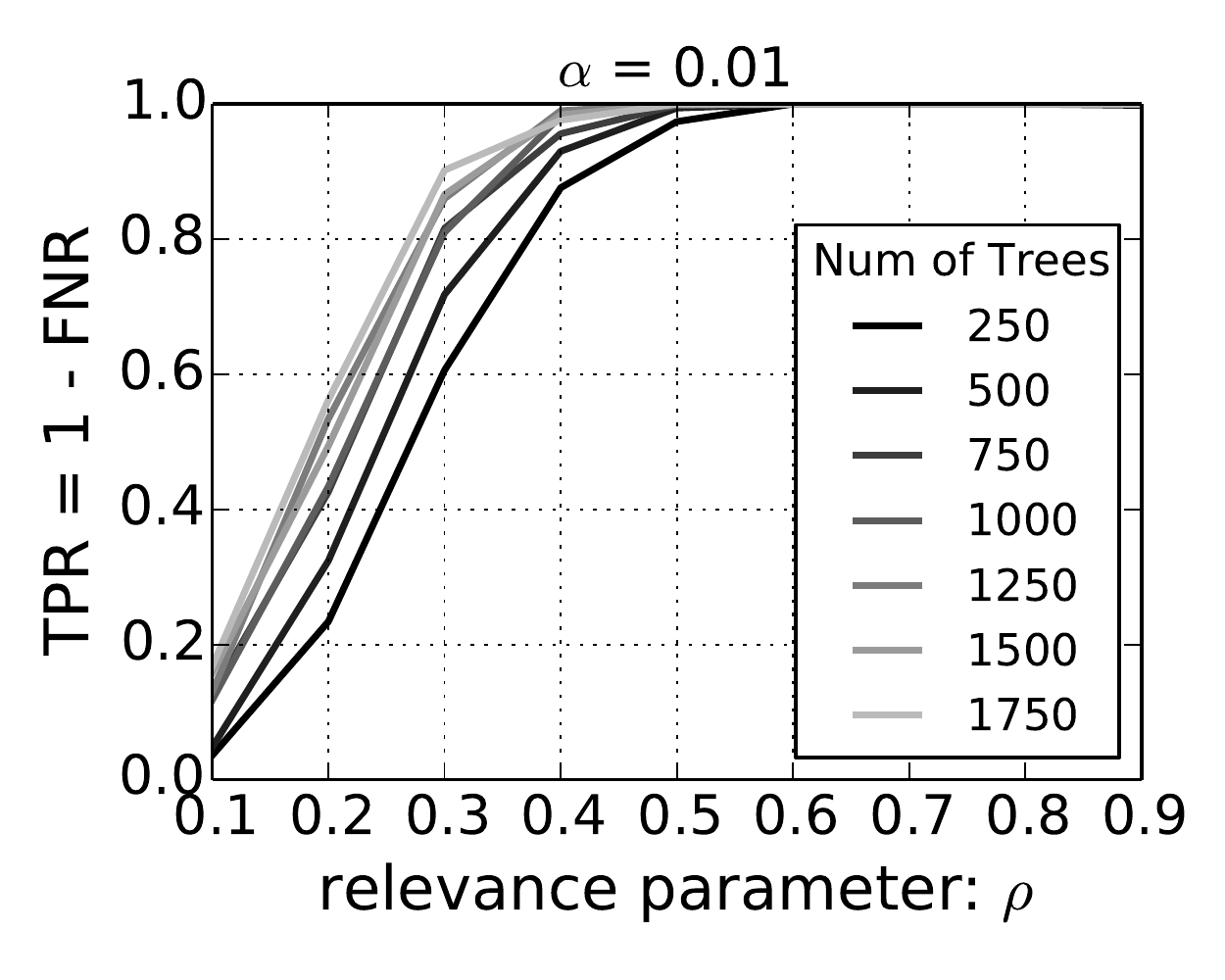} &
\includegraphics[width=0.21\linewidth]{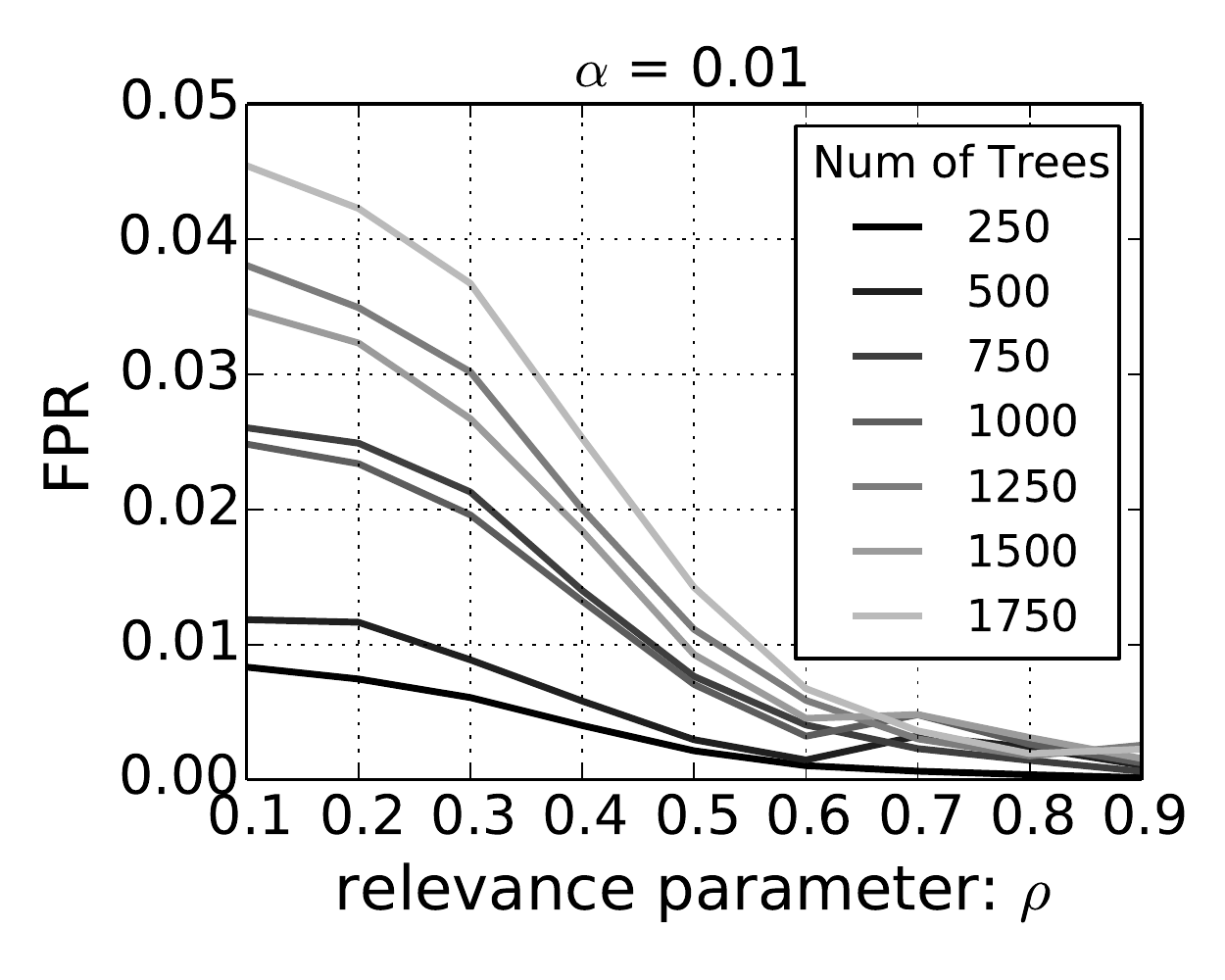} &
\includegraphics[width=0.21\linewidth]{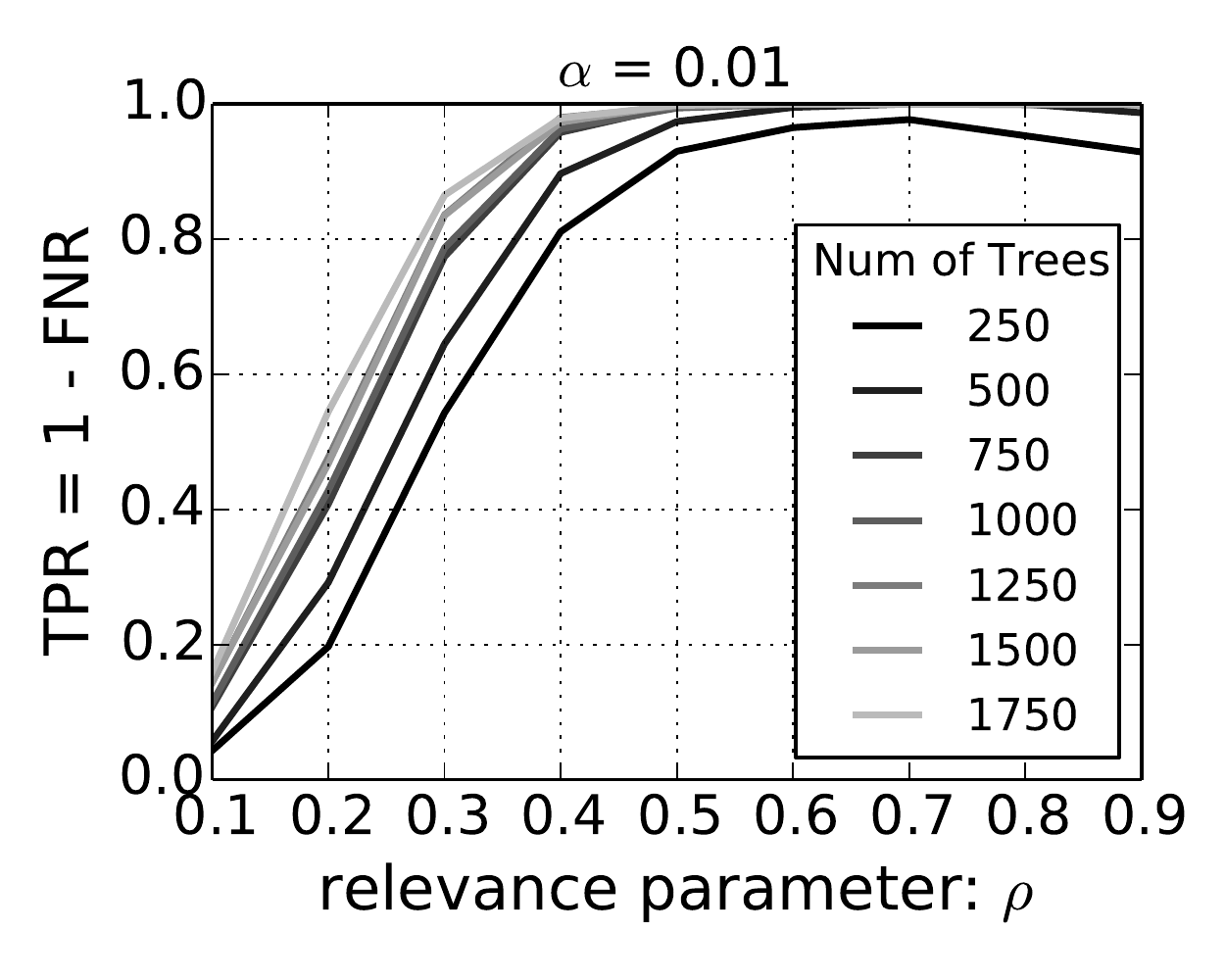} &
\includegraphics[width=0.21\linewidth]{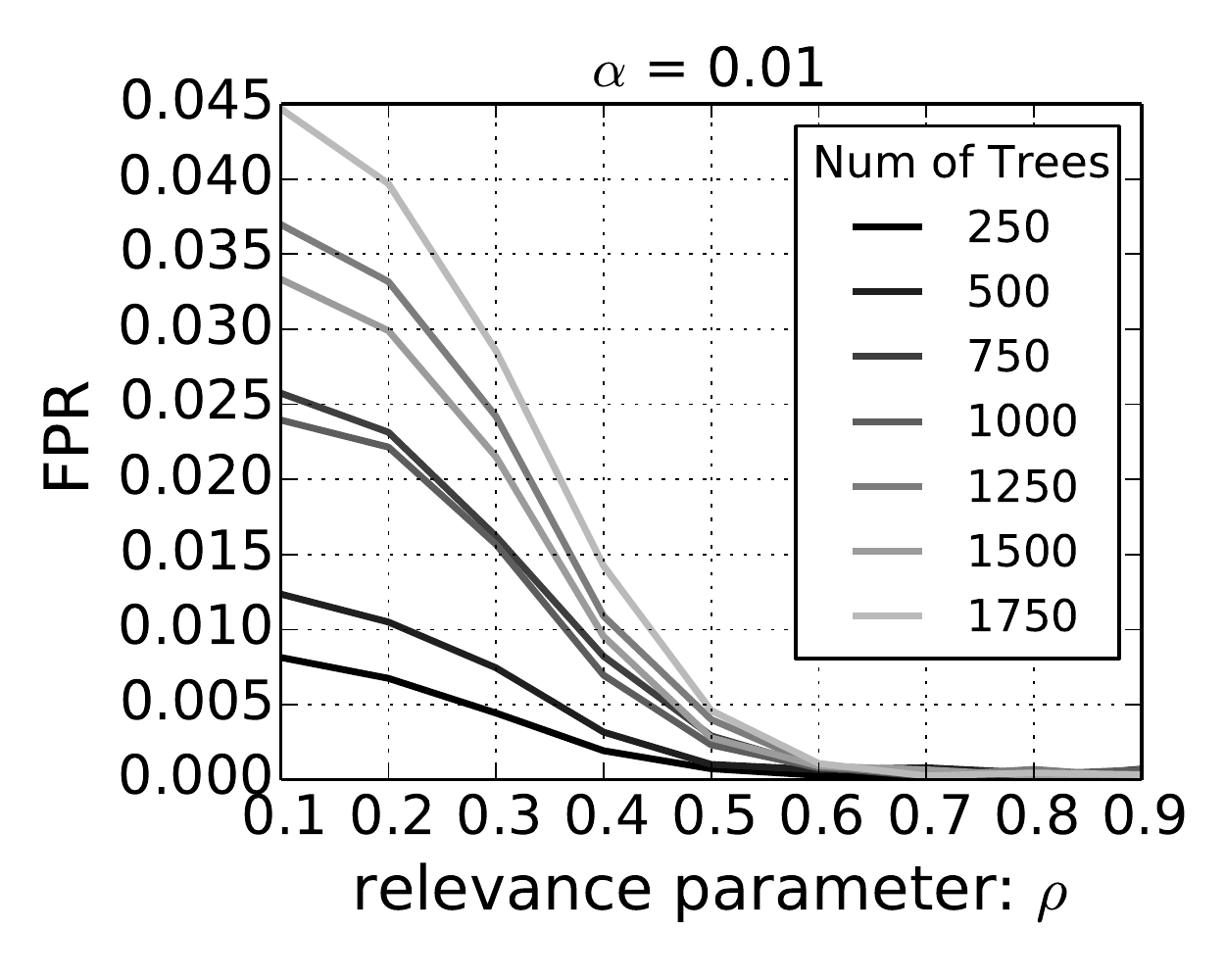} \\
&
\includegraphics[width=0.21\linewidth]{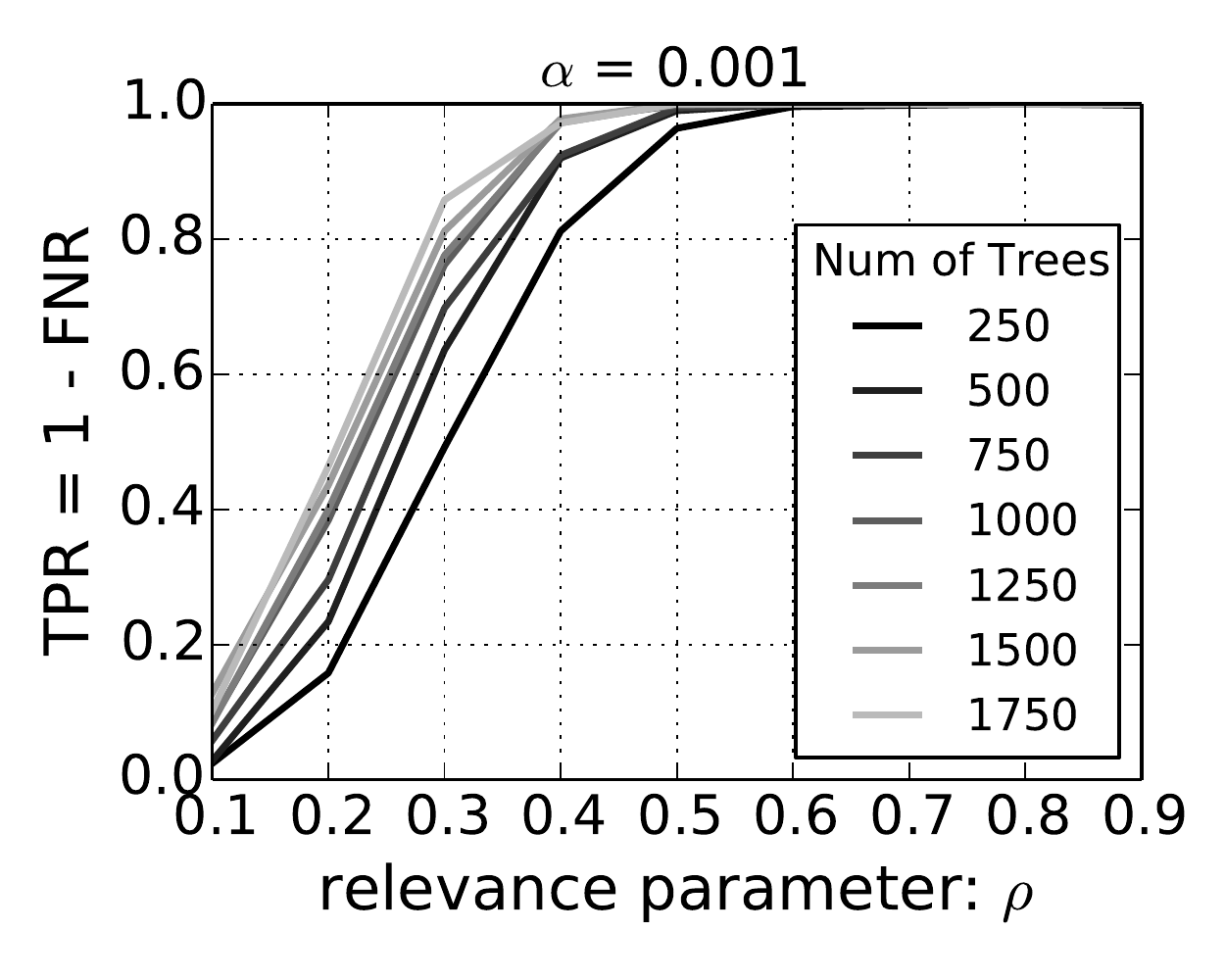} &
\includegraphics[width=0.21\linewidth]{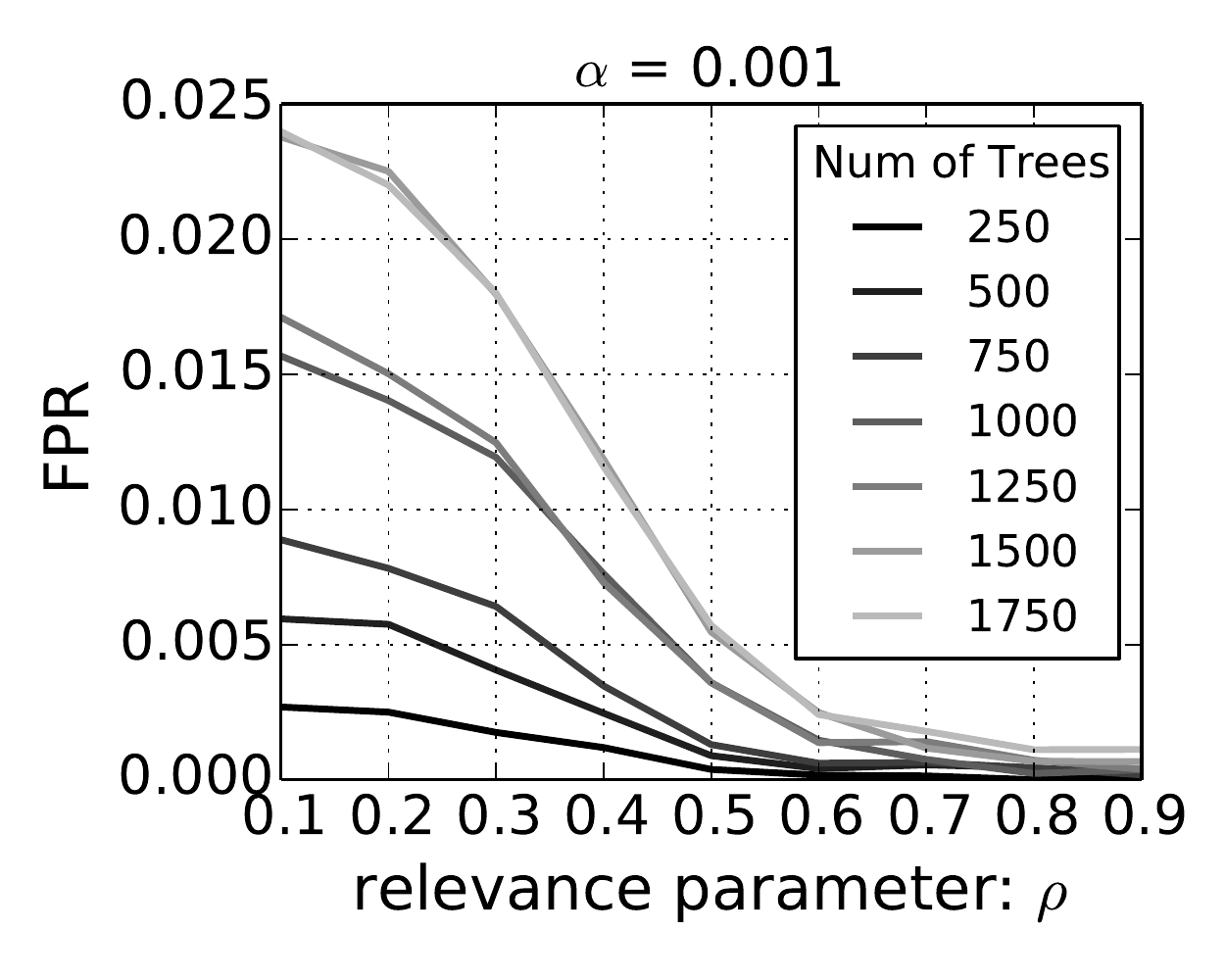} &
\includegraphics[width=0.21\linewidth]{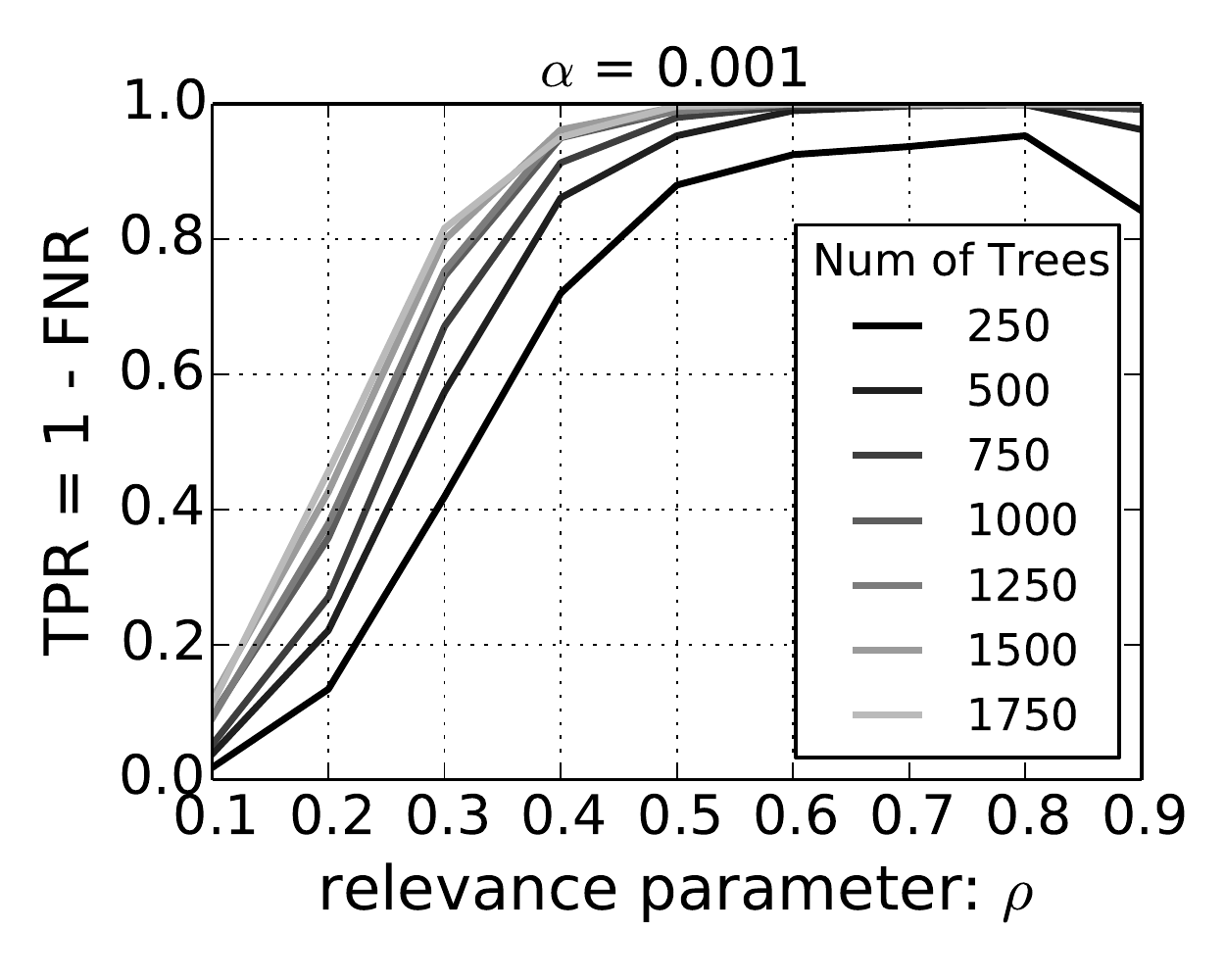} &
\includegraphics[width=0.21\linewidth]{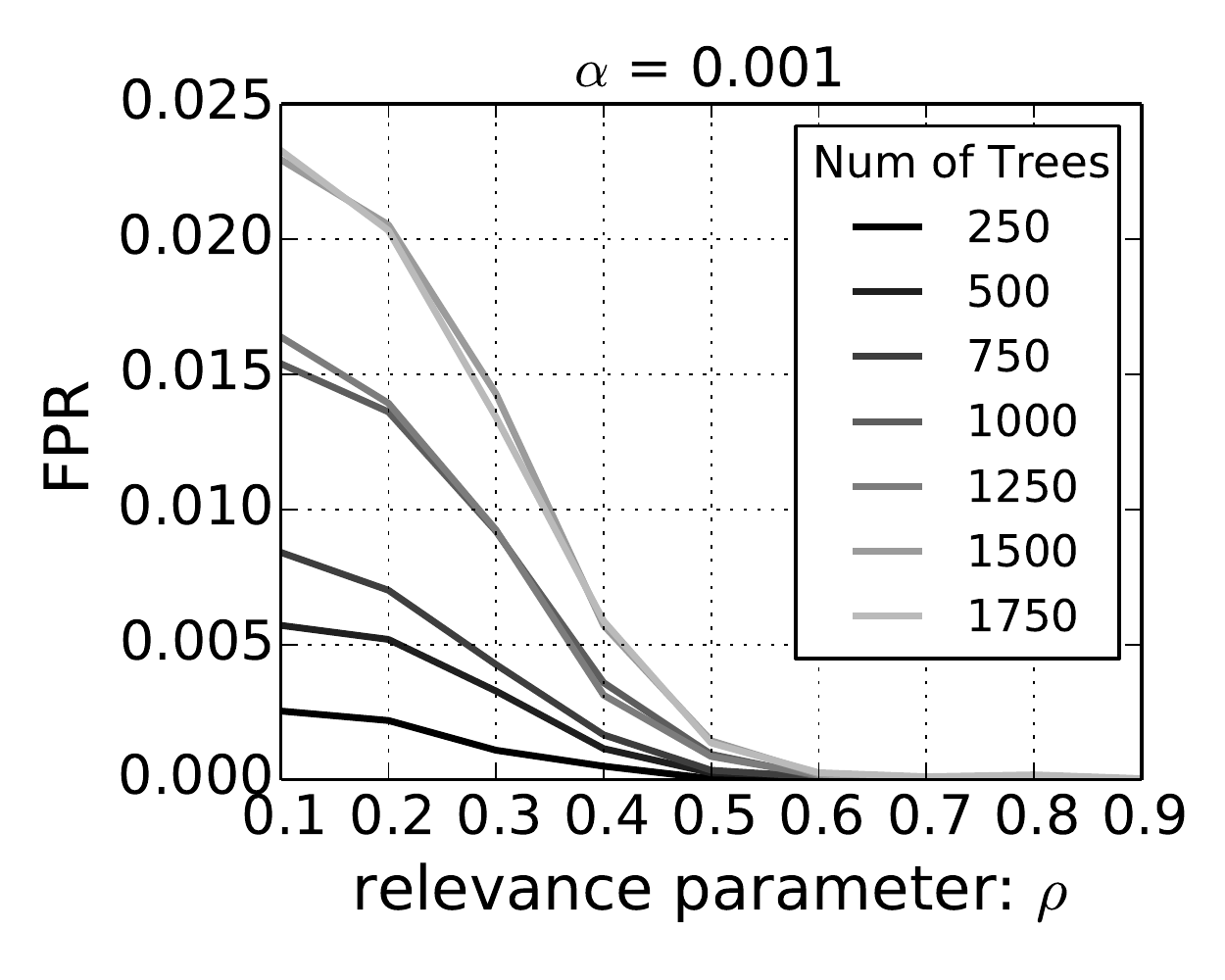} \\
\hline
\multirow{2}{*}{\begin{sideways}Strategy II\end{sideways}} &
\includegraphics[width=0.21\linewidth]{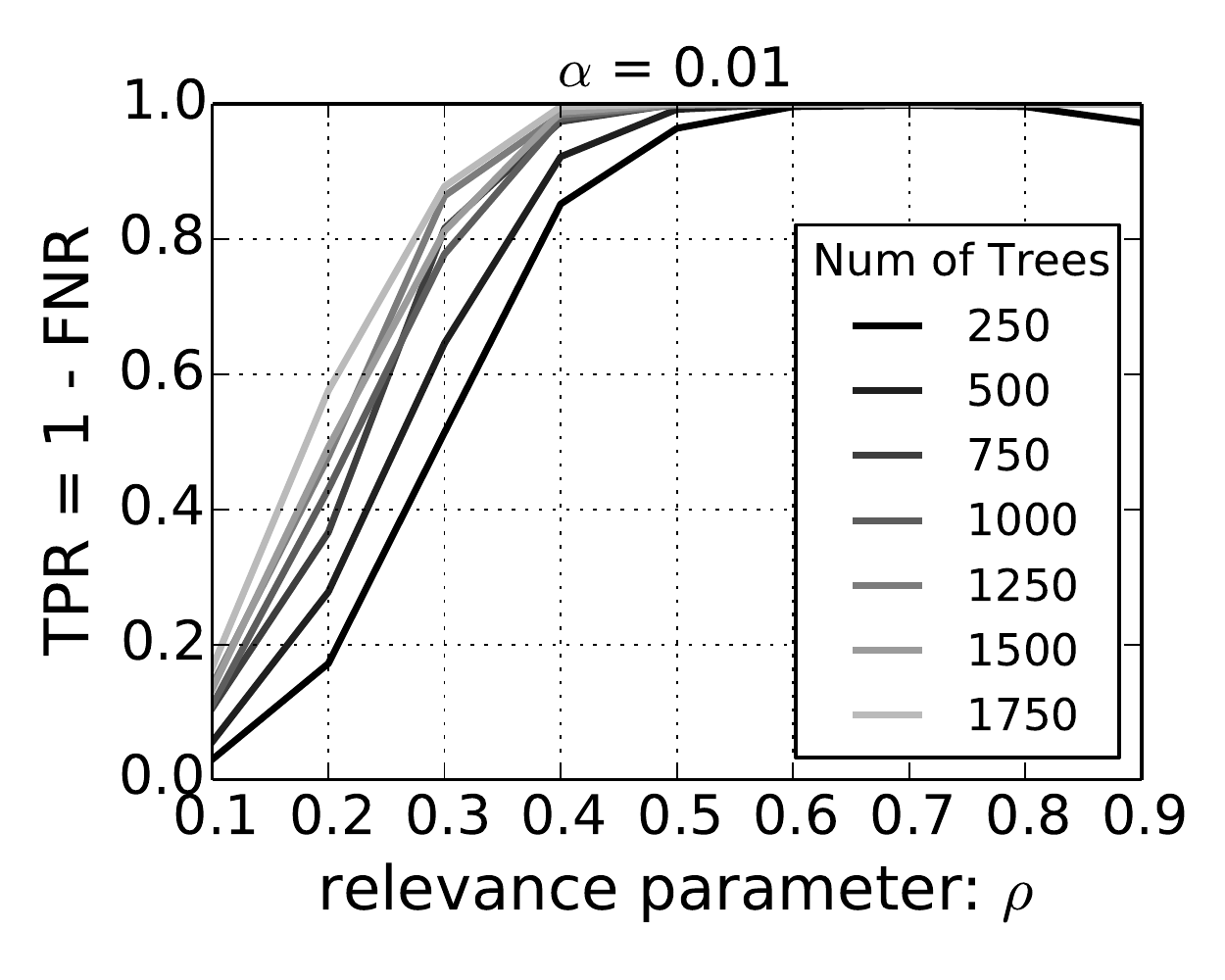} &
\includegraphics[width=0.21\linewidth]{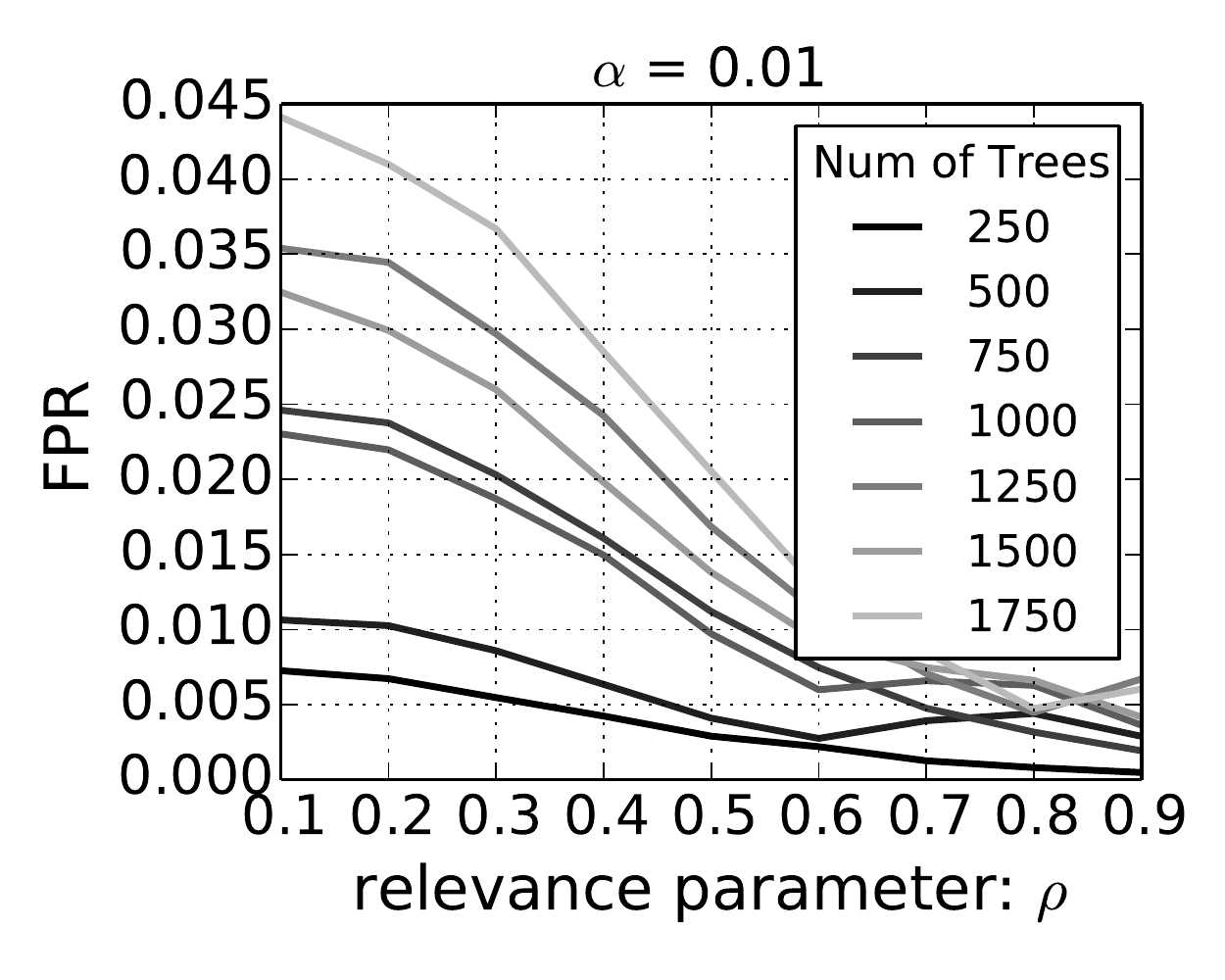} &
\includegraphics[width=0.21\linewidth]{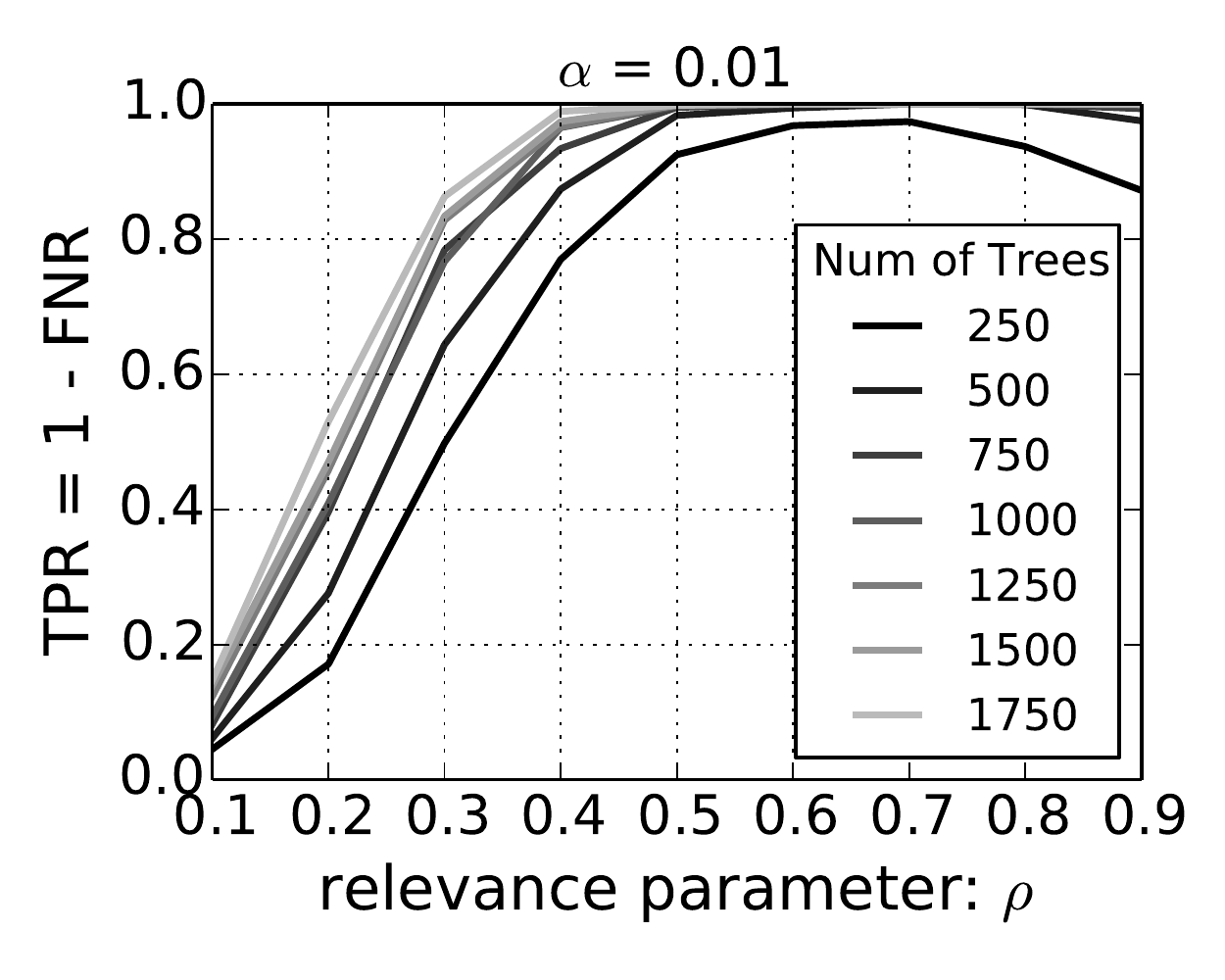} &
\includegraphics[width=0.21\linewidth]{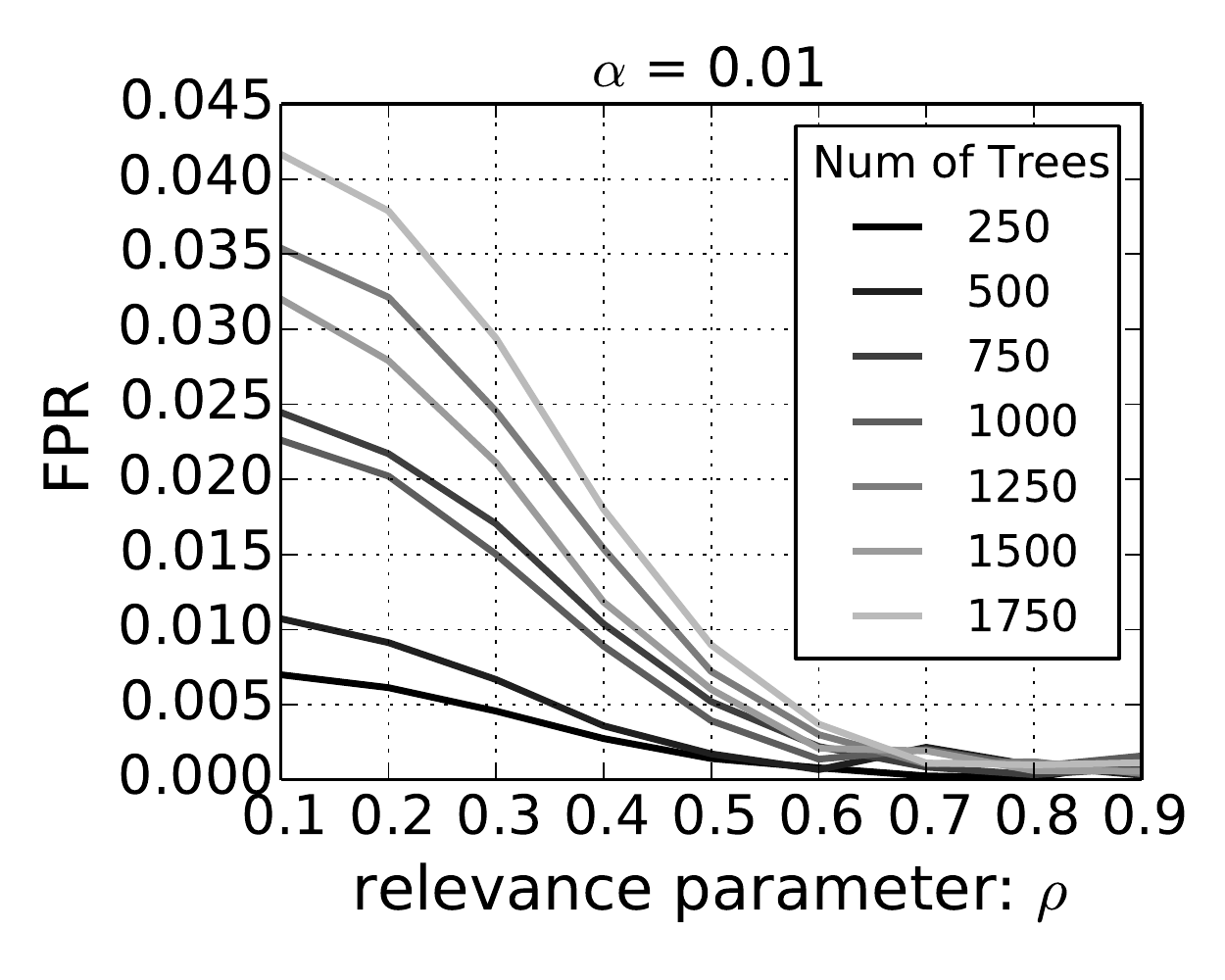} \\
&
\includegraphics[width=0.21\linewidth]{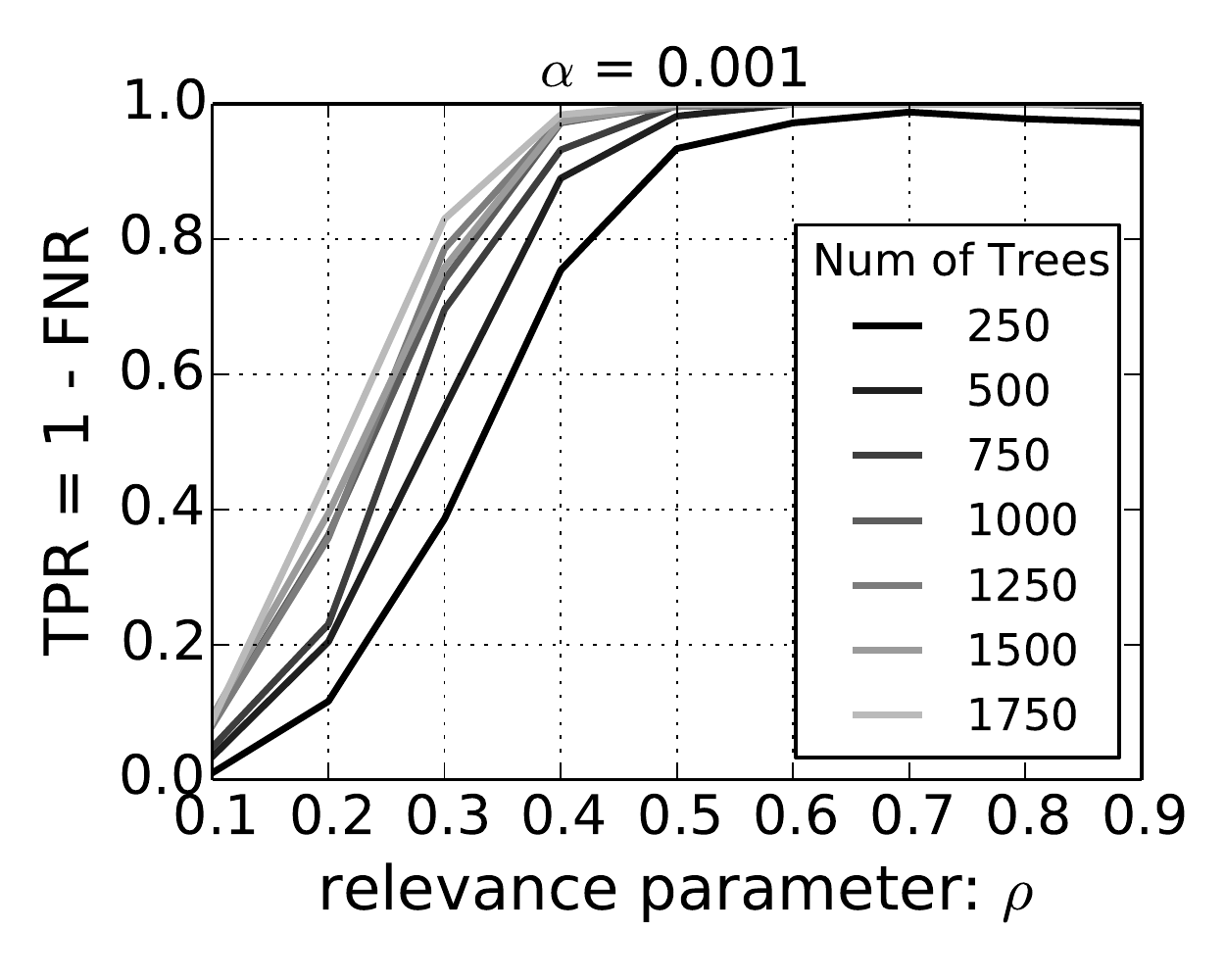} &
\includegraphics[width=0.21\linewidth]{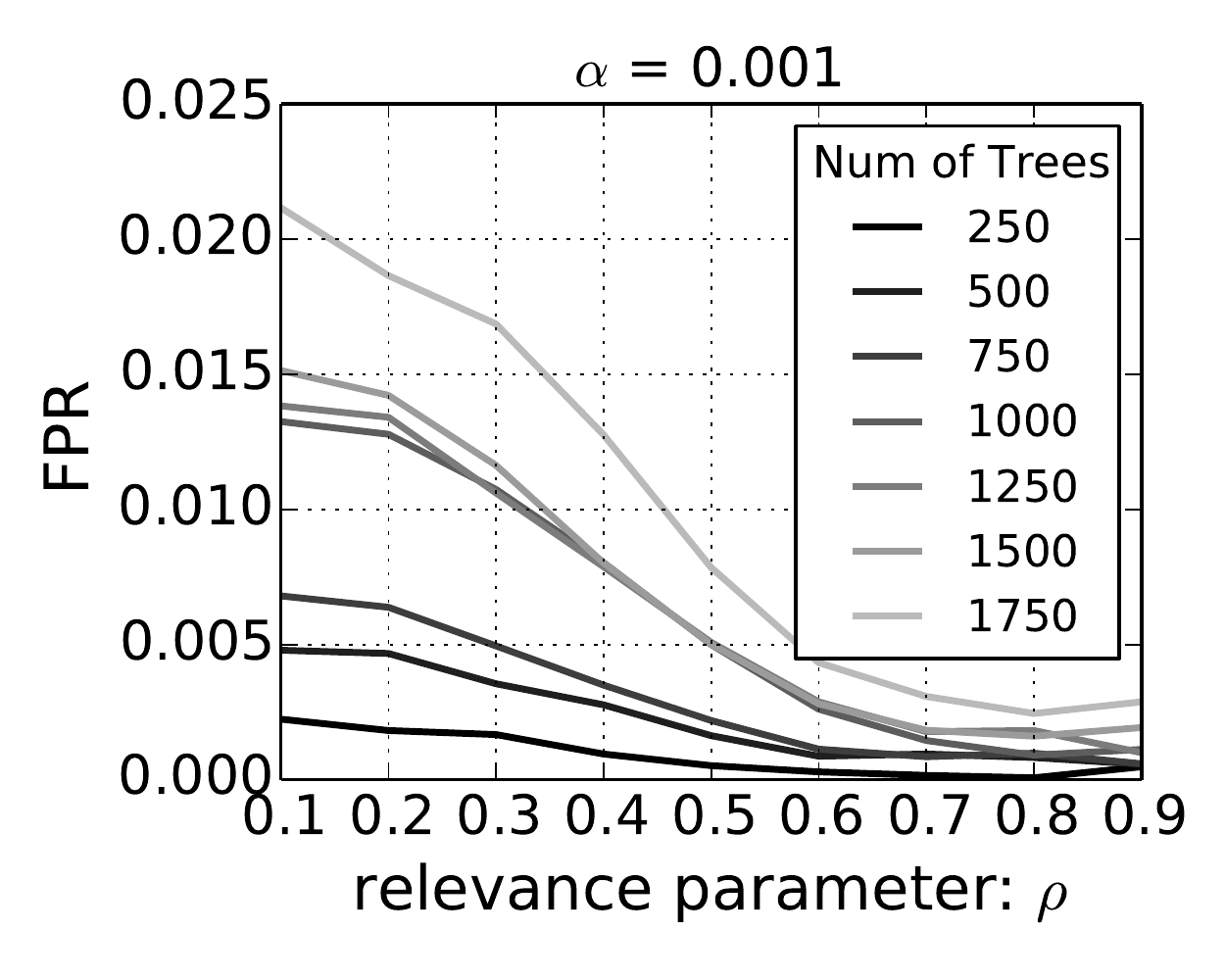} &
\includegraphics[width=0.21\linewidth]{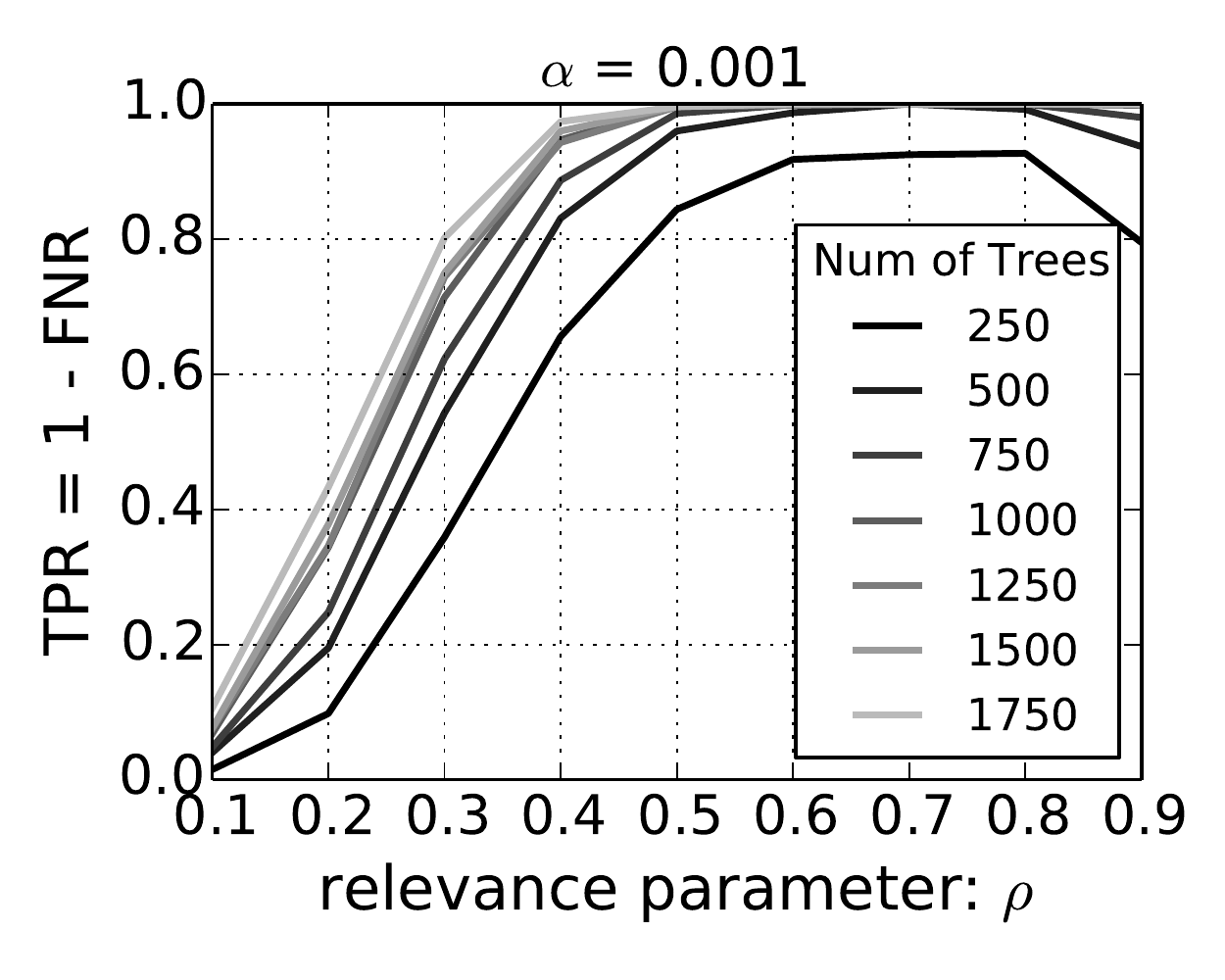} &
\includegraphics[width=0.21\linewidth]{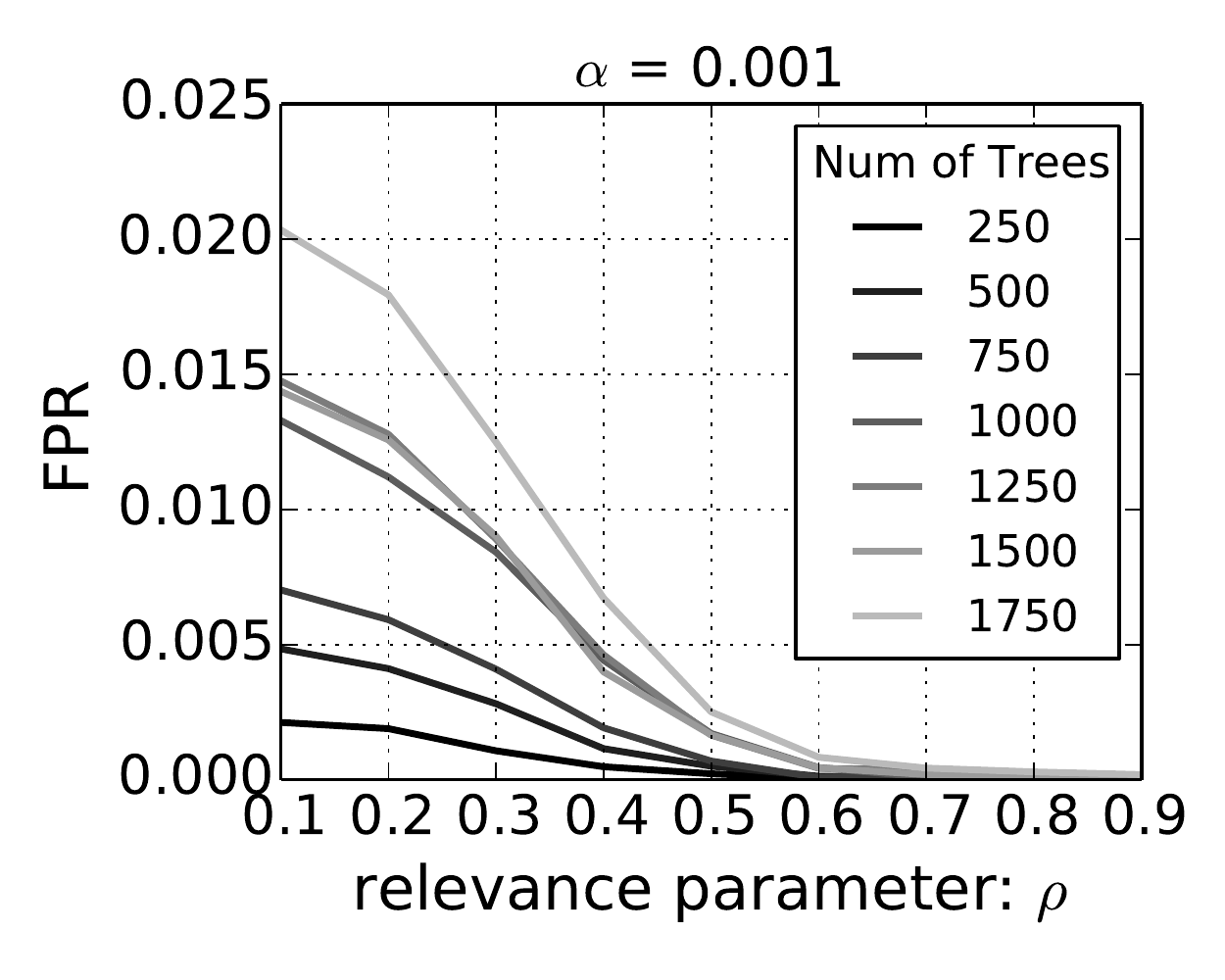} \\
\hline
\end{tabular}
\caption{\label{tab:power_analysis}\small Power Analysis for fixed sample size, feature dimension and size of the random feature subset: $(S,F,F_n) = (150,5000,250)$. Each quadrant shows graphs plotting True Positive Rates (1 - False Negative Rate) and False Positive Rates for varying relevance strength parameter $\rho$, number of trees and two different user set desired false positive rate limit $\alpha=(0.01,0.001)$. Different quadrants show the same plots for two different number of relevant features $N=(25,50)$ and different strategies.}
\end{table}

\noindent {\em Observations on true positive rates}: First of all, we observe that as expected an increase in relevant parameters results in an increase in true positive rates. Secondly, true positive rates increase with increasing number of trees. This behavior is not surprising since adding more trees will give the relevant parameters that have weaker empirical correlations with the label a higher chance to get selected. Moreover, the graphs show a convergence behavior where the contribution of $250$ additional trees is getting smaller as the number of trees increases. So, the power does not increase boundlessly. Third, increasing the number of relevant features causes true positive rates to decrease slightly due to the competition between relevant features.  Lastly, lower $\alpha$ yields a more conservative threshold which results in lower true positive rates. 

\noindent {\em Observations on the false positive rates}: First we observe that an increase in the relevance strength parameter results in a decrease in false positive rates. As the correlation strength between the relevant features and the label increases the chance of a nonrelevant feature getting selected at a node decreases. Furthermore, since for larger $N$ there will be more relevant features in random feature subsets, we expect that an increase in $\rho$ should cause a sharper decrease in the false positive rate for larger $N$. This is precisely what we observe in the graphs. Secondly, increasing number of trees result in increased false positive rates. This is not surprising because with an increasing number of trees a nonrelevant feature that has a weaker spurious statistical relation to the label will have a higher chance to get selected. 

The results in Table~\ref{tab:power_analysis} suggest that there is a trade-off between detecting more relevant features and higher false positive rates in choosing the number of trees in a forest. We believe this trade-off is an inherent characteristic of random forests. 
\paragraph{Comparison with Permutation Testing.} The last analysis we present in this section is a comparative study between the proposed method for determining a threshold and permutation testing as described in~\cite{Rodenburg2008} and~\cite{Altmann2010}. This permutation testing procedure aims to determine the null-distribution of a feature importance measure by permuting the labels of the samples and recomputing the feature importances. Then based on the importance measures computed in the permuted experiments one can compute a statistical significance of the original importance measure of a given feature as the portion of permutation experiments that yield a higher importance measure than the original one. Based on this significance one can split features as relevant and non-relevant by setting a desired significance rate. In this experiment, we applied this procedure to the selection frequency and compared the false positive and negative rates with those obtained using the proposed method. 

We experimented with different number of samples, feature dimensions and proportion of relevant features: $S = [100,250]$, $F = [500,2000,5000]$ and $N=[0, 2\%]$, and we fixed $F_n=F / 20$, $T = F / 10$ and the bagging ratio to $1/2$. For each parameter setting we first trained a forest and then for a desired false positive rate $\alpha$ we determined the model-based threshold and marked features with lower selection frequency as non-relevant. We also computed the permutation-based significance for each feature using $250$ permutations and marked features that had higher permutation-based p-value than $\alpha$ as non-relevant. For both methods, we computed false positive and negative rates for $\alpha=0.05$ and $0.01$, for which $250$ permutations are adequate. We repeated the same experiment 20 times and report the average values in Tables~\ref{tab:comparison_st1} and~\ref{tab:comparison_st2} for training strategies I and II respectively. 

\begin{table}[!htb]
\begin{centering}
\small
\begin{tabular}{|c|c|c|c|c|c|c|c|c|}
\hline
\multirow{2}{*}{$(S,F)$} & \multicolumn{4}{|c|}{Permutation Testing} & \multicolumn{4}{|c|}{Selection Frequency Model}\\
\cline{2-9}
& \multicolumn{2}{|c|}{$N=0$ - FPR} & \multicolumn{2}{|c|}{$N=2\%$ - FPR/FNR} & \multicolumn{2}{|c|}{$N=0$ - FPR} & \multicolumn{2}{|c|}{$N=\%$ - FPR/FNR}\\
\hline
$\alpha$ & 0.05 & 0.01 & 0.05 & 0.01 & 0.05 & 0.01 & 0.05 & 0.01 \\
\hline
(100, 500)  & 0.029 & 0.005 & 0.026/0.520 & 0.005/0.750 & 0.041 & 0.007 & 0.035/0.430 & 0.006/0.690\\
\hline
(100, 2000) & 0.033 & 0.007 & 0.015/0.133 & 0.002/0.308 & 0.047 & 0.013 & 0.022/0.088 & 0.004/0.205\\
\hline
(100, 5000) & 0.034 & 0.007 & 0.008/0.147 & 0.001/0.275 & 0.048 & 0.015 & 0.014/0.111 & 0.002/0.194\\
\hline
(250, 500)  & 0.030 & 0.008 & 0.020/0.340 & 0.005/0.500 & 0.028 & 0.009 & 0.018/0.350 & 0.005/0.430\\
\hline
(250, 2000) & 0.029 & 0.008 & 0.006/0.013 & 0.001/0.058 & 0.025 & 0.010 & 0.004/0.013 & 0.001/0.038\\
\hline
(250, 5000) & 0.032 & 0.008 & 0.002/0.012 & 0.000/0.029 & 0.027 & 0.010 & 0.007/0.006 & 0.001/0.012\\
\hline
\end{tabular}
\caption{\label{tab:comparison_st1}\small Comparison between the proposed selection frequency model and permutation testing for strategy I. The table provides false positive and negative rates for both the proposed model and permutation testing for different parameter settings and desired false positive limits (or permutation-based p-value in the case of permutation testing).}
\end{centering}
\end{table}

The permutation testing is able to limit the false positive rate at the desired level for all the experiments. The proposed model on the other hand, for $N=0$ and $\alpha=0.01$ results in slightly higher false positive rates than the desired level. On the other hand, for $N=2\%$ we observe that the proposed model limits the false positive rates at the desired level and yields lower false negative rates than the permutation testing. In other words it shows higher statistical power. 

In terms of computation times, the permutation testing is a computationally expensive approach. For the parameter setting ($(S,F)=(250,5000)$) permutations took 22 hours on an Intel Xeon 5472 3.0GHz CPU with 7GB of RAM and a forest code that is written in C++ (not optimized though). Although the permutation testing can be massively parallelized, the computational load prohibits its extensive use, especially in wrapper algorithms, such as backward or forward selection. The computation times for running the proposed selection frequency model is on the order of a second, which is almost negligible compared to the cost of training. Therefore, the proposed model can easily be integrated in wrapper algorithms. 

\begin{table}[!htb]
\begin{centering}
\small
\begin{tabular}{|c|c|c|c|c|c|c|c|c|}
\hline
\multirow{2}{*}{$(S,F)$} & \multicolumn{4}{|c|}{Permutation Testing} & \multicolumn{4}{|c|}{Selection Frequency Model}\\
\cline{2-9}
& \multicolumn{2}{|c|}{$N=0$ - FPR} & \multicolumn{2}{|c|}{$N=2\%$ - FPR/FNR} & \multicolumn{2}{|c|}{$N=0$ - FPR} & \multicolumn{2}{|c|}{$N=2\%$ - FPR/FNR}\\
\hline
$\alpha$ & 0.05 & 0.01 & 0.05 & 0.01 & 0.05 & 0.01 & 0.05 & 0.01 \\
\hline
(100, 500)  & 0.031 & 0.006 & 0.021/0.250 & 0.003/0.460 & 0.040 & 0.008 & 0.028/0.200 & 0.005/0.380\\
\hline
(100, 2000) & 0.033 & 0.008 & 0.011/0.098 & 0.001/0.238 & 0.048 & 0.014 & 0.016/0.073 & 0.002/0.145\\
\hline
(100, 5000) & 0.035 & 0.008 & 0.006/0.150 & 0.000/0.296 & 0.048 & 0.016 & 0.010/0.117 & 0.001/0.202\\
\hline
(250, 500)  & 0.028 & 0.008 & 0.011/0.000 & 0.002/0.020 & 0.019 & 0.006 & 0.008/0.000 & 0.002/0.010\\
\hline
(250, 2000) & 0.029 & 0.008 & 0.006/0.013 & 0.001/0.058 & 0.025 & 0.010 & 0.004/0.013 & 0.001/0.038\\
\hline
(250, 5000) & 0.032 & 0.009 & 0.000/0.017 & 0.000/0.053 & 0.027 & 0.013 & 0.002/0.008 & 0.000/0.019\\
\hline
\end{tabular}
\caption{\label{tab:comparison_st2}\small Same comparisons as in Table~\ref{tab:comparison_st1} but for training strategy II.}
\end{centering}
\end{table}
\subsection{Correlated Case: Cortical Thickness Maps}
In real world problems features are rarely independent from each other. Depending on the type of data there is often a complex dependency structure. Modeling the effects of a dependency structure on the false positive rates is not a trivial task. As often done in the literature, in our modeling approach we also have not explicitly tackled this task. In this section we present experiments that empirically analyze the use of the proposed method in the presence of a complex correlation structure between features. 

We experiment with {\em cortical thickness measurements} derived from brain magnetic resonance images (MRI), a very widely used type of data in neuroimaging and neuroscience. Thickness of human cortex can be measured from brain MRI using computational tools such as the Freesurfer software suite~\cite{dale1999cortical,fischl1999cortical,fischl2000measuring}. Such software tools produce discretized surface mesh representations of the human cortex with a cortical thickness measurement associated at each vertex, see Figure~\ref{fig:thickness}(a) for an example. Multiple studies in neuroscience and neuroimaging use these measurements to understand how the human cortex is structured as well as to detect the anatomical markers of neuropathological diseases and psychiatric disorders. 

The vector of cortical thickness measurements, also called thickness maps, have a very complex correlation structure. They provide a very challenging problem for false positive rate control in feature selection. We evaluate the proposed method using these maps. In particular, we build synthetic cortical thickness maps using a covariance matrix that we estimate from the publicly available dataset ``Open Access Series of Imaging Studies (OASIS)''~\cite{marcus2007open}. As such we construct datasets that have very similar correlation structures to real data and that also have ground truth, which allows for quantative analysis. In the next part we detail our data generation model then present results. 
\subsubsection{Data Generation Model}
\begin{figure}[!htb]
\centering
\small
\subfigure[Example Real Map]{\includegraphics[width=0.25\linewidth]{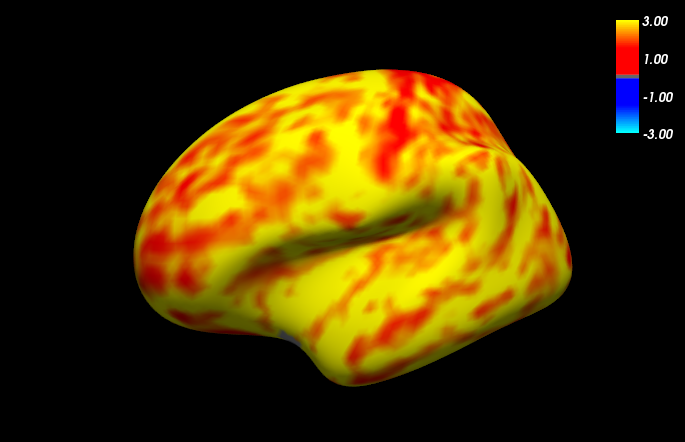}}
\subfigure[Example Synthetic Map I]{\includegraphics[width=0.25\linewidth]{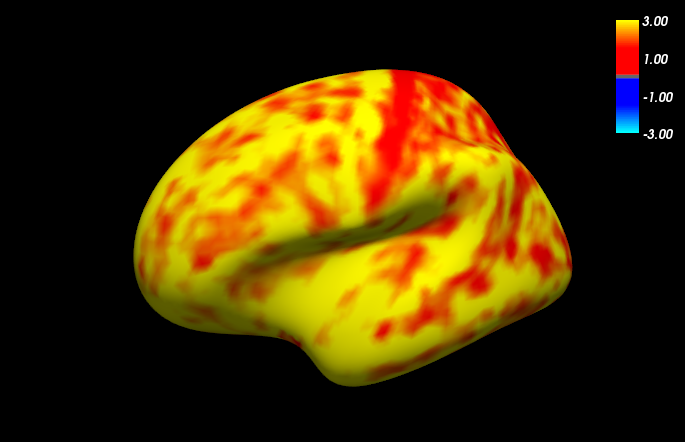}}
\subfigure[Example Synthetic Map II]{\includegraphics[width=0.25\linewidth]{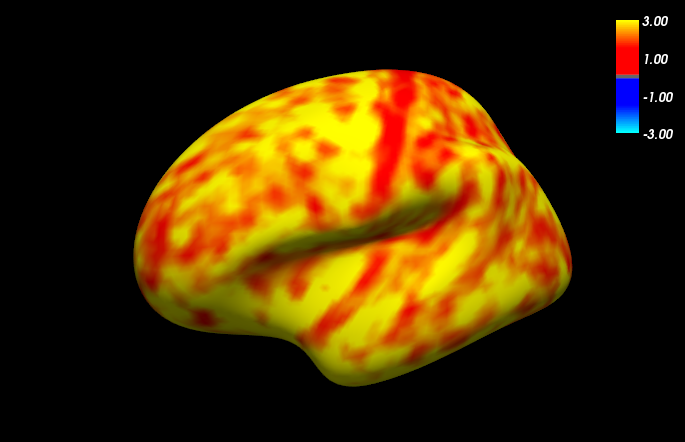}}
\subfigure[Mean Thickness]{\includegraphics[width=0.25\linewidth]{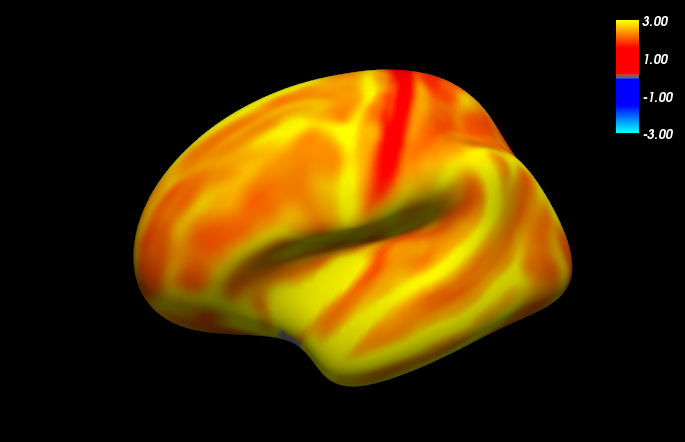}}
\subfigure[$N=0.1\%$]{\includegraphics[width=0.25\linewidth]{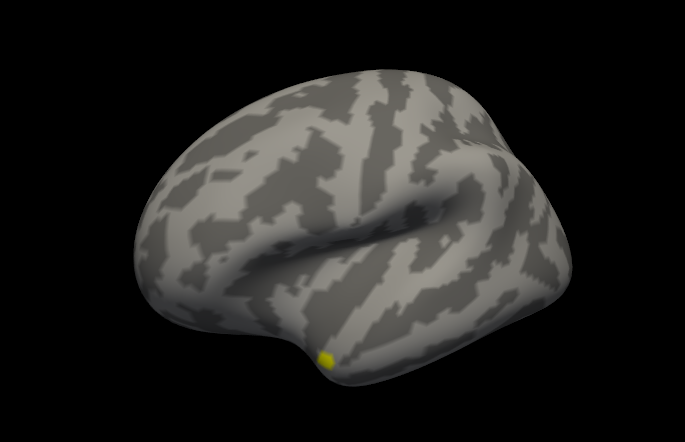}}
\subfigure[$N=0.2\%$]{\includegraphics[width=0.25\linewidth]{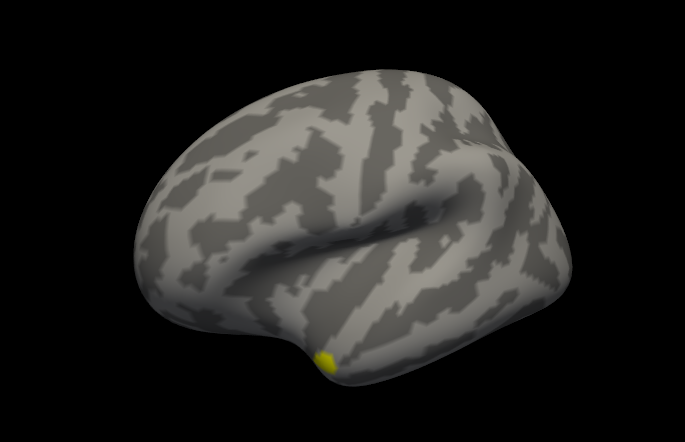}}
\subfigure[$N=0.5\%$]{\includegraphics[width=0.25\linewidth]{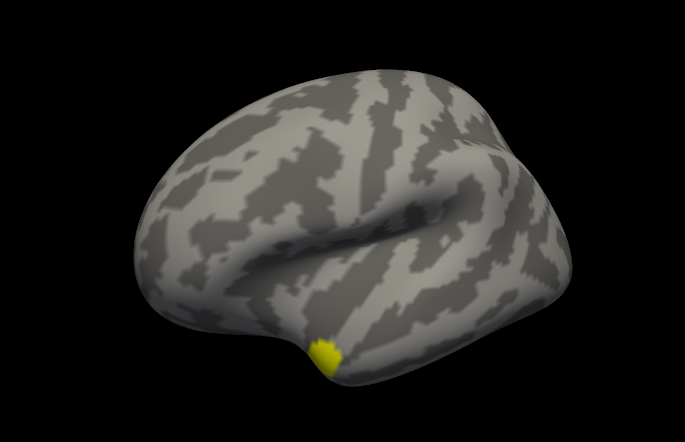}}
\subfigure[$N=1.0\%$]{\includegraphics[width=0.25\linewidth]{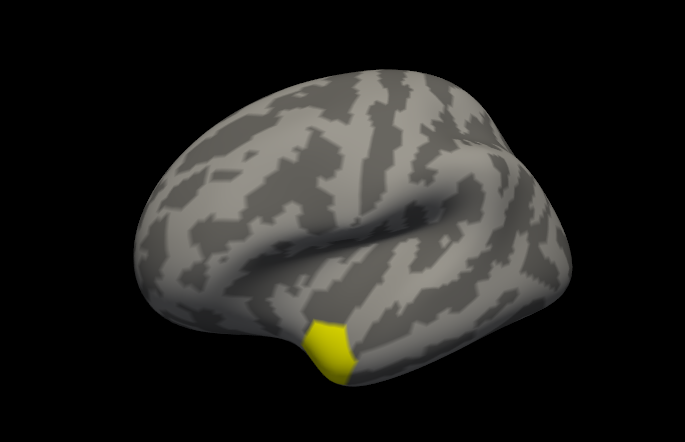}}
\subfigure[$N=2.0\%$]{\includegraphics[width=0.25\linewidth]{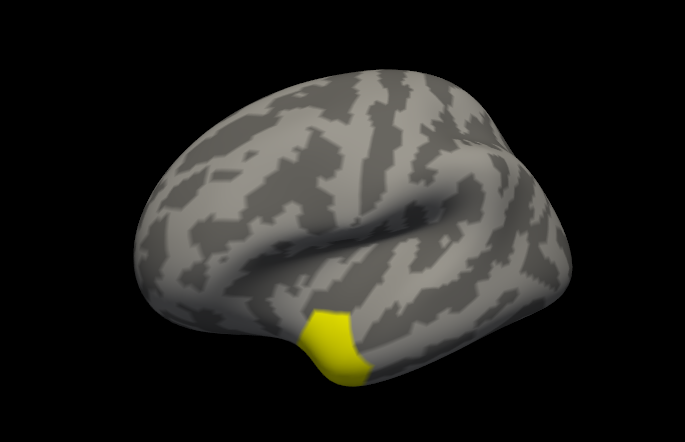}}
\caption{\label{fig:thickness} Data for cortical thickness experiments. (a-c) A real cortical thickness map extracted from a healthy individual and two synthetically generated maps. (d) Mean cortical thickness map of 315 individuals. (e-i) Relevant features in yellow (local atrophy) for varying sizes. }
\end{figure}
First, we define the random variable that corresponds to thickness maps. The general idea is to estimate the mean and the covariance matrix from an existing dataset and then define a random vector coming from a normal distribution with the estimated mean and covariance. We first extract cortical thickness maps discretized at 10242 vertices ($F=10242$) from the OASIS dataset\footnote{We refer the reader to Appendix~\ref{FREESURFER} and the references therein for further details on cortical thickness map extraction.}, which has images from 315 subjects in total aged between 18 and 96, and denote the entire dataset with $\tilde{X}\in\mathbb{R}^{10242\times 315}$. Then we estimate the mean and decompose the demeaned dataset with singular value decomposition (SVD):
\begin{eqnarray}
\nonumber \mu_X = \frac{1}{315}\sum_{i=1}^{315}\tilde{X}_{\cdot,i},\ X_{\cdot,i} = \tilde{X}_{\cdot,i} - \mu_X\ i\in [1,315],\ X = U\Sigma V^T,
\end{eqnarray}
where $X_{\cdot,i}$ denotes the i$^{th}$ column of $X$. In terms of the SVD components the empirical covariance matrix for the dataset can be computed as 
\begin{equation}
\nonumber \tilde{C} = \frac{1}{S}XX^T = \frac{1}{S}U\Sigma V^TV\Sigma^TU^T = U\frac{\Sigma\Sigma^T}{S}U^T.
\end{equation}
Then we define the random cortical thickness map generator as
\begin{eqnarray}
\nonumber \mathbf{x} = U\Sigma \mathbf{v} + \mu_X,\ \mathbf{v} \propto \mathcal{N}(\mathbf{0}_S,\mathbf{I}_S / S),
\end{eqnarray}
were $\mathbf{0}_S$ and $\mathbf{I}_S$ are the zero vector of size $S$ and the identity matrix of size $S\times S$.  It is easy to verify that the random vector $x$ has the same mean and covariance matrix as the empirically estimated ones. This method for creating synthetic datasets produces very realistic examples. For instance in Figures~\ref{fig:thickness}(a), (b) and (c) we display a real cortical thickness map and two synthetically generated maps. The similarity between the real map in (a) and the synthetic maps in (b) and (c) are striking.

The previous steps generate synthetic maps that are very similar to real ones. Now, we define a relationship between labels and maps to fully formulate the data generation model. Most neuro-degenerative diseases manifest themselves as thinning of the cortex, which is also refered to as {\em atrophy}. Motivated by this we formulate the relationship between the label and the thickness map as local atrophy in a region denoted as a binary vector $\mathbf{r}$. In Figures~\ref{fig:thickness}(e-i) we show in yellow the different regions we use in our experiments, where elements of $\mathbf{r}$ correponding to the yellow regions are ones and the remaining are zeros. The percentages represent the ratio of the number of vertices in the regions to the entire set. For $N=0$ the size of the region is zero. Based on the regions the full data generation model is given as 
\begin{equation}
\mathbf{x} \triangleq U\Sigma\mathbf{v} + \mu_X - y\mathbf{r}\frac{2\rho\sqrt{\textrm{diag}(\tilde{C})}}{\sqrt{1-\rho^2}},
\end{equation}
where $\textrm{diag}(\tilde{C})$ is the vector composed of diagonal elements of $\tilde{C}$. This construction ensures that the Pearson's correlation coefficient between the label $y$ and any thickness measurement within the region is $\rho$ assuming that $y=0$ and $y=1$ are equally likely. 
\subsubsection{Controlling False Positive Rates} 
In this section we evaluate how well the proposed method performs in terms of false positive and negative rates. We experiment with different samples sizes $S = [100,150,\dots,500]$ and different numbers of relevant features, in other words different sizes for the local atrophy region. For each parameter setting and training strategy we train a forest and determine the selection frequency threshold using Equation~\ref{eqn:thresholdII} for the desired rates $\alpha=0.01$ and $0.001$. We then compute the associated false positive and negative rates. We repeat the experiments 20 times for each parameter setting and report the average results in Table~\ref{tab:correlated}. 
\begin{table}[!htb]
\centering
\begin{tabular}{|c|c|c|}
\hline
\begin{sideways}Strategy I\end{sideways}&
\includegraphics[width=0.37\linewidth]{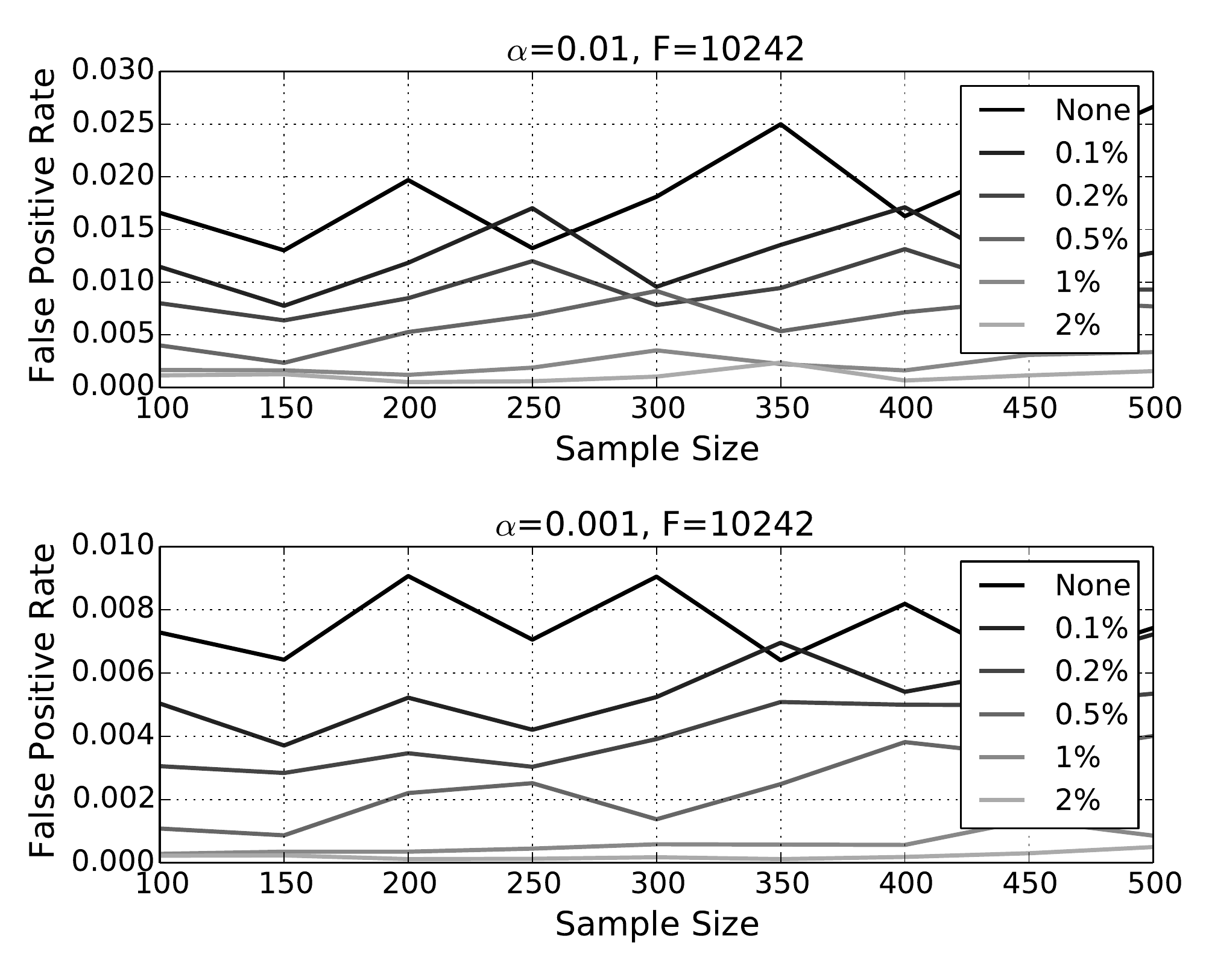}&
\includegraphics[width=0.37\linewidth]{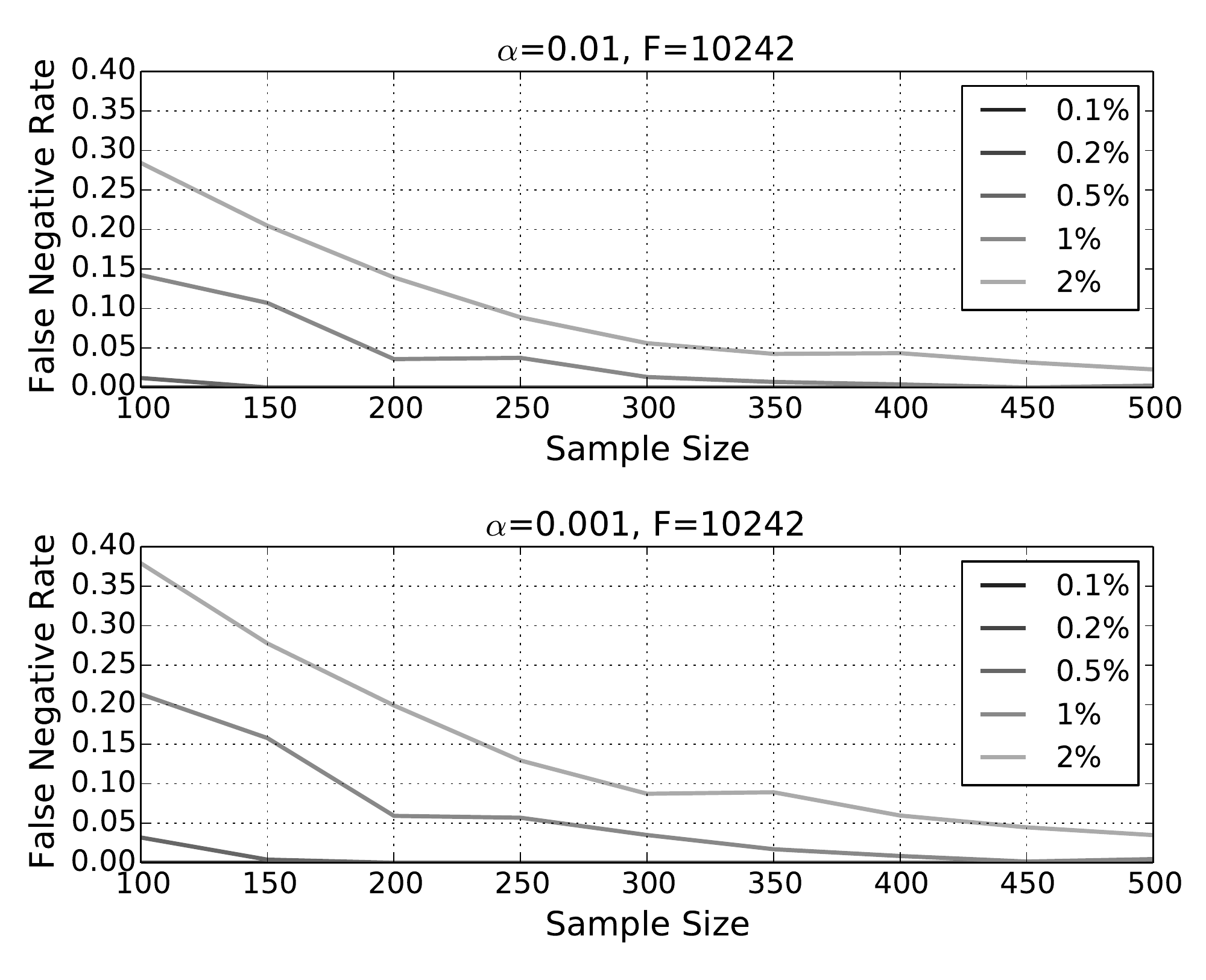}\\
\hline
\begin{sideways}Strategy II\end{sideways}&
\includegraphics[width=0.37\linewidth]{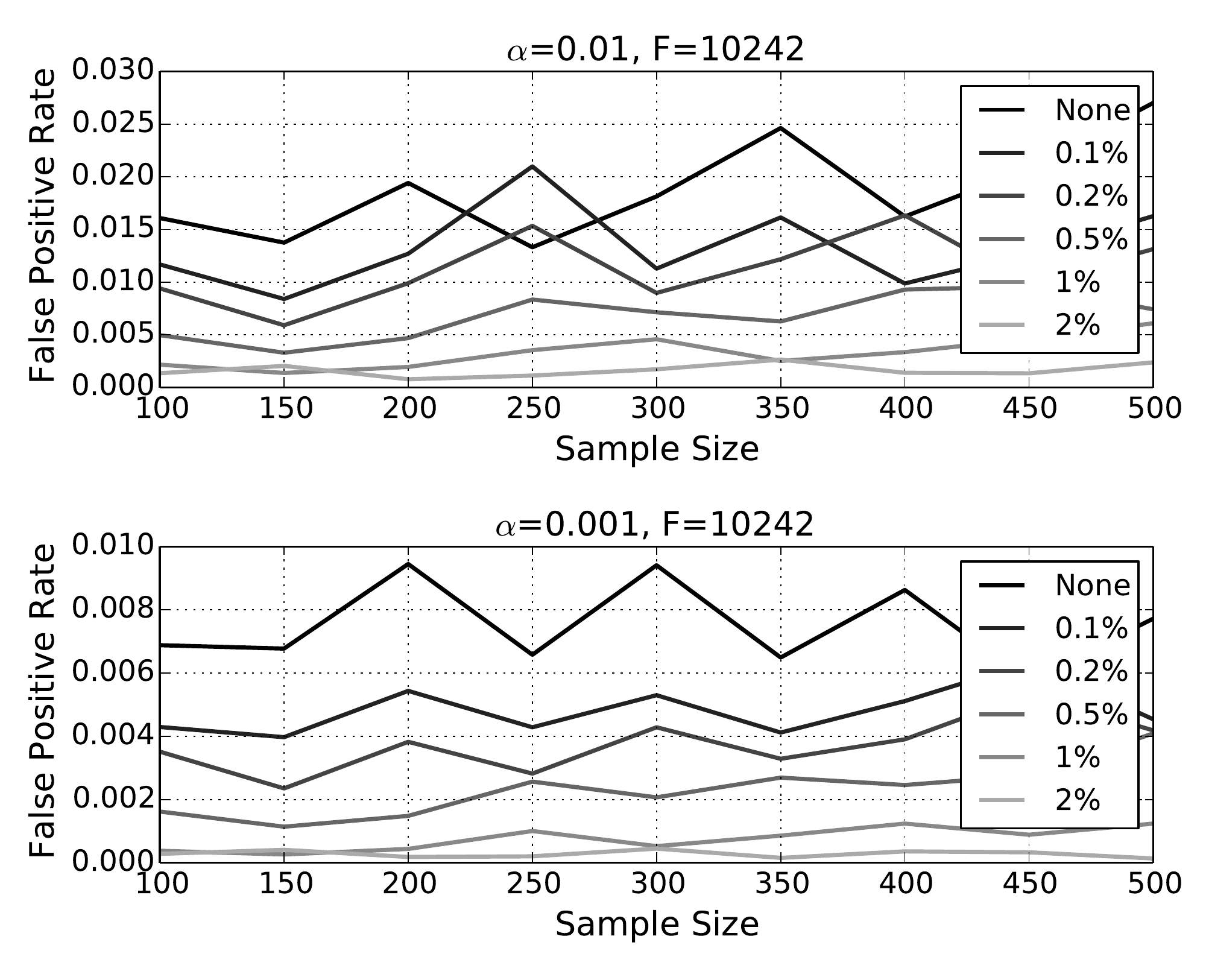}&
\includegraphics[width=0.37\linewidth]{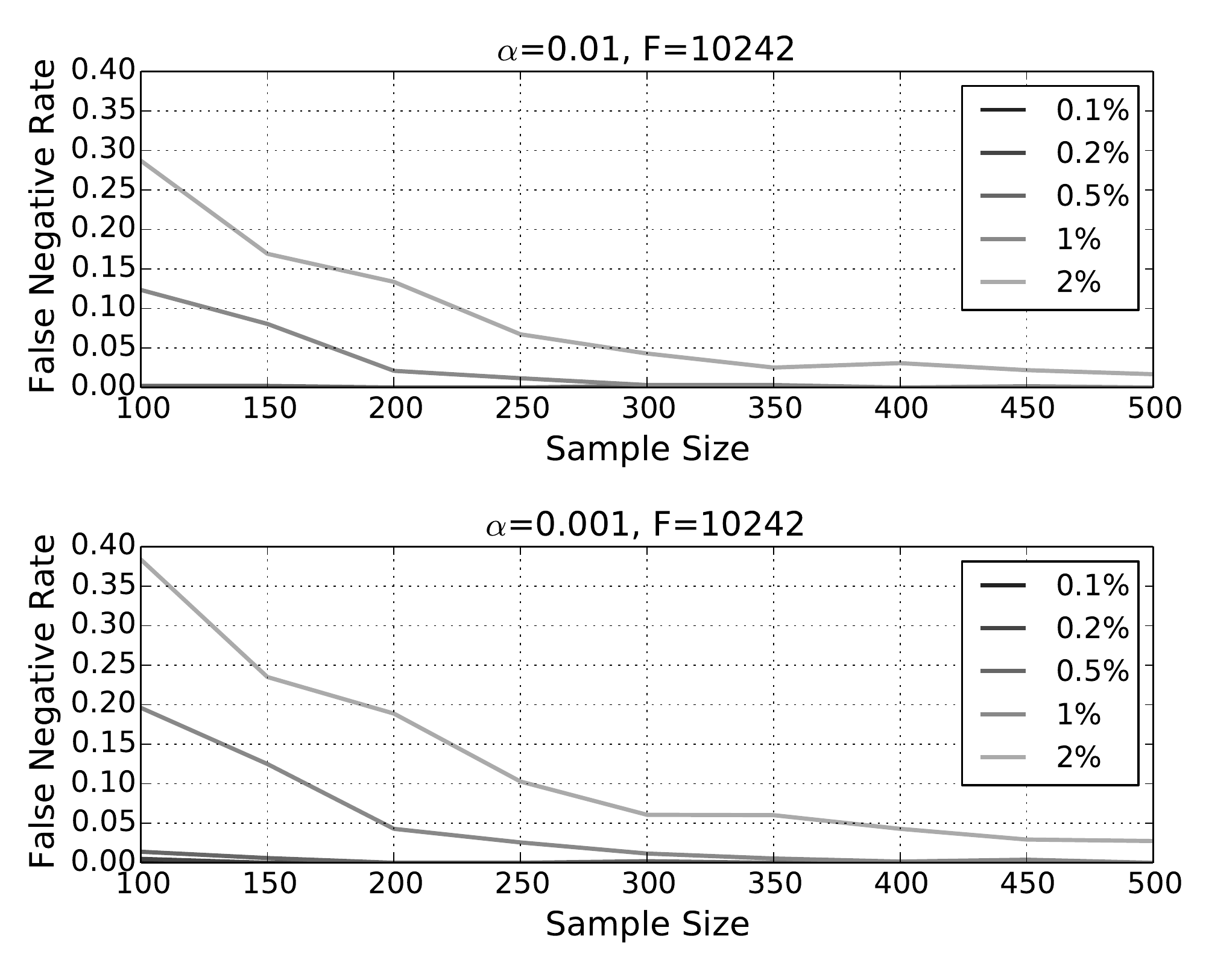}\\
\hline
\end{tabular}
\caption{\label{tab:correlated}\small False positive/negative rate analysis for the experiments with synthetic cortical thickness datasets. Each row shows graphs plotting false positive and negative rates for different sample sizes and different proportion of relevant features. }
\end{table}

The left column in Table~\ref{tab:correlated} shows the graphs plotting false positive rates. We observe that the behavior is very similar to the one we observed in the experiments with independent datasets. As the number of relevant features increase the false positive rates decrease and increasing sample size does not affect the false positive rates. For $\alpha=0.01$ we observe that except for the null case the false positive rates are around the desired rate. For $\alpha=0.001$ the rates are higher than the desired rate except for $N=1\%$ and $N=2\%$.

The right column shows the false negative rate plots for the experiments. We observe a sharp decrease with increasing sample size. We also see that the false negative rate increases as the number of relevant features increases, which is due to the competition between features as explained in the previous section. The false negative rates are particularly low when $N \leq 0.5\%$.

Overall, we see that the proposed method can be very useful in determining a selection frequency threshold that can split the relevant and non-relevant features even when there is a complex correlation structure between the features. Especially when there are actually relevant features, the threshold is able to limit the false positive rates on the order of the desired rate while keeping the false negative rates low.

\section{Discussions}
Experimental analysis demonstrated that the proposed method is able to provide thresholds that can limit the false positive rates on the order of the desired level, especially in the presence of relevant features, and keep false negative rates low for high-dimensional problems. How to get rid of the remaining false positives and improve upon the false negatives are the topics of this discussion section. 

The main source of higher false positive rates than the desired level is spurious statistics that violate the Assumption~\ref{assumption}. This is actually the case for any sort of feature selection method. As the dimension of the problem gets higher, in other words as the ratio $F/S$ increases, the risk of having stronger spurious statistical relationships between the label and some non-relevant features also increases. The remedy to tackle this issue is subsampling of the samples either with bagging or bootstrapping. In our experiments we have set the bagging ratio to $1/2$. Reducing this ratio even further improves on the problem of spurious statistics. However, in most problems there are not enough samples to have a bagging ratio lower than $1/2$. An alternative would be to use a wrapping algorithm based on bootstrapping such as the Stability Selection~\cite{meinshausen2010stability}.

The most prominent cause of false negative rates is the competition between relevant features. Each relevant feature will manifest a different empirical correlation with the label. Hence, their selection frequencies will be different. When a relevant feature ranks lower on the ranking of relevant features based on empirical correlations, the chances of it making it above the threshold will decrease. Wrapper algorithms, such as forward or backward selection methods, combined with bootstrapping schemes can remedy this issue. Alternatively, the recently proposed knock-out strategies~\cite{konukoglu2013feature}, which remove the most relevant features rather than removing the non-relevant ones, can also solve this issue. 

One of the biggest advantages of the proposed method is that it is straightforward to integrate it into a wrapping algorithm, such as the forward or backward feature selection, or a bootstrapping scheme such as Stability Selection~\cite{meinshausen2010stability}. The proposed method determines in a principled way a threshold on the selection frequency, so researchers using wrapper algorithms do not have to choose an arbitrary cut-off on the feature ordering. Moreover, unlike the permutation testing whose computation times make an integration difficult, the proposed method is very lightweight and its additional computational burden is almost negligible. 

An important application of Random Forests is localization and segmentation in computer vision and medical image analysis~\cite{Criminisi2012}. For such problems the trend is not to pre-compute features before training but rather sample the feature space at each node on the fly. This approach can be thought of as having a very large number of features and applying the training strategy I. Applying the proposed method on such problems directly is challenging due to the extremely high-dimensional nature of the data. However, in such cases one can use the spatial information to bin features coming from neighboring locations together. This would effectively reduce the feature dimension and the proposed approximate model can be applied with more confidence. 
\section{Conclusions}
This article proposed an approximate model for selection frequency in random forest. As shown, this model can be used to determine thresholds to split between relevant and non-relevant features in a principled way. The user sets a desired level of false positive rate and the model provides an optimal threshold on the selection frequency. 

The proposed model alleviates the need to use heuristics to set thresholds on feature rankings for random forest. It is computationally lightweight and therefore, can be easily integrated in any wrapper method. Furthermore, the models' good properties in terms of keeping the false negative rates low is a great advantage for such wrapper methods. 

Future research can focus on modeling the effects of bagging or bootstrapping of samples on the selection frequency model. These effects, even when approximately modeled, can provide a better approximation to the reality and therefore, provide a better way to determine thresholds and estimate the false positive rates. 
\section{Acknowledgments}
Authors would like to thank Ben Glocker, Darko Zikic and Tian Ge for providing valuable comments and proof-reading. 

This research was carried out in whole or in part at the Athinoula A. Martinos Center for Biomedical Imaging at the Massachusetts General Hospital, using resources provided by the Center for Functional Neuroimaging Technologies, P41EB015896, a P41 Biotechnology Resource Grant supported by the National Institute of Biomedical Imaging and Bioengineering (NIBIB), National Institutes of Health. This work also involved the use of instrumentation supported by the NIH Shared Instrumentation Grant Program and/or High-End Instrumentation Grant Program; specifically, grant number(s) S10RR023401, S10RR019307, S10RR019254 and S10RR023043. Melanie Ganz' research was also supported by the Alfred Benzon and the Lundbeck Foundation as well as the Carlsberg foundation. Since OASIS data was used we acknowledge the following grants: P50 AG05681, P01 AG03991, R01 AG021910, P50 MH071616, U24 RR021382, R01 MH56584. The research is also partially supported by R01EB006758, AG022381, 5R01AG008122-22, R01 AG016495-11, RC1 AT005728-01, R01 NS052585-01, 1R21NS072652-01, 1R01NS070963, R01NS083534 and by The Autism \& Dyslexia Project funded by the Ellison Medical Foundation, and by the NIH Blueprint for Neuroscience Research (5U01-MH093765), part of the multi-institutional Human Connectome Project.
\begin{appendices}
\section{FreeSurfer processing}\label{FREESURFER}
We processed the structural brain MRI (T1-weighted) scans with the FreeSurfer software suite (\url{https://freesurfer.nmr.mgh.harvard.edu}) \cite{fischl2012freesurfer} and computed subject-specific models of the cortical surface \cite{dale1999cortical,fischl1999cortical} as well as thickness measurements across the entire cortical mantle \cite{fischl2000measuring}. Subject-level thickness measurements were then transferred to a common coordinate system, via a surface-based nonlinear registration procedure \cite{fischl1999high}. For computational efficiency, we utilized the left-hemisphere of the {\em fsaverage5} representation, consisting of 10,242 vertices. We smoothed the cortical thickness maps with an approximate Gaussian kernel \cite{han2006reliability} with a full-width-half-maximum of $5 \; \mathrm{mm}$.

The extracted cortical thickness maps from the OASIS dataset as well as the software to generate synthetic data that is used in this article is available upon request. 
\end{appendices}
\bibliographystyle{plain}
\bibliography{article}
\end{document}